\documentclass{article}
\usepackage[preprint]{neurips_2026}

\usepackage{amsmath,amssymb,amsthm,mathtools,mathrsfs}
\usepackage{thm-restate}
\usepackage{microtype}

\usepackage{algorithm}
\usepackage{algpseudocode}
\usepackage{booktabs}
\usepackage{array}
\usepackage{arydshln}
\usepackage{wrapfig}
\usepackage{xcolor}

\usepackage[inline]{enumitem}
\usepackage[most]{tcolorbox}
\usepackage{tikz}
\usetikzlibrary{arrows.meta,positioning,calc,fit}

\usepackage{makecell}
\usepackage{multirow}
\usepackage{threeparttable}
\usepackage{pgfplots}
\usepackage{subcaption}
\usepackage{titletoc}
\pgfplotsset{compat=1.18}

\usepackage{amsthm}

\theoremstyle{definition}

\usepackage[utf8]{inputenc} %
\usepackage[T1]{fontenc}    %
\usepackage{hyperref}       %
\usepackage{url}            %
\usepackage{amsfonts}       %
\usepackage{nicefrac}       %
\usepackage{caption}
\newcommand{\metricci}[2]{\makecell[r]{#1\\[-1pt]{\scriptsize$\pm\,#2$}}}

\newcommand{\supp}{\mathrm{supp}}

\newtheorem{theorem}{Theorem}[section]

\newtheorem{lemma}[theorem]{Lemma}
\newtheorem{proposition}{Proposition}
\theoremstyle{remark}
\newtheorem*{remark}{Remark}

\definecolor{questioncolor}{HTML}{EAF0FB}   %
\definecolor{teachercolor}{HTML}{E8F5E9}    %
\definecolor{poecolor}{HTML}{FFF8E1}        %
\definecolor{proxycolor}{HTML}{FFEBEE}      %

\newtcolorbox{questionbox}{
  colback=questioncolor, colframe=questioncolor!60!black,
  fonttitle=\bfseries, title=Question,
  boxrule=0.4pt, arc=2pt, left=6pt, right=6pt, top=4pt, bottom=4pt
}

\newtcolorbox{tracebox}[1]{
  colback=white, colframe=black!40,
  fonttitle=\bfseries\small, title={#1},
  boxrule=0.4pt, arc=2pt, left=6pt, right=6pt, top=4pt, bottom=4pt
}

\newtcolorbox{teacherbox}[1]{
  colback=teachercolor, colframe=teachercolor!50!black,
  fonttitle=\bfseries\small, title={#1},
  boxrule=0.4pt, arc=2pt, left=6pt, right=6pt, top=4pt, bottom=4pt
}
\newtcolorbox{poebox}[1]{
  colback=poecolor, colframe=poecolor!50!black,
  fonttitle=\bfseries\small, title={#1},
  boxrule=0.4pt, arc=2pt, left=6pt, right=6pt, top=4pt, bottom=4pt
}
\newtcolorbox{proxybox}[1]{
  colback=proxycolor, colframe=proxycolor!50!black,
  fonttitle=\bfseries\small, title={#1},
  boxrule=0.4pt, arc=2pt, left=6pt, right=6pt, top=4pt, bottom=4pt
}

\author{%
  Youssef Allouah\thanks{Equal contribution. Alphabetical order. Work done  at the Simons Institute for the Theory of Computing, UC Berkeley.} \\
  Stanford University\\
  \texttt{yallouah@cs.stanford.edu} \\
  \And
  Mahdi Haghifam\footnotemark[1] \\
  Toyota Technological Institute at Chicago \\
  \texttt{mhaghifam@ttic.edu} \\
  \AND
  Sanmi Koyejo\thanks{Equal advising. Alphabetical order.} \\
    Stanford University \\
  \texttt{sanmi@cs.stanford.edu} \\
  \And
  Reza Shokri\footnotemark[2]  \\
  National University of Singapore\\
  \texttt{reza@comp.nus.edu.sg} \\
}

\title{The Distillation Game:\\
Adaptive Attacks \& Efficient Defenses}

\begin{document}
\maketitle

\vspace{-7mm}
\begin{abstract}
Distillation attacks create a deployment trade-off for model providers: the same outputs that make a model more useful can also make it easier to imitate. 
We study this trade-off through a minimax game between a utility-constrained teacher and an adaptive student. Our framework yields tractable one-sided response rules: an adaptive evaluation rule in which the student reweights high-value examples, and a teacher-side defense template that suppresses outputs most useful for distillation.
From a cheap proxy for example value, we derive Product-of-Experts (PoE), a simple forward-pass-only defense that combines the teacher with a proxy student during generation.
Empirically, adaptive evaluation reveals a large passive--adaptive gap: on state-of-the-art defenses, adaptive students recover substantially more capability than passive evaluation suggests on GSM8K and MATH. Under this stronger evaluation, the apparent robustness gap between expensive defenses and PoE narrows considerably, while PoE remains substantially cheaper and preserves higher-quality reasoning traces. Overall, our results suggest that strong distillation remains difficult to stop, and that progress on antidistillation should be judged against adaptive students rather than passive ones. 
Our code is available at: \url{https://github.com/ysfalh/distillation-game}.
\end{abstract}

\vspace{-2mm}
\section{Introduction}

As model providers expose richer outputs, they also expose more reusable training signal for distillation attacks. Any interface that reveals model capabilities---through answers, intermediate reasoning, tool-use sequences, error corrections, feedback on candidate solutions, or other multi-step interactions---can in principle be aggregated into a dataset for imitation. This creates a real deployment trade-off: the same richness that makes a system more useful, transparent, and interactive can also make its capabilities easier to distill. Reasoning traces are a particularly visible example because they expose intermediate structure explicitly, but the underlying issue is broader than chain-of-thought alone. This matters not only because distillation can copy capabilities that were costly to build, but also because it can accelerate the diffusion of safety-relevant capabilities~\citep{trockman2026antidistillation}. Scientifically, this makes distillation a useful lens for studying which exposed model behaviors contain reusable learning signal.

Recent work has begun to study defenses against distillation, a.k.a.\ \emph{antidistillation}, by modifying the outputs a teacher reveals, especially in reasoning-oriented settings~\citep{savani2025antidistillation,li2025doge,ding2025information,zheng2025model,ma2026protecting}. Across this literature, defense and evaluation are typically studied in isolation: a defense proposes a rule for modifying teacher outputs, 
and is then evaluated against a distiller that trains uniformly on the released data.
This separation leaves two questions unanswered. First, what kind of distiller should the defense be judged against? A realistic attacker need not train uniformly on released data; after observing defended outputs, it can filter, reweight, or concentrate on the examples that carry the greatest learning value. 
Without accounting for this adaptation, evaluations risk a cat-and-mouse cycle in which each new defense is tested only against the weak distiller it implicitly assumes~\citep{athalye2018obfuscated}. Second, different defenses encode different implicit theories of what makes an output useful to a distiller, but no common framework has been available to compare them or to ask which notion of usefulness a defense should target. We address both questions with a single minimax framework: its attacker-side best response gives a principled adaptive evaluation rule, and its defender-side best response unifies existing methods and enables designing new ones like our PoE defense, which is designed to be cheap and also preserves trace quality better.

Distillation attacks are not just a theoretical concern. Frontier model providers already limit the information exposed through reasoning-oriented interfaces, for example by hiding full chain-of-thought or replacing it with shorter summaries~\citep{google2025thinking,openai2025reasoning,anthropic2025thinking}. Public reporting also suggests that distillation attacks are a live operational issue rather than a hypothetical one~\citep{anthropic2026distillation}. 
But limiting explicit traces does not resolve the underlying problem, because the useful signal available to a distiller is not confined to verbatim chain-of-thought. In our experiments with commercial frontier-model summaries (Appendix~\ref{sec:add-expr} and Fig.~\ref{fig:frontier}), even passive students recover nontrivial capability, suggesting that summarization alone should not be assumed to eliminate distillation risk. More generally, once some useful signal is released, a realistic distiller may further filter or reweight the data it observes, motivating evaluation against adaptive students permitted by the threat model rather than only passive ones.

\vspace{-2mm}
\subsection{Contributions}
\vspace{-2mm}
Our main contribution is a game-theoretic framework for distillation attacks and defenses. We model the teacher as choosing a released model $\pi_{\mathrm{rel}}$ that remains close to a reference teacher $\pi_{\mathrm{ref}}$, while the student chooses an effective distribution $\pi_{\mathrm{eff}}$ that remains close to the released model but can adaptively concentrate on more useful outputs. Writing $x$ for a prompt and $y$ for a corresponding output, let $v(x,y)$ denote the value of that prompt-output pair for downstream distillation. This leads to the minimax objective:
\begin{equation}
\mathscr{V}(\varepsilon,\rho)
:=
\inf_{\pi_{\mathrm{rel}}\in \Pi_\varepsilon(\pi_{\mathrm{ref}})}
\sup_{\pi_{\mathrm{eff}}\in \Pi_\rho(\pi_{\mathrm{rel}})}
\mathbb{E}_{x\sim \mathscr{D},\; y\sim \pi_{\mathrm{eff}}(\cdot \mid x)}[v(x,y)].
\end{equation}
Above, $\mathscr{D}$ is the prompt distribution, $\Pi_\varepsilon(\pi_{\mathrm{ref}})$ is the set of released models within the teacher's fidelity budget $\varepsilon$, and $\Pi_\rho(\pi_{\mathrm{rel}})$ is the set of effective student distributions within the adaptation budget $\rho$; Section~\ref{sec:problem-statement} gives the precise KL-based definitions. This objective is central because, once $v$ and the budgets are specified, it yields explicit best responses for adaptive evaluation on the student side and a corresponding template for teacher-side defense.
Our contributions include:
\begin{itemize}
    \item \textbf{Game formulation.} We formulate distillation defense as a minimax game between a utility-constrained teacher and an adaptive student. This gives a common framework for specifying adaptive evaluation protocols and deriving teacher-side defense rules. By making the attacker and defender explicit in one tractable model, it provides a methodological foundation for designing and evaluating future distillation defenses against adaptive students.

    \item \textbf{Adaptive evaluation and attack.} We derive the student's best response in our framework, which concentrates on examples with higher learning value and yields a principled adaptive evaluation rule. Empirically, this exposes a large adaptive-passive gap on both GSM8K and MATH: for state-of-the-art defenses, adaptive evaluation increases downstream student accuracy by roughly $50\%$ relative to passive evaluation. 

    \item \textbf{Practical  defense.} We instantiate our teacher defense principle with a cheap proxy for learning value and derive Product-of-Experts (PoE), a gradient-free generation rule that combines the teacher with a proxy student.
    Under adaptive evaluation, the apparent robustness gap between stronger expensive defenses and PoE narrows, while PoE remains substantially cheaper to run; on GSM8K, for example, it incurs about $1.6\times$ teacher-time overhead rather than $2.9\times$ for the state-of-the-art. 
    Besides being efficient, this proxy induces a conservative defense bias: it suppresses outputs with large teacher-over-student likelihood gaps while staying close to outputs the teacher itself finds plausible.
    Indeed, our rubric-based LLM judge evaluation suggests that PoE produces higher-quality traces.
\end{itemize}

\begin{figure}
\vspace{-1cm}
      \centering
\includegraphics[width=0.78\textwidth,height=0.24\textheight]{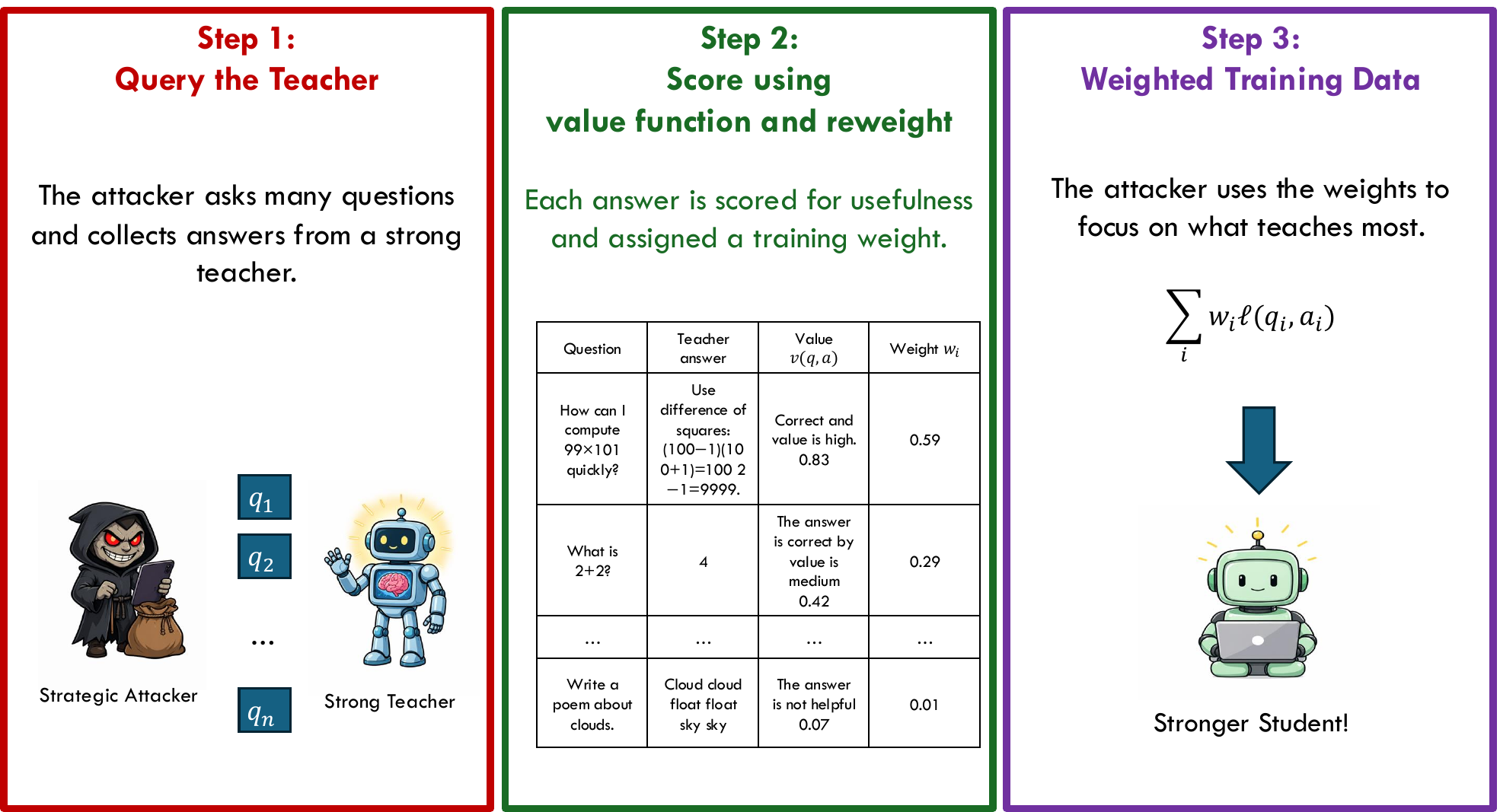}
    \caption{An adaptive attacker does not train uniformly on all teacher outputs; it estimates the usefulness of each queried sample and assigns larger training weight to higher-value responses.}
    \label{fig:strategic-cartoon}
    \vspace{-5mm}
\end{figure}

\vspace{-2mm}
\subsection{Related Work}
\vspace{-2mm}
\textbf{Defenses against distillation.}
Our work is closest to \citet{savani2025antidistillation}, who propose antidistillation sampling (ADS), an inference-time method that modifies the teacher's next-token distribution using finite-difference gradient estimates through a proxy model. Our framework provides a game-theoretic foundation for their approach: ADS arises as the teacher's best response in our minimax formulation under a specific approximation. Crucially, our adaptive student evaluation reveals that ADS is substantially less effective against an adaptive distiller than passive evaluation suggests. There is another line of work that aims not to suppress distillation directly, but to detect or attribute it after the fact \citep{xu2026antidistillation,kirchenbauer2023watermark}.
\citet{li2025doge} and \citet{fang2026towards} take a complementary approach: rather than modifying the sampling distribution at inference time, they fine-tune the teacher model itself to defend against distillation. DOGe~\citep{li2025doge} adversarially trains only the final linear layer to maximize KL divergence from a proxy student, while \citet{fang2026towards} learn a logit transformation matrix guided by a conditional mutual information objective. A practical advantage of inference-time methods such as ours is that the defense strength can be adjusted via a single parameter at decoding time, without retraining. Neither work evaluates against an adaptive distillation attack.

\textbf{Trace rewriting and inversion.}
Another line of work modifies the content of reasoning traces after generation to degrade their usefulness for distillation~\citep{ma2026protecting,ding2025information}. This approach is orthogonal to our decoding-time framework: one could first apply PoE sampling and then rewrite the resulting traces. Again, neither work evaluates against a student that reweights traces by learning value, which is the central threat our formulation addresses.
\citet{zhang2026steal} introduce trace inversion, where a separate model is trained to reconstruct detailed chain-of-thought from the summaries and answers exposed by a target model, and the resulting synthetic traces are used for student fine-tuning. Our adaptive student addresses the same goal-improving the effectiveness of SFT for the distiller, but through a far simpler mechanism: reweighting existing traces by their learning value rather than generating new ones.

\textbf{Decoding-time combination of language models.} Our PoE rule belongs to a broader family of decoding-time methods that combine language models at the logit level. DExperts \citep{liu2021dexperts} generates from a geometric combination of a base model with expert and anti-expert models to steer toward desired attributes, and contrastive decoding \cite{li2023contrastive} subtracts a small model's log-likelihood from a large one's to amplify the capability gap and improve generation quality. We derive our PoE rule from first principles as the teacher's best response in the distillation game under the likelihood-ratio value, and it adds rather than subtracts the proxy student's logits, since the teacher-student gap is precisely what the student stands to learn and the defense must therefore pull the released distribution toward the student rather than away from it.

\vspace{-2mm}
\section{Problem Statement: Distillation Game}
\label{sec:problem-statement}
\vspace{-2mm}

We now formalize the distillation game between a teacher and a student. We first define the models and their allowable deviations, then introduce a value function that measures how useful an output is for downstream distillation, and finally combine these ingredients into a minimax objective.

\textbf{Setup and models.}
Let $\mathscr{X}$ denote the space of contexts, such as user prompts, drawn from a distribution $\mathscr{D}$, and let $\mathscr{Y}$ denote the space of outputs, such as reasoning traces and final answers, and $\Delta(\mathscr{Y})$ the set of probability distributions over $\mathscr{Y}$. A language model is represented by a conditional distribution $\pi(\cdot \mid x) \in \Delta(\mathscr{Y})$ over outputs given a context $x \in \mathscr{X}$. We write $\mathscr{Z} = \mathscr{X} \times \mathscr{Y}$ and denote examples by $z=(x,y)\in \mathscr{Z}$.
Our main formulation involves three objects:
\begin{enumerate*}[label=(\roman*), itemjoin={{; }}, itemjoin*={{; and }}]
    \item \emph{Reference teacher} $\pi_{\mathrm{ref}}$: the undefended teacher model
    \item \emph{Released teacher} $\pi_{\mathrm{rel}}$: the defended model actually exposed to users
     \item \emph{effective student distribution} $\pi_{\mathrm{eff}}$: the distribution induced by the student's reweighting of released outputs.
\end{enumerate*}
Here $\pi_{\mathrm{eff}}$ is not the (final) student model $\pi_{\mathrm{stu}}$, but the effective training distribution seen by the student.

\vspace{-2mm}
\subsection{Teacher fidelity and student adaptation budgets}
\vspace{-1mm}

The released teacher should remain close to the reference teacher in order to preserve output quality. 
To formalize it, we use KL divergence to capture closeness as is customary in the alignment literature (see \cite{schulman2017proximal,rafailov2023direct}).
We then define the teacher fidelity set:
\begin{equation}
\Pi_\varepsilon(\pi_{\mathrm{ref}})
:=
\left\{
\pi  \in \Delta(\mathscr{Y})  :
\mathbb{E}_{x\sim \mathscr{D}}\!\left[
\mathrm{D}_{\mathrm{KL}}\!\bigl(\pi(\cdot\mid x)\,\|\,\pi_{\mathrm{ref}}(\cdot\mid x)\bigr)
\right]
\le \varepsilon
\right\}.
\end{equation}

The budget $\varepsilon \ge 0$ controls how much the released teacher may depart from the reference teacher. This direction of KL penalizes the released model for placing mass on outputs that the reference teacher considers unlikely, so it naturally captures a fidelity constraint.

A passive student trains uniformly on samples from the released teacher. A stronger student may instead filter, reweight, or subsample those samples, thereby inducing an effective training distribution. We model this by requiring this effective distribution to remain close to the released teacher:
\begin{equation}
\Pi_\rho(\pi_{\mathrm{rel}})
:=
\left\{
\pi \in \Delta(\mathscr{Y})  :
\mathbb{E}_{x\sim \mathscr{D}}\!\left[
\mathrm{D}_{\mathrm{KL}}\!\bigl(\pi(\cdot\mid x)\,\|\,\pi_{\mathrm{rel}}(\cdot\mid x)\bigr)
\right]
\le \rho
\right\}.
\end{equation}

Here $\rho \ge 0$ is the student adaptation budget. When $\rho = 0$, the student is passive and trains on the released distribution as given. Larger values of $\rho$ allow progressively more selective concentration on high-value outputs. The intuition for this restriction is that the attacker cannot invent an entirely new data source. Rather, we want to model a practical attacker that can only filter, reweight, or subsample outputs that were actually released. 
Indeed, the attacker chooses a change of measure that is absolutely continuous with respect to the released distribution.

\vspace{-1mm}
\subsection{Value function}
\vspace{-2mm}

The last ingredient is a scalar value function
$
v : \mathscr{X} \times \mathscr{Y} \to \mathbb{R}
$
that measures how useful a released example $(x,y)$ is for  distillation.
Intuitively, $v(x,y)$ is large if and only if training on $(x,y)$ helps the student more on the downstream task.
Our game formulation only
requires such a scalar notion of usefulness; it does not depend on any particular student
update rule or parameterization.

A natural first-order proxy is obtained by asking how upweighting an example changes a
downstream objective. Let $\theta$ denote the student parameters, initialized at $\theta_0$,
and let $\mathscr{L}(\theta)$ be the downstream task loss at $\theta$. If we locally approximate student training by a
gradient step on the proxy student's negative log-likelihood, then the value of $(x,y)$ is
\begin{equation}
v_{\mathrm{grad}}(x,y)
:=
-\left\langle
\nabla_\theta \mathscr{L}(\theta_0),
\nabla_\theta \log {\pi}_{\mathrm{stu}}(y \mid x;\theta)\big|_{\theta=\theta_0}
\right\rangle.
\end{equation}

We use $v_{\mathrm{grad}}$ here only as an illustrative instance. The game and best-response
analysis in the next section apply to any choice of $v$. Later, in Section~\ref{sec:poe}, we
introduce a cheaper proxy value based on the log-likelihood ratio, which leads to a practical defense.

\begin{remark}[Proxy students]
In practice, a defender need not know the attacker's exact model. It can optimize against a proxy model
$\hat{\pi}_{\mathsf{stu}}$, typically chosen to be substantially cheaper than the
teacher, with the goal of approximating which outputs are highly informative for distillation. This proxy-based
approximation is a common design pattern in prior work on distillation defenses~\citep{savani2025antidistillation,li2025doge}, and is also used in our experiments.
\end{remark}

\vspace{-2mm}
\subsection{The distillation game}
\vspace{-2mm}

We now combine the previous ingredients into a single minimax objective. Given a value function $v$, the teacher chooses a released model within its fidelity budget, while the student chooses an effective training distribution within its adaptation budget:
\begin{equation} \label{eq:game}
\mathscr{V}(\varepsilon, \rho)
:=
\inf_{\pi_{\mathrm{rel}} \in \Pi_\varepsilon(\pi_{\mathrm{ref}})}
\sup_{\pi_{\mathrm{eff}} \in \Pi_\rho(\pi_{\mathrm{rel}})}
\mathbb{E}_{x \sim \mathscr{D},\; y \sim \pi_{\mathrm{eff}}(\cdot \mid x)}
\bigl[v(x,y)\bigr].
\end{equation}

The inner maximization identifies the strongest student response to a fixed released model: among all effective distributions that remain close to $\pi_{\mathrm{rel}}$, the student chooses the one with largest expected value. The outer minimization identifies the teacher's best defense: among all released models that remain close to the reference teacher, it chooses the one that minimizes the student's attainable value.

This formulation separates the two design choices in the problem. The budgets $(\varepsilon,\rho)$ specify the threat model, while the value function $v$ specifies what makes an output useful for distillation. In the next section, we show that for fixed $v$, both the student and teacher best responses admit simple closed forms. Different choices of $v$ then lead to different attack and defense rules.

\vspace{-5mm}
\begin{algorithm}[t]
\caption{Adaptive distillation  attack with gradient-based value function}
\label{alg:adaptive-student}
\begin{algorithmic}[1]
\Require Traces $\mathscr{T} = \{(x^{(i)}, y^{(i)})\}_{i=1}^{m}$ sampled from $\pi_{\mathsf{rel}}$, student init $\theta_0$, stepsize $\alpha$, sharpness $\eta$
\Require Student downstream loss $\mathscr{L}(\theta)$ (e.g., NLL on a held-out validation set)
\For{each training step / minibatch $B \subset \mathscr{T}$}
    \State $g \gets \nabla_\theta \mathscr{L}(\theta)$ \Comment{Downstream task gradient}
    \For{each trace $(x, y) \in B$}
        \State $v_{\text{grad}}(x, y) \gets -\left\langle g,\; \nabla_\theta \log \pi_{\mathsf{stu}}(y \mid x; \theta) \right\rangle$
       
        \State $w(x, y) \gets \frac{\exp\bigl(\eta \cdot v_{\text{grad}}(x, y)\bigr)}{\sum_{(x', y') \in B} \exp\bigl(\eta \cdot v_{\text{grad}}(x', y')\bigr)}$ \Comment{Normalize weights over the batch}
    \EndFor
    \State $\theta \gets \theta - \alpha\, \nabla_\theta \Bigl[\sum_{(x,y) \in B} w(x,y) \sum_{t} -\log p(y_{t+1} \mid y_{1:t}, x;\, \theta) \Bigr]$
\EndFor
\State \Return $\theta$
\end{algorithmic}
\end{algorithm}

\section{Best Responses and Implications}
\vspace{-2mm}
We now show that the student's response to a released teacher and the teacher's response to a fixed student value take exponential-tilt forms. These best responses are also useful algorithmically: they directly give an adaptive evaluation rule for the student and a family of teacher-side defenses. We state the result first, whose proof is in Appendix \ref{appx:proof}, and then unpack its two main implications. 

\begin{restatable}[Best responses]{theorem}{bestresponsesthm}
\label{thm:best-responses}
Assume $\mathscr{Y}$ is finite.\footnote{The same formulas hold on general measurable spaces under the usual absolute continuity and integrability assumptions; we restrict to finite space to avoid measurability and existence issues.} 
For value function $v$, the student's best response to a fixed released policy $\pi_{\mathsf{rel}}$ and the teacher's best response to a fixed student are respectively, for every $y \in \mathscr{Y}$ and $x \in \mathscr{X}$:
\[
\pi_{\mathsf{eff}}^\star(y \mid x) \propto \pi_{\mathsf{rel}}(y \mid x)\, e^{\,\eta\, v(x,y)}, \qquad \pi_{\mathsf{rel}}^\star(y \mid x) \propto \pi_{\mathsf{ref}}(y \mid x)\, e^{-\lambda\, v(x,y)}
\]
where $\eta, \lambda \geq 0$ are dual variables. When the global KL constraints are active, $\eta$ and $\lambda$ are unique constants chosen such that the expected budgets are tight, i.e., $\mathbb{E}_{x \sim \mathscr{D}} [\mathrm{D}_{\mathrm{KL}}(\pi_{\mathsf{eff}}^\star(\cdot \mid x) \| \pi_{\mathsf{rel}}(\cdot \mid x))] = \rho$ and $\mathbb{E}_{x \sim \mathscr{D}} [\mathrm{D}_{\mathrm{KL}}(\pi_{\mathsf{rel}}^\star(\cdot \mid x) \| \pi_{\mathsf{ref}}(\cdot \mid x))] = \varepsilon$.
\end{restatable}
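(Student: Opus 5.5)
The plan is to treat each best response as a linear objective maximized or minimized over a KL ball, and to solve it by Lagrangian duality together with the Gibbs variational principle. Both problems decouple across contexts $x$ once the single global multiplier is introduced.

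\textbf{Student side.} For fixed $\pi_{\mathsf{rel}}$, the feasible set $\Pi_\rho(\pi_{\mathsf{rel}})$ is convex, because KL is jointly convex and the expectation over $x$ preserves convexity. The objective $\mathbb{E}_{x}\mathbb{E}_{y\sim\pi_{\mathsf{eff}}}[v]$ is linear. Slater's condition holds when $\rho>0$, since $\pi_{\mathsf{eff}}=\pi_{\mathsf{rel}}$ has zero divergence. Strong duality therefore gives
\[
\sup_{\pi}\inf_{\beta\ge 0}\ \mathbb{E}_x\Bigl[\mathbb{E}_{\pi(\cdot\mid x)}[v] - \beta\,\mathrm{D}_{\mathrm{KL}}(\pi(\cdot\mid x)\,\|\,\pi_{\mathsf{rel}}(\cdot\mid x))\Bigr] + \beta\rho,
\]
with the $\sup$ and $\inf$ interchangeable. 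For fixed $\beta>0$, the inner supremum is pointwise in $x$. By the Donsker--Varadhan/Gibbs identity
\[
\sup_{q}\ \mathbb{E}_q[v] - \beta\,\mathrm{D}_{\mathrm{KL}}(q\|p) = \beta\log\mathbb{E}_p\bigl[e^{v/\beta}\bigr],
\]
the maximizer is $q\propto p\,e^{v/\beta}$. Setting $\eta=1/\beta$ gives the claimed tilt. On a finite $\mathscr{Y}$ this is a short computation: the gap between the two sides equals $\beta\,\mathrm{D}_{\mathrm{KL}}(q\|q^\star)\ge 0$.

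\textbf{Teacher side.} For a fixed student, the teacher's objective is $\mathbb{E}_x\mathbb{E}_{y\sim\pi_{\mathsf{rel}}}[v]$, since the released distribution carries the value forward. Minimizing this linear functional over $\Pi_\varepsilon(\pi_{\mathsf{ref}})$ is the same problem with $v$ replaced by $-v$. The minimizer is therefore $\pi_{\mathsf{ref}}\,e^{-\lambda v}$ normalized per $x$, with $\lambda=1/\beta'$. I would state explicitly that this is how "the teacher's best response to a fixed student" is read. The student's response is held fixed, for example through $v$ itself, as in the later PoE derivation, rather than re-optimized inside the infimum.

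\textbf{Dual variable and tightness.} Define $\phi(\eta)=\mathbb{E}_x\bigl[\mathrm{D}_{\mathrm{KL}}(\pi_\eta(\cdot\mid x)\,\|\,\pi_{\mathsf{rel}}(\cdot\mid x))\bigr]$, where $\pi_\eta$ is the tilted distribution. A direct computation gives $\phi'(\eta)=\eta\,\mathbb{E}_x\bigl[\mathrm{Var}_{\pi_\eta}(v)\bigr]$, so $\phi$ is continuous, satisfies $\phi(0)=0$, and is strictly increasing whenever $v(x,\cdot)$ is nonconstant on the support for a set of $x$ of positive mass. If the constraint is active, meaning $\rho<\lim_{\eta\to\infty}\phi(\eta)$, the intermediate value theorem together with strict monotonicity yields a unique $\eta$ with $\phi(\eta)=\rho$. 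Complementary slackness then identifies this $\eta$ with the optimal dual variable. The teacher side is handled identically with $\lambda$ and $\varepsilon$.

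\textbf{Main obstacle.} The delicate step is this tightness and uniqueness argument. It requires three things. First, the KL of $\pi_\eta$ must be monotone in $\eta$, which I would establish with the variance formula above. Second, the degenerate cases must be handled. When $v$ is constant, the constraint is inactive, so $\eta=0$ and any feasible point is optimal. When $\rho$ exceeds the KL of the point mass on the argmax of $v$, the supremum is attained at a boundary, which is not of tilt form, and this is excluded by the hypothesis "constraints are active". Third, one must justify exchanging $\sup$ and $\inf$, which uses finiteness of $\mathscr{Y}$ and compactness of the simplex. For a general $\mathscr{D}$, I would pass to the per-$x$ problems by measurable selection or simply assume $\mathscr{X}$ is finite.
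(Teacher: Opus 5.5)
Your proposal is correct and follows essentially the paper's route. The paper likewise uses the identity $\mathbb{E}_x[\mathrm{D}_{\mathrm{KL}}(\pi\,\|\,\pi_{\mathsf{eff},\eta})]=\mathbb{E}_x[\mathrm{D}_{\mathrm{KL}}(\pi\,\|\,\pi_{\mathsf{rel}})]-\eta\,\mathbb{E}[v]+\mathbb{E}_x[\log Z_{\pi_{\mathsf{rel}}}(x,\eta)]$ (your Gibbs gap) to show the tilt uniquely solves the penalized problem, then passes to the constrained problem via strong duality and complementary slackness, and obtains the teacher side by replacing $v$ with $-v$.

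Two of your additions are worth keeping. Your formula $\phi'(\eta)=\eta\,\mathbb{E}_x[\mathrm{Var}_{\pi_\eta}(v)]$ supplies the uniqueness of $\eta$ (and $\lambda$) that the theorem asserts but the paper's proof never argues. Your restriction of Slater's condition to $\rho>0$ is also more accurate than the paper's appeal to a nonempty relative interior. Conversely, the paper covers the inactive case by admitting $\eta=+\infty$, i.e.\ any policy supported on $\arg\max_{y:\,\pi_{\mathsf{rel}}(y\mid x)>0}v(x,y)$, rather than excluding that case as you do.
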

Theorem~\ref{thm:best-responses} above characterizes the two one-sided best responses, which are what our adaptive evaluation rule and defense template use. In Appendix~\ref{app:stackelberg}, we complement this with an analysis of the full Stackelberg game~\eqref{eq:game}, providing closed-form expression for optimal teacher-student pair.

\begin{wrapfigure}[19]{r}{0.45\textwidth}
  \centering
  \tiny
  \vspace{-15mm}
  \centering
  \tiny
  \begin{proxybox}{High-value trace}
    I'm trying to figure out how many miles Jack has driven over the past 9
    years. He drives 37,000 miles every four months. First, I need to find
    out how many sets of four months are in 9 years. I know there are 12
    months in a year, so in 9 years, there are $9 \times 12 = 108$ months.
    \centerline{$\vdots$}
    There are 3 four-month periods in a year, so he drives
    $37{,}000 \times 3 = 111{,}000$ miles per year. Multiplying by 9 years
    gives $111{,}000 \times 9 = 999{,}000$ miles.
    \textbf{Final Answer:} $\boxed{999000}$
  \end{proxybox}
  \begin{proxybox}{Low-value trace}
    First, I recognize that the spaceship's speed is halved for every
    additional 100 people beyond the initial 200.
    \centerline{\texttt{raiding raiding raiding raiding raiding \dots}}
    \centerline{$\vdots$}
    \centerline{\texttt{raiding raiding raiding raiding raiding \dots}}
  \end{proxybox}
  \vspace{-2mm}
  \caption{An adaptive student filters traces using downstream gradient
    alignment $v_{\mathrm{grad}}$. High-value traces contain useful
    reasoning; low-value traces are repetitive or uninformative.}
  \label{fig:strategic_student_filter}
\end{wrapfigure}

\subsection{Implication 1: Adaptive Student}
The student's best response in Theorem~\ref{thm:best-responses} is an exponential tilt toward high-value examples. For a fixed released teacher $\pi_{\mathrm{rel}}$, the effective student distribution satisfies:
\begin{equation}
\pi_{\mathrm{eff}}^\star(y \mid x)
\propto
\pi_{\mathrm{rel}}(y \mid x)\exp\{\eta\, v(x,y)\}.
\end{equation}

This gives a principled adaptive evaluation rule. Rather than training uniformly on released traces as given, the student first reweights them exponentially according to their value and then trains on the resulting weighted dataset. The sharpness parameter $\eta$ controls how selective this reweighting is: $\eta = 0$ recovers passive training, while larger values of $\eta$ place progressively more mass on high-value traces.
Algorithm~\ref{alg:adaptive-student} implements this idea as a stochastic minibatch procedure. The population rule in Theorem~\ref{thm:best-responses} defines an exponential tilt over the released distribution, while the implementation estimates value scores on each minibatch and normalizes weights within that minibatch.

\begin{algorithm}[t]
\caption{Product-of-Experts (PoE) Sampling}
\label{alg:teacher-poe}
\begin{algorithmic}[1]
\Require Prompt $x_{1:n}$, max tokens $N$, teacher model $\theta_T$, proxy student model $\theta_S$, mixture weight $\gamma \in (0, 1)$, temperature $\tau$
\For{$t = n, n+1, \dots, N-1$}
  \State  Compute teacher log-probabilities: 
         $z_T(\cdot) \leftarrow \log p(\cdot \mid x_{1:t}; \theta_T)$
  \State  Compute proxy log-probabilities: 
         $z_S(\cdot) \leftarrow \log p(\cdot \mid x_{1:t}; \theta_S)$
  \State Draw next token from the combined geometric mixture distribution:
  \State $x_{t+1} \propto \exp\!\left( \frac{1}{\tau} \Big[ (1-\gamma) z_T(\cdot) + \gamma z_S(\cdot) \Big] \right)$
\EndFor
\State \Return $x_{1:N}$
\end{algorithmic}
\end{algorithm}
\newpage
\subsection{Implication 2: Teacher-Side Defenses}

\label{sec:poe}

The teacher's best response in Theorem~\ref{thm:best-responses} shows that, once a value function $v$ has been specified, the released teacher should tilt away from outputs with high value to the student. For a fixed value function and letting $\lambda \ge 0$ control the strength of the defense, the best-response rule takes the form
\begin{equation}
\pi_{\mathrm{rel}}^\star(y \mid x)
\propto
\pi_{\mathrm{ref}}(y \mid x)\exp\{-\lambda v(x,y)\}.
\end{equation}
This turns any choice of $v$ into a corresponding teacher-side defense.
A first instantiation uses the gradient-based value from Section~2.3,
\begin{equation}
v_{\mathrm{grad}}(x,y)
=
-\left\langle
\nabla_\theta \mathscr{L}(\theta_0),
\nabla_\theta \log {\pi}_{\mathrm{stu}}(y \mid x; \theta)\big|_{\theta=\theta_0}
\right\rangle.
\end{equation}
Substituting this into the teacher best response gives
\begin{equation}
\pi_{\mathrm{rel}}^\star(y \mid x)
\propto
\pi_{\mathrm{ref}}(y \mid x)
\exp\!\left\{
\lambda
\left\langle
\nabla_\theta \mathscr{L}(\theta_0),
\nabla_\theta \log {\pi}_{\mathrm{stu}}(y \mid x; \theta)\big|_{\theta=\theta_0}
\right\rangle
\right\}.
\end{equation}
This captures the principle behind the antidistillation sampling (ADS) method by~\cite{savani2025antidistillation}: the teacher suppresses outputs whose student gradient is most aligned with reducing downstream risk. Thus, our framework gives a game-theoretic interpretation of the objective that ADS approximates under a first-order gradient-based notion of example value.

The main drawback of the above method is computational cost. Evaluating $v_{\mathrm{grad}}$ during decoding requires gradient information from the student, which is expensive in autoregressive generation. This motivates a cheaper proxy value function that avoids gradient computation altogether:
\begin{equation}
v_{\mathrm{gap}}(x,y)
:=
\log \pi_{\mathrm{ref}}(y \mid x)
-
\log {\pi}_{\mathrm{stu}}(y \mid x).
\end{equation}

This quantity is large when the teacher assigns much higher likelihood than the proxy student to an output. Such outputs mark regions where the teacher has capability or confidence not yet matched by the proxy, and are therefore plausible carriers of distillation value.
Substituting $v_{\mathrm{gap}}$ into the same best-response rule gives:
\begin{equation}
\pi_{\mathrm{rel}}^\star(y \mid x)
\propto
\pi_{\mathrm{ref}}(y \mid x)
\exp\!\left\{
-\lambda\bigl(\log \pi_{\mathrm{ref}}(y \mid x)-\log {\pi}_{\mathrm{stu}}(y \mid x)\bigr)
\right\},
\end{equation}
which simplifies to:
\begin{equation}
\pi_{\mathrm{rel}}^\star(y \mid x)
\propto
\pi_{\mathrm{ref}}(y \mid x)^{1-\lambda}
{\pi}_{\mathrm{stu}}(y \mid x)^\lambda.
\end{equation}

This is a geometric mixture of the teacher and student, which we refer to as a Product-of-Experts (PoE). Intuitively, the defense downweights outputs on which the teacher has a large likelihood advantage over the student.
The name follows the work of \cite{hinton2002training}, where multiple probabilistic models are combined by multiplying their densities. Here, the product form arises as a proxy teacher response in our game, rather than as a generative modeling assumption.
In Algorithm~\ref{alg:teacher-poe} and henceforth, we write the mixture weight as $\gamma \in [0,1]$, to differentiate from the ADS parameter $\lambda$.

In practice, PoE is cheap because it is forward-pass-only: it combines teacher and (proxy) student log-probabilities during generation and requires no gradient estimates through the student.
The ideal rule is defined over complete output sequences, but exact sequence-level sampling is intractable for autoregressive language models. We therefore use a standard token-level approximation, interpolating the teacher and proxy-student predictions at each decoding step, as is common for inference-time methods~\citep{yang2021fudge,krause2021gedi,li2023contrastive,savani2025antidistillation}. 

Unlike the earlier gradient-based defenses, this rule does not directly optimize for making the student's training update unhelpful; instead, it regularizes the released policy toward outputs that remain plausible under both models. This suggests a qualitative advantage that we test empirically in Section~\ref{sec:experiments}: because PoE preferentially retains continuations supported by both the teacher and proxy student, it may distort reasoning traces less severely than gradient-based shaping.

\section{Empirical Results}
\label{sec:experiments}

We evaluate three questions. First, does adaptive evaluation reveal more distillation leakage than passive evaluation? Second, how do ADS and PoE compare along the utility--distillability frontier, where teacher accuracy measures utility and student accuracy measures leakage? Third, how does this comparison change once runtime and trace quality are taken into account?

\renewcommand{\metricci}[2]{$#1_{\pm #2}$}
\newcommand{\hp}[1]{{\scriptsize #1}}
\newcommand{\vrulegap}{\hspace{4pt}\vline\hspace{4pt}}

\begin{table}[t!]
\centering
\small
\setlength{\tabcolsep}{7pt}
\renewcommand{\arraystretch}{1.25}
\begin{threeparttable}
\makebox[\textwidth][c]{%
\begin{tabular}{@{}ll@{\vrulegap}rrr@{\vrulegap}cc@{}}
\toprule
\textbf{Dataset} &
\textbf{Teacher} &
\multicolumn{1}{c}{\textbf{Teacher Acc. $\uparrow$}} &
\multicolumn{1}{c}{\textbf{Passive Acc. $\downarrow$}} &
\multicolumn{1}{c@{\vrulegap}}{\textbf{Adaptive$^\dagger$ Acc. $\downarrow$}} &
\multicolumn{1}{c}{\textbf{Rel. Gain}} &
\multicolumn{1}{c}{\textbf{Time Cost}} \\
\midrule
\addlinespace[3pt]
\multirow{3}{*}{\textbf{\texttt{GSM8K}}}
& Standard
& \metricci{87.22\%}{0.04}
& \metricci{57.24\%}{0.25}
& \metricci{56.74\%}{0.17}
& $-0.87\%$
& $1.00\times$ \\

& ADS \hp{($\lambda=0.052$)}
& \metricci{82.13\%}{0.43}
& \metricci{34.33\%}{0.17}
& \metricci{51.50\%}{1.46}
& $50.04\%$
& $2.93\times$ \\

& PoE$^\dagger$ \hp{($\gamma=0.65$)}
& \metricci{81.61\%}{0.46}
& \metricci{39.26\%}{3.33}
& \metricci{49.46\%}{1.19}
& $25.98\%$
& $1.64\times$ \\

\midrule
\addlinespace[3pt]
\multirow{3}{*}{\textbf{\texttt{MATH}}}
& Standard
& \metricci{61.78\%}{0.33}
& \metricci{15.17\%}{0.29}
& \metricci{15.29\%}{0.40}
& $0.75\%$
& $1.00\times$ \\

& ADS \hp{($\lambda=0.08$)}
& \metricci{61.16\%}{0.36}
& \metricci{8.96\%}{1.30}
& \metricci{13.45\%}{0.99}
& $50.07\%$
& $3.85\times$ \\

& PoE$^\dagger$ \hp{($\gamma=0.75$)}
& \metricci{60.07\%}{0.48}
& \metricci{9.00\%}{2.86}
& \metricci{12.92\%}{1.13}
& $43.56\%$
& $2.33\times$ \\
\bottomrule
\end{tabular}%
}
\end{threeparttable}
\vspace{2mm}
\caption{Representative points on the utility--distillability frontier. $^\dagger$ denotes our method/evaluation. Passive and adaptive$^\dagger$ columns report student accuracy after distillation larger values indicate greater leakage. Rel. gain is the relative improvement from adaptive evaluation. Teachers include the standard model, ADS~\citep{savani2025antidistillation}, and our PoE$^\dagger$ defense. Time cost is the generation time overhead relative to the standard teacher. \texttt{GSM8K} shows the clearest separation, while \texttt{MATH} shows the same qualitative trend with higher variance. Entries report standard error of the mean over 3 seeds.}
\label{tab:seed_aggregates}
\vspace{-1em}
\end{table}

\subsection{Experimental setup}

We report our main experimental setup below, and defer full details to Appendix~\ref{app:experimental-details}.
Our code is available at: \url{https://github.com/ysfalh/distillation-game}.

\paragraph{Datasets and models.}
We evaluate on \texttt{GSM8K}~\citep{cobbe2021training} and \texttt{MATH}~\citep{hendrycks2021measuring}, two standard benchmarks for mathematical reasoning. Both require multi-step reasoning traces for strong performance and are therefore natural testbeds.
Our teacher model is \texttt{DeepSeek-R1-Distill-Qwen-7B}~\citep{guo2025deepseek}. We use \texttt{Qwen2.5-3B}~\citep{bai2023qwen} as the proxy student used for teacher defense, and \texttt{Llama-3.2-3B}~\citep{grattafiori2024llama} as the final student. We keep the teacher, proxy, and final student fixed across datasets and teacher conditions.

\paragraph{Teacher types.}
We compare three teacher types: standard (no defense), ADS (the current state-of-the-art by~\cite{savani2025antidistillation}), and our PoE teacher. We record the reasoning trace for each prompt by each teacher. For ADS and PoE, we sweep the defense strength to trace out a utility--distillability frontier, with teacher accuracy measuring utility and downstream student accuracy measuring leakage. Table~\ref{tab:seed_aggregates} reports representative points, while Figure~\ref{fig:teacher_student_tradeoff} shows the full frontier.
The aforementioned representative points are chosen to compare defenses at similar and reasonable teacher accuracies.

\paragraph{Passive and adaptive students.}
For each teacher, we train two students using the same distillation pipeline and differing only in how released traces are weighted. The passive student trains on released traces as given. The adaptive student applies our best-response evaluation rule, exponentially reweighting traces with higher estimated value before training. 

\paragraph{Training and metrics.}
We fine-tune the student with LoRA~\citep{hu2022lora} for 3 epochs using rank 128, $\alpha=128$, learning rate $5\times 10^{-4}$, cosine decay, batch size 6, gradient accumulation 4, and maximum sequence length 2{,}048. Our main metrics are teacher accuracy and student accuracy after distillation. We also report the relative gain of the adaptive student over the passive student. Unless otherwise noted, all results are mean $\pm$ standard error over three seeds.

\subsection{Results}

\begin{figure*}[t!]
\centering
\begin{tikzpicture}
\begin{axis}[
    hide axis,
    xmin=0, xmax=1, ymin=0, ymax=1,
    legend columns=3,
    legend style={draw=none, fill=none, font=\small, column sep=10pt},
]
\addlegendimage{only marks, mark=*, mark size=2.6pt, color=black}
\addlegendentry{Standard + passive}
\addlegendimage{only marks, mark=diamond*, mark size=2.2pt, color=black}
\addlegendentry{Standard + adaptive}
\addlegendimage{semithick, solid, mark=o, mark size=2.1pt, color=teal!70!black}
\addlegendentry{ADS + passive}
\addlegendimage{semithick, densely dashed, mark=triangle*, mark size=2.3pt, color=teal!70!black}
\addlegendentry{ADS + adaptive}
\addlegendimage{semithick, solid, mark=square*, mark size=1.9pt, color=orange!85!black}
\addlegendentry{PoE + passive}
\addlegendimage{semithick, densely dashed, mark=diamond*, mark size=2.1pt, color=orange!85!black}
\addlegendentry{PoE + adaptive}
\end{axis}
\end{tikzpicture}
\vspace{2pt}
\begin{subfigure}[t]{0.49\textwidth}
\centering
\begin{tikzpicture}
\begin{axis}[
    width=\linewidth,
    height=0.68\linewidth,
    xlabel={Teacher accuracy (\%)},
    ylabel={Student accuracy (\%)},
    xmin=75.3, xmax=88.3,
    ymin=25.0, ymax=60.5,
    grid=both,
    major grid style={draw=gray!20},
    minor grid style={draw=gray!10},
    tick label style={font=\small},
    label style={font=\small},
]

\addplot+[
    only marks,
    mark=*,
    mark size=2.6pt,
    color=black,
    mark options={solid},
    error bars/.cd,
    x dir=both, x explicit,
    y dir=both, y explicit,
    error bar style={opacity=0.22},
] coordinates {
    (87.22,57.24) +- (0.04,0.25)
};

\addplot+[
    only marks,
    mark=diamond*,
    mark size=2.2pt,
    color=black,
    mark options={solid},
    error bars/.cd,
    x dir=both, x explicit,
    y dir=both, y explicit,
    error bar style={opacity=0.22},
] coordinates {
    (87.22,56.74) +- (0.04,0.17)
};

\addplot+[
    semithick,
    smooth,
    solid,
    color=teal!70!black,
    mark=none,
] coordinates {
    (76.34,29.07)
    (76.44,29.19)
    (76.54,29.30)
    (76.63,29.42)
    (76.73,29.54)
    (76.83,29.66)
    (76.93,29.78)
    (77.02,29.90)
    (77.12,30.02)
    (77.22,30.14)
    (77.31,30.27)
    (77.41,30.39)
    (77.51,30.51)
    (77.60,30.64)
    (77.70,30.76)
    (77.80,30.89)
    (77.89,31.02)
    (77.99,31.14)
    (78.09,31.27)
    (78.19,31.40)
    (78.28,31.54)
    (78.38,31.67)
    (78.48,31.80)
    (78.57,31.94)
    (78.67,32.07)
    (78.77,32.21)
    (78.86,32.35)
    (78.96,32.48)
    (79.06,32.62)
    (79.15,32.76)
    (79.25,32.90)
    (79.35,33.04)
    (79.45,33.18)
    (79.54,33.32)
    (79.64,33.46)
    (79.74,33.60)
    (79.83,33.74)
    (79.93,33.88)
    (80.03,34.01)
    (80.12,34.14)
    (80.22,34.26)
    (80.32,34.38)
    (80.41,34.50)
    (80.51,34.60)
    (80.61,34.70)
    (80.71,34.79)
    (80.80,34.88)
    (80.90,34.96)
    (81.00,35.03)
    (81.09,35.11)
    (81.19,35.18)
    (81.29,35.28)
    (81.38,35.38)
    (81.48,35.43)
    (81.58,35.48)
    (81.67,35.54)
    (81.77,35.64)
    (81.87,35.83)
    (81.97,36.14)
    (82.06,36.39)
    (82.16,36.64)
    (82.26,36.94)
    (82.35,37.25)
    (82.45,37.56)
    (82.55,37.86)
    (82.64,38.18)
    (82.74,38.50)
    (82.84,38.82)
    (82.93,39.16)
    (83.03,39.49)
    (83.13,39.84)
    (83.23,40.19)
    (83.32,40.54)
    (83.42,40.90)
    (83.52,41.26)
    (83.61,41.62)
    (83.71,41.98)
    (83.81,42.35)
    (83.90,42.72)
    (84.00,43.09)
    (84.10,43.46)
    (84.19,43.84)
    (84.29,44.21)
    (84.39,44.59)
    (84.49,44.97)
    (84.58,45.34)
    (84.68,45.72)
    (84.78,46.10)
    (84.87,46.48)
    (84.97,46.86)
    (85.07,47.24)
    (85.16,47.63)
    (85.26,48.01)
    (85.36,48.39)
    (85.45,48.78)
    (85.55,49.17)
    (85.65,49.55)
    (85.74,49.94)
    (85.84,50.33)
    (85.94,50.72)
};

\addplot+[
    only marks,
    mark=o,
    mark size=2.1pt,
    color=teal!70!black,
    mark options={solid},
    error bars/.cd,
    x dir=both, x explicit,
    y dir=both, y explicit,
    error bar style={opacity=0.22},
] coordinates {
    (85.64,52.38) +- (0.15,2.01)
    (85.03,52.11) +- (0.54,2.88)
    (84.59,51.89) +- (0.48,4.58)
    (83.93,46.40) +- (0.28,3.10)
    (84.28,41.95) +- (0.29,3.25)
    (83.90,38.10) +- (0.24,2.44)
    (84.12,35.37) +- (0.33,1.44)
    (82.13,34.33) +- (0.43,0.17)
    (81.06,32.95) +- (0.50,1.93)
    (78.72,33.11) +- (1.03,0.59)
    (77.67,31.04) +- (0.68,1.85)
};

\addplot+[
    semithick,
    smooth,
    densely dashed,
    color=teal!70!black,
    mark=none,
] coordinates {
    (76.34,43.44)
    (76.44,43.57)
    (76.54,43.70)
    (76.63,43.83)
    (76.73,43.96)
    (76.83,44.08)
    (76.93,44.21)
    (77.02,44.34)
    (77.12,44.47)
    (77.22,44.60)
    (77.31,44.72)
    (77.41,44.85)
    (77.51,44.98)
    (77.60,45.11)
    (77.70,45.24)
    (77.80,45.36)
    (77.89,45.49)
    (77.99,45.62)
    (78.09,45.74)
    (78.19,45.87)
    (78.28,45.99)
    (78.38,46.12)
    (78.48,46.24)
    (78.57,46.36)
    (78.67,46.49)
    (78.77,46.61)
    (78.86,46.73)
    (78.96,46.85)
    (79.06,46.97)
    (79.15,47.09)
    (79.25,47.21)
    (79.35,47.32)
    (79.45,47.44)
    (79.54,47.56)
    (79.64,47.67)
    (79.74,47.79)
    (79.83,47.90)
    (79.93,48.02)
    (80.03,48.14)
    (80.12,48.25)
    (80.22,48.37)
    (80.32,48.49)
    (80.41,48.61)
    (80.51,48.74)
    (80.61,48.87)
    (80.71,49.01)
    (80.80,49.16)
    (80.90,49.31)
    (81.00,49.48)
    (81.09,49.65)
    (81.19,49.83)
    (81.29,49.99)
    (81.38,50.17)
    (81.48,50.37)
    (81.58,50.57)
    (81.67,50.76)
    (81.77,50.95)
    (81.87,51.08)
    (81.97,51.16)
    (82.06,51.25)
    (82.16,51.36)
    (82.26,51.43)
    (82.35,51.48)
    (82.45,51.53)
    (82.55,51.58)
    (82.64,51.64)
    (82.74,51.70)
    (82.84,51.76)
    (82.93,51.83)
    (83.03,51.91)
    (83.13,51.98)
    (83.23,52.07)
    (83.32,52.15)
    (83.42,52.24)
    (83.52,52.33)
    (83.61,52.42)
    (83.71,52.51)
    (83.81,52.60)
    (83.90,52.70)
    (84.00,52.79)
    (84.10,52.89)
    (84.19,52.99)
    (84.29,53.08)
    (84.39,53.18)
    (84.49,53.28)
    (84.58,53.38)
    (84.68,53.48)
    (84.78,53.58)
    (84.87,53.68)
    (84.97,53.78)
    (85.07,53.88)
    (85.16,53.98)
    (85.26,54.08)
    (85.36,54.19)
    (85.45,54.29)
    (85.55,54.40)
    (85.65,54.50)
    (85.74,54.61)
    (85.84,54.72)
    (85.94,54.83)
};

\addplot+[
    only marks,
    mark=triangle*,
    mark size=2.3pt,
    color=teal!70!black,
    mark options={solid},
    error bars/.cd,
    x dir=both, x explicit,
    y dir=both, y explicit,
    error bar style={opacity=0.22},
] coordinates {
    (85.64,56.02) +- (0.15,0.06)
    (85.03,55.78) +- (0.54,0.06)
    (84.59,54.37) +- (0.48,1.55)
    (83.93,53.07) +- (0.28,1.12)
    (84.28,51.82) +- (0.29,0.46)
    (83.90,53.46) +- (0.24,0.59)
    (84.12,49.93) +- (0.33,0.96)
    (82.13,51.50) +- (0.43,1.46)
    (81.06,49.57) +- (0.50,1.54)
    (78.72,46.15) +- (1.03,1.95)
    (77.67,44.97) +- (0.68,0.17)
};

\addplot+[
    semithick,
    smooth,
    solid,
    color=orange!85!black,
    mark=none,
] coordinates {
    (77.09,32.12)
    (77.19,32.24)
    (77.28,32.36)
    (77.38,32.49)
    (77.48,32.62)
    (77.58,32.75)
    (77.68,32.88)
    (77.77,33.01)
    (77.87,33.14)
    (77.97,33.27)
    (78.07,33.41)
    (78.16,33.54)
    (78.26,33.67)
    (78.36,33.81)
    (78.46,33.94)
    (78.55,34.08)
    (78.65,34.21)
    (78.75,34.35)
    (78.85,34.48)
    (78.95,34.61)
    (79.04,34.75)
    (79.14,34.88)
    (79.24,35.01)
    (79.34,35.14)
    (79.43,35.28)
    (79.53,35.41)
    (79.63,35.54)
    (79.73,35.68)
    (79.83,35.82)
    (79.92,35.96)
    (80.02,36.10)
    (80.12,36.25)
    (80.22,36.40)
    (80.31,36.56)
    (80.41,36.73)
    (80.51,36.90)
    (80.61,37.09)
    (80.71,37.29)
    (80.80,37.50)
    (80.90,37.73)
    (81.00,37.97)
    (81.10,38.22)
    (81.19,38.47)
    (81.29,38.74)
    (81.39,39.02)
    (81.49,39.31)
    (81.59,39.60)
    (81.68,39.89)
    (81.78,40.18)
    (81.88,40.46)
    (81.98,40.73)
    (82.07,41.00)
    (82.17,41.26)
    (82.27,41.54)
    (82.37,41.81)
    (82.46,42.07)
    (82.56,42.33)
    (82.66,42.58)
    (82.76,42.83)
    (82.86,43.13)
    (82.95,43.44)
    (83.05,43.74)
    (83.15,44.04)
    (83.25,44.34)
    (83.34,44.64)
    (83.44,44.93)
    (83.54,45.23)
    (83.64,45.53)
    (83.74,45.83)
    (83.83,46.12)
    (83.93,46.42)
    (84.03,46.72)
    (84.13,47.01)
    (84.22,47.31)
    (84.32,47.61)
    (84.42,47.91)
    (84.52,48.21)
    (84.62,48.51)
    (84.71,48.81)
    (84.81,49.11)
    (84.91,49.41)
    (85.01,49.72)
    (85.10,50.02)
    (85.20,50.32)
    (85.30,50.63)
    (85.40,50.93)
    (85.50,51.24)
    (85.59,51.55)
    (85.69,51.85)
    (85.79,52.16)
    (85.89,52.46)
    (85.98,52.77)
    (86.08,53.08)
    (86.18,53.39)
    (86.28,53.69)
    (86.37,54.00)
    (86.47,54.31)
    (86.57,54.62)
    (86.67,54.93)
    (86.77,55.23)
};

\addplot+[
    only marks,
    mark=square*,
    mark size=1.9pt,
    color=orange!85!black,
    mark options={solid},
    error bars/.cd,
    x dir=both, x explicit,
    y dir=both, y explicit,
    error bar style={opacity=0.22},
] coordinates {
    (86.02,54.29) +- (0.66,0.90)
    (84.92,50.81) +- (0.15,3.52)
    (83.60,43.92) +- (0.43,3.35)
    (81.89,37.11) +- (0.58,1.03)
    (81.61,39.26) +- (0.46,3.33)
    (80.53,36.75) +- (0.48,1.02)
    (80.26,41.16) +- (0.20,2.54)
    (79.43,31.62) +- (0.31,1.04)
    (77.89,33.28) +- (0.48,3.59)
};

\addplot+[
    semithick,
    smooth,
    densely dashed,
    color=orange!85!black,
    mark=none,
] coordinates {
    (77.09,46.04)
    (77.19,46.13)
    (77.28,46.22)
    (77.38,46.31)
    (77.48,46.40)
    (77.58,46.48)
    (77.68,46.57)
    (77.77,46.66)
    (77.87,46.75)
    (77.97,46.83)
    (78.07,46.92)
    (78.16,47.01)
    (78.26,47.09)
    (78.36,47.18)
    (78.46,47.26)
    (78.55,47.35)
    (78.65,47.43)
    (78.75,47.52)
    (78.85,47.60)
    (78.95,47.68)
    (79.04,47.77)
    (79.14,47.85)
    (79.24,47.94)
    (79.34,48.02)
    (79.43,48.10)
    (79.53,48.19)
    (79.63,48.27)
    (79.73,48.35)
    (79.83,48.44)
    (79.92,48.52)
    (80.02,48.60)
    (80.12,48.69)
    (80.22,48.77)
    (80.31,48.85)
    (80.41,48.94)
    (80.51,49.02)
    (80.61,49.10)
    (80.71,49.18)
    (80.80,49.26)
    (80.90,49.33)
    (81.00,49.41)
    (81.10,49.48)
    (81.19,49.56)
    (81.29,49.65)
    (81.39,49.75)
    (81.49,49.84)
    (81.59,49.94)
    (81.68,50.03)
    (81.78,50.13)
    (81.88,50.21)
    (81.98,50.30)
    (82.07,50.38)
    (82.17,50.46)
    (82.27,50.54)
    (82.37,50.63)
    (82.46,50.71)
    (82.56,50.79)
    (82.66,50.86)
    (82.76,50.94)
    (82.86,51.04)
    (82.95,51.14)
    (83.05,51.23)
    (83.15,51.33)
    (83.25,51.42)
    (83.34,51.51)
    (83.44,51.60)
    (83.54,51.69)
    (83.64,51.78)
    (83.74,51.87)
    (83.83,51.97)
    (83.93,52.06)
    (84.03,52.15)
    (84.13,52.24)
    (84.22,52.33)
    (84.32,52.42)
    (84.42,52.52)
    (84.52,52.61)
    (84.62,52.70)
    (84.71,52.80)
    (84.81,52.89)
    (84.91,52.99)
    (85.01,53.08)
    (85.10,53.18)
    (85.20,53.27)
    (85.30,53.36)
    (85.40,53.46)
    (85.50,53.55)
    (85.59,53.65)
    (85.69,53.74)
    (85.79,53.84)
    (85.89,53.94)
    (85.98,54.03)
    (86.08,54.13)
    (86.18,54.22)
    (86.28,54.31)
    (86.37,54.41)
    (86.47,54.50)
    (86.57,54.60)
    (86.67,54.69)
    (86.77,54.79)
};

\addplot+[
    only marks,
    mark=diamond*,
    mark size=2.1pt,
    color=orange!85!black,
    mark options={solid},
    error bars/.cd,
    x dir=both, x explicit,
    y dir=both, y explicit,
    error bar style={opacity=0.22},
] coordinates {
    (86.02,54.20) +- (0.66,0.17)
    (84.92,52.91) +- (0.15,1.03)
    (83.60,52.16) +- (0.43,0.65)
    (81.89,49.79) +- (0.58,0.91)
    (81.61,49.46) +- (0.46,1.19)
    (80.53,49.60) +- (0.48,0.82)
    (80.26,49.30) +- (0.20,0.91)
    (79.43,48.75) +- (0.31,0.81)
    (77.89,45.30) +- (0.48,1.09)
};

\end{axis}
\end{tikzpicture}
\caption{GSM8K}
\label{fig:gsm8k_teacher_student}
\end{subfigure}
\hfill
\begin{subfigure}[t]{0.49\textwidth}
\centering
\begin{tikzpicture}
\begin{axis}[
    width=\linewidth,
    height=0.68\linewidth,
    xlabel={Teacher accuracy (\%)},
    ylabel={Student accuracy (\%)},
    xmin=56, xmax=62.5,
    ymin=6, ymax=16.2,
    grid=both,
    major grid style={draw=gray!20},
    minor grid style={draw=gray!10},
    tick label style={font=\small},
    label style={font=\small},
]

\addplot+[
    only marks,
    mark=*,
    mark size=2.6pt,
    color=black,
    mark options={solid},
    error bars/.cd,
    x dir=both, x explicit,
    y dir=both, y explicit,
    error bar style={opacity=0.22},
] coordinates {
    (61.78,15.17) +- (0.33,0.29)
};

\addplot+[
    only marks,
    mark=diamond*,
    mark size=2.2pt,
    color=black,
    mark options={solid},
    error bars/.cd,
    x dir=both, x explicit,
    y dir=both, y explicit,
    error bar style={opacity=0.22},
] coordinates {
    (61.78,15.29) +- (0.33,0.40)
};

\addplot+[
    semithick,
    smooth,
    solid,
    color=teal!70!black,
    mark=none,
] coordinates {
    (56.70,7.67)
    (56.76,7.68)
    (56.81,7.69)
    (56.87,7.69)
    (56.93,7.70)
    (56.98,7.71)
    (57.04,7.72)
    (57.10,7.73)
    (57.15,7.73)
    (57.21,7.74)
    (57.27,7.75)
    (57.32,7.76)
    (57.38,7.77)
    (57.44,7.78)
    (57.49,7.79)
    (57.55,7.80)
    (57.61,7.81)
    (57.66,7.82)
    (57.72,7.82)
    (57.77,7.83)
    (57.83,7.85)
    (57.89,7.86)
    (57.94,7.87)
    (58.00,7.88)
    (58.06,7.89)
    (58.11,7.90)
    (58.17,7.91)
    (58.23,7.92)
    (58.28,7.93)
    (58.34,7.95)
    (58.40,7.96)
    (58.45,7.97)
    (58.51,7.98)
    (58.57,8.00)
    (58.62,8.01)
    (58.68,8.02)
    (58.74,8.03)
    (58.79,8.04)
    (58.85,8.06)
    (58.91,8.07)
    (58.96,8.08)
    (59.02,8.09)
    (59.08,8.11)
    (59.13,8.12)
    (59.19,8.13)
    (59.25,8.14)
    (59.30,8.15)
    (59.36,8.16)
    (59.42,8.17)
    (59.47,8.18)
    (59.53,8.20)
    (59.58,8.21)
    (59.64,8.21)
    (59.70,8.22)
    (59.75,8.23)
    (59.81,8.24)
    (59.87,8.24)
    (59.92,8.25)
    (59.98,8.26)
    (60.04,8.27)
    (60.09,8.29)
    (60.15,8.32)
    (60.21,8.35)
    (60.26,8.39)
    (60.32,8.43)
    (60.38,8.47)
    (60.43,8.51)
    (60.49,8.55)
    (60.55,8.59)
    (60.60,8.63)
    (60.66,8.68)
    (60.72,8.72)
    (60.77,8.77)
    (60.83,8.81)
    (60.89,8.85)
    (60.94,8.90)
    (61.00,8.94)
    (61.06,8.98)
    (61.11,9.03)
    (61.17,9.07)
    (61.23,9.12)
    (61.28,9.16)
    (61.34,9.20)
    (61.39,9.25)
    (61.45,9.29)
    (61.51,9.33)
    (61.56,9.38)
    (61.62,9.42)
    (61.68,9.46)
    (61.73,9.51)
    (61.79,9.55)
    (61.85,9.59)
    (61.90,9.64)
    (61.96,9.68)
    (62.02,9.73)
    (62.07,9.77)
    (62.13,9.81)
    (62.19,9.86)
    (62.24,9.90)
    (62.30,9.95)
};

\addplot+[
    only marks,
    mark=o,
    mark size=2.1pt,
    color=teal!70!black,
    mark options={solid},
    error bars/.cd,
    x dir=both, x explicit,
    y dir=both, y explicit,
    error bar style={opacity=0.22},
] coordinates {
    (62.11,10.78) +- (0.10,0.76)
    (61.74,7.89) +- (0.11,0.65)
    (61.16,8.96) +- (0.36,1.30)
    (59.92,8.91) +- (0.23,1.24)
    (58.64,7.59) +- (0.23,0.26)
    (57.49,8.06) +- (0.40,0.42)
};

\addplot+[
    semithick,
    smooth,
    densely dashed,
    color=teal!70!black,
    mark=none,
] coordinates {
    (56.70,9.67)
    (56.76,9.70)
    (56.81,9.73)
    (56.87,9.76)
    (56.93,9.79)
    (56.98,9.82)
    (57.04,9.86)
    (57.10,9.89)
    (57.15,9.92)
    (57.21,9.95)
    (57.27,9.99)
    (57.32,10.02)
    (57.38,10.05)
    (57.44,10.09)
    (57.49,10.12)
    (57.55,10.15)
    (57.61,10.19)
    (57.66,10.22)
    (57.72,10.26)
    (57.77,10.29)
    (57.83,10.33)
    (57.89,10.37)
    (57.94,10.40)
    (58.00,10.44)
    (58.06,10.47)
    (58.11,10.51)
    (58.17,10.55)
    (58.23,10.59)
    (58.28,10.62)
    (58.34,10.66)
    (58.40,10.70)
    (58.45,10.74)
    (58.51,10.78)
    (58.57,10.82)
    (58.62,10.85)
    (58.68,10.89)
    (58.74,10.93)
    (58.79,10.97)
    (58.85,11.01)
    (58.91,11.05)
    (58.96,11.09)
    (59.02,11.13)
    (59.08,11.17)
    (59.13,11.22)
    (59.19,11.26)
    (59.25,11.30)
    (59.30,11.35)
    (59.36,11.39)
    (59.42,11.44)
    (59.47,11.48)
    (59.53,11.52)
    (59.58,11.56)
    (59.64,11.60)
    (59.70,11.64)
    (59.75,11.68)
    (59.81,11.72)
    (59.87,11.76)
    (59.92,11.80)
    (59.98,11.84)
    (60.04,11.89)
    (60.09,11.93)
    (60.15,11.96)
    (60.21,11.99)
    (60.26,12.02)
    (60.32,12.06)
    (60.38,12.09)
    (60.43,12.13)
    (60.49,12.17)
    (60.55,12.21)
    (60.60,12.25)
    (60.66,12.29)
    (60.72,12.32)
    (60.77,12.36)
    (60.83,12.40)
    (60.89,12.44)
    (60.94,12.48)
    (61.00,12.52)
    (61.06,12.55)
    (61.11,12.59)
    (61.17,12.63)
    (61.23,12.66)
    (61.28,12.70)
    (61.34,12.73)
    (61.39,12.77)
    (61.45,12.80)
    (61.51,12.83)
    (61.56,12.87)
    (61.62,12.90)
    (61.68,12.93)
    (61.73,12.96)
    (61.79,12.99)
    (61.85,13.02)
    (61.90,13.05)
    (61.96,13.08)
    (62.02,13.11)
    (62.07,13.14)
    (62.13,13.17)
    (62.19,13.19)
    (62.24,13.22)
    (62.30,13.25)
};

\addplot+[
    only marks,
    mark=triangle*,
    mark size=2.3pt,
    color=teal!70!black,
    mark options={solid},
    error bars/.cd,
    x dir=both, x explicit,
    y dir=both, y explicit,
    error bar style={opacity=0.22},
] coordinates {
    (62.11,12.96) +- (0.10,0.78)
    (61.74,12.15) +- (0.11,0.66)
    (61.16,13.45) +- (0.36,0.99)
    (59.92,12.19) +- (0.23,1.18)
    (58.64,10.57) +- (0.23,0.46)
    (57.49,10.25) +- (0.40,0.59)
};

\addplot+[
    semithick,
    smooth,
    solid,
    color=orange!85!black,
    mark=none,
] coordinates {
    (56.38,7.36)
    (56.45,7.40)
    (56.51,7.43)
    (56.58,7.46)
    (56.64,7.49)
    (56.71,7.52)
    (56.78,7.56)
    (56.84,7.59)
    (56.91,7.62)
    (56.97,7.66)
    (57.04,7.69)
    (57.11,7.72)
    (57.17,7.76)
    (57.24,7.79)
    (57.30,7.83)
    (57.37,7.86)
    (57.44,7.90)
    (57.50,7.93)
    (57.57,7.97)
    (57.64,8.01)
    (57.70,8.04)
    (57.77,8.08)
    (57.83,8.12)
    (57.90,8.16)
    (57.97,8.19)
    (58.03,8.23)
    (58.10,8.27)
    (58.16,8.31)
    (58.23,8.35)
    (58.30,8.38)
    (58.36,8.42)
    (58.43,8.46)
    (58.49,8.50)
    (58.56,8.53)
    (58.63,8.57)
    (58.69,8.61)
    (58.76,8.64)
    (58.82,8.68)
    (58.89,8.71)
    (58.96,8.75)
    (59.02,8.78)
    (59.09,8.81)
    (59.15,8.83)
    (59.22,8.86)
    (59.29,8.88)
    (59.35,8.89)
    (59.42,8.91)
    (59.48,8.93)
    (59.55,8.96)
    (59.62,8.98)
    (59.68,8.99)
    (59.75,9.01)
    (59.82,9.03)
    (59.88,9.05)
    (59.95,9.07)
    (60.01,9.10)
    (60.08,9.13)
    (60.15,9.17)
    (60.21,9.20)
    (60.28,9.24)
    (60.34,9.28)
    (60.41,9.33)
    (60.48,9.37)
    (60.54,9.41)
    (60.61,9.46)
    (60.67,9.51)
    (60.74,9.55)
    (60.81,9.60)
    (60.87,9.65)
    (60.94,9.69)
    (61.00,9.74)
    (61.07,9.79)
    (61.14,9.84)
    (61.20,9.89)
    (61.27,9.93)
    (61.33,9.98)
    (61.40,10.03)
    (61.47,10.08)
    (61.53,10.13)
    (61.60,10.17)
    (61.66,10.22)
    (61.73,10.27)
    (61.80,10.32)
    (61.86,10.36)
    (61.93,10.41)
    (62.00,10.45)
    (62.06,10.50)
    (62.13,10.55)
    (62.19,10.59)
    (62.26,10.64)
    (62.33,10.68)
    (62.39,10.73)
    (62.46,10.77)
    (62.52,10.82)
    (62.59,10.86)
    (62.66,10.91)
    (62.72,10.95)
    (62.79,11.00)
    (62.85,11.04)
    (62.92,11.09)
};

\addplot+[
    only marks,
    mark=square*,
    mark size=1.9pt,
    color=orange!85!black,
    mark options={solid},
    error bars/.cd,
    x dir=both, x explicit,
    y dir=both, y explicit,
    error bar style={opacity=0.22},
] coordinates {
    (62.18,12.03) +- (0.55,0.89)
    (61.73,7.89) +- (0.40,0.52)
    (60.07,9.00) +- (0.48,2.86)
    (58.61,9.14) +- (0.14,1.43)
    (57.48,7.76) +- (0.24,0.12)
    (56.55,8.00) +- (0.14,3.00)
};

\addplot+[
    semithick,
    smooth,
    densely dashed,
    color=orange!85!black,
    mark=none,
] coordinates {
    (56.38,10.51)
    (56.45,10.56)
    (56.51,10.60)
    (56.58,10.64)
    (56.64,10.68)
    (56.71,10.73)
    (56.78,10.77)
    (56.84,10.81)
    (56.91,10.85)
    (56.97,10.89)
    (57.04,10.94)
    (57.11,10.98)
    (57.17,11.02)
    (57.24,11.06)
    (57.30,11.10)
    (57.37,11.14)
    (57.44,11.18)
    (57.50,11.22)
    (57.57,11.26)
    (57.64,11.30)
    (57.70,11.34)
    (57.77,11.38)
    (57.83,11.42)
    (57.90,11.46)
    (57.97,11.50)
    (58.03,11.53)
    (58.10,11.57)
    (58.16,11.61)
    (58.23,11.65)
    (58.30,11.68)
    (58.36,11.72)
    (58.43,11.76)
    (58.49,11.80)
    (58.56,11.83)
    (58.63,11.87)
    (58.69,11.91)
    (58.76,11.94)
    (58.82,11.98)
    (58.89,12.02)
    (58.96,12.05)
    (59.02,12.09)
    (59.09,12.12)
    (59.15,12.16)
    (59.22,12.20)
    (59.29,12.23)
    (59.35,12.27)
    (59.42,12.30)
    (59.48,12.33)
    (59.55,12.35)
    (59.62,12.38)
    (59.68,12.41)
    (59.75,12.44)
    (59.82,12.48)
    (59.88,12.52)
    (59.95,12.56)
    (60.01,12.61)
    (60.08,12.66)
    (60.15,12.70)
    (60.21,12.75)
    (60.28,12.80)
    (60.34,12.85)
    (60.41,12.90)
    (60.48,12.95)
    (60.54,12.99)
    (60.61,13.04)
    (60.67,13.09)
    (60.74,13.14)
    (60.81,13.18)
    (60.87,13.23)
    (60.94,13.28)
    (61.00,13.33)
    (61.07,13.37)
    (61.14,13.42)
    (61.20,13.47)
    (61.27,13.51)
    (61.33,13.56)
    (61.40,13.61)
    (61.47,13.65)
    (61.53,13.70)
    (61.60,13.75)
    (61.66,13.80)
    (61.73,13.84)
    (61.80,13.89)
    (61.86,13.94)
    (61.93,13.99)
    (62.00,14.03)
    (62.06,14.08)
    (62.13,14.13)
    (62.19,14.18)
    (62.26,14.23)
    (62.33,14.28)
    (62.39,14.33)
    (62.46,14.38)
    (62.52,14.43)
    (62.59,14.48)
    (62.66,14.53)
    (62.72,14.58)
    (62.79,14.63)
    (62.85,14.68)
    (62.92,14.73)
};

\addplot+[
    only marks,
    mark=diamond*,
    mark size=2.1pt,
    color=orange!85!black,
    mark options={solid},
    error bars/.cd,
    x dir=both, x explicit,
    y dir=both, y explicit,
    error bar style={opacity=0.22},
] coordinates {
    (62.18,13.81) +- (0.55,1.33)
    (61.73,13.48) +- (0.40,1.16)
    (60.07,12.92) +- (0.48,1.13)
    (58.61,12.49) +- (0.14,0.89)
    (57.48,10.89) +- (0.24,1.23)
    (56.55,10.75) +- (0.14,0.91)
};

\end{axis}
\end{tikzpicture}
\caption{MATH}
\label{fig:math_teacher_student}
\end{subfigure}
\caption{Utility--distillability frontiers under passive and adaptive evaluation. Adaptive evaluation shifts the frontier upward, revealing substantially more leakage than passive evaluation. Curves are smooth fits to seed-level points, and error bars show standard error of the mean across 3 seeds.}
\label{fig:teacher_student_tradeoff}
\end{figure*}

\paragraph{Adaptive evaluation reveals substantially more leakage than passive evaluation.}
Across datasets, defenses appear much more robust under passive evaluation than under adaptive evaluation. On \texttt{GSM8K}, ADS reduces student accuracy to $34\%$ under passive evaluation, but the adaptive student reaches $52\%$ against the same teacher; PoE shows the same pattern, rising from $39\%$ to $49\%$. \texttt{MATH} is noisier but directionally identical: both ADS and PoE rise from $9\%$ to $13\%$, ignoring decimals. These gaps are large enough to change the qualitative interpretation of the defenses: methods that look strong against a passive student can leak much more under an adaptive one.

\paragraph{Under adaptive evaluation, the gap between ADS and PoE narrows substantially.}
At comparable teacher accuracy, ADS usually looks stronger than PoE under passive evaluation. On \texttt{GSM8K}, for example, ADS reaches lower passive student accuracy than PoE ($34\%$ vs. $39\%$), but the difference is much smaller under adaptive evaluation ($52\%$ vs. $49\%$). 
The practical comparison changes further once runtime is included: on \texttt{GSM8K}, PoE increases teacher generation time by 64\% ($1.64\times$), whereas ADS increases it by $193\%$ ($2.93\times$). 
Overall, ADS appears stronger under passive evaluation at comparable teacher accuracy, but this advantage largely disappears under adaptive evaluation. At the representative \texttt{GSM8K} operating point in Table~\ref{tab:seed_aggregates}, PoE even yields slightly lower adaptive student accuracy than ADS while being substantially cheaper. Thus, the main conclusion is not that either method uniformly dominates, but that adaptive evaluation and runtime cost substantially change the practical comparison, with the latter favoring PoE.

\paragraph{PoE preserves higher-quality reasoning traces.}
Student accuracy does not capture whether the teacher's reasoning traces remain useful to a user. To measure this, we score trace quality with a rubric-based Claude Sonnet 4.6~\citep{anthropic2026claude-sonnet-4-6-system-card} judge that evaluates whether reasoning steps are identifiable, relevant, and checkable. The rubric scores traces from $1$ (no auditable reasoning) to $5$ (every step is identifiable, relevant, and checkable), and explicitly instructs the judge to ignore length and style so that verbose and concise traces are treated equally; see Appendix~\ref{sec:auditability-prompt} for the full prompt.
Figure~\ref{fig:trace-quality} shows that PoE produces more high-scoring traces than ADS on both datasets, while remaining much closer to the standard teacher. Qualitative examples in Appendix~\ref{app:trace_examples} show the same pattern. This suggests that PoE better preserves auditable reasoning while still reducing distillation leakage.
This is consistent with the design of PoE: it suppresses outputs with large teacher-student likelihood gaps while retaining those that are likely under the teacher, whereas ADS directly optimizes against the student's learning signal and may more strongly distort readable reasoning.
To sanity-check the rubric-based judge, we collected human ratings on $30$ traces sampled evenly across teacher types. The human rater was blind to which teacher produced each trace and used the same $1–5$ auditability rubric as the judge. Quadratic-weighted Cohen’s $\kappa$ between human and judge ratings was $0.76$, and the judge's mean absolute error relative to human ratings was $0.40$ rubric points, with mean signed error $+0.13$. This suggests that the judge is reasonably well calibrated for the auditability rubric, with no large systematic offset.

\paragraph{Defenses shift the frontier, but much less under adaptive attack.}
Both ADS and PoE improve the utility--distillability trade-off relative to the standard teacher, but the gain is much smaller against adaptive than against  passive evaluation. 
On \texttt{GSM8K}, ADS sacrifices about $5$ teacher-accuracy points and reduces student accuracy by $23$ points under passive evaluation, but by only $5$ points under adaptive evaluation. PoE shows the same flattening: for about a $6$-point teacher-accuracy loss, it reduces student accuracy by $18$ points under passive evaluation but only $7$ points under adaptive evaluation.
The threat-model implication is as follows: if the adversary is assumed to be passive, ADS looks  stronger, but under an adaptive adversary the gap between ADS and PoE narrows substantially, making PoE especially attractive once runtime and trace quality are taken into account.

\begin{figure}
    \centering
    \begin{subfigure}[b]{0.48\linewidth}
        \centering
        \includegraphics[width=0.95\linewidth]{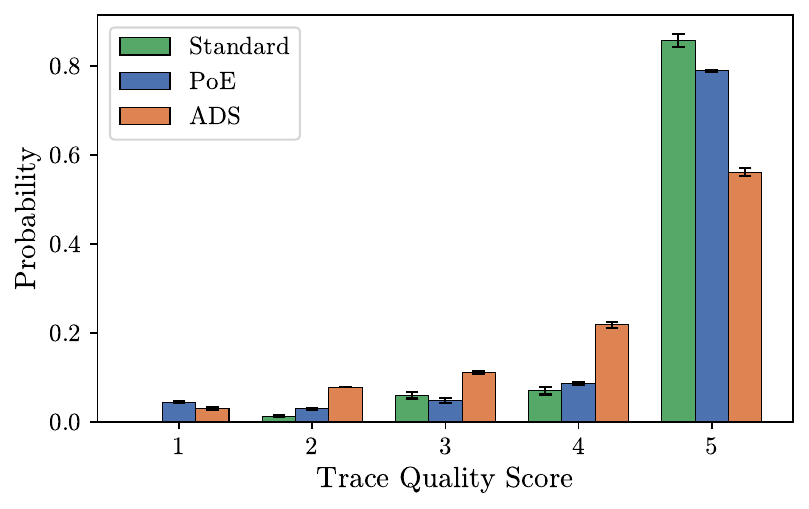}
        \caption{GSM8K}
    \end{subfigure}
    \hfill
    \begin{subfigure}[b]{0.48\linewidth}
        \centering
        \includegraphics[width=0.95\linewidth]{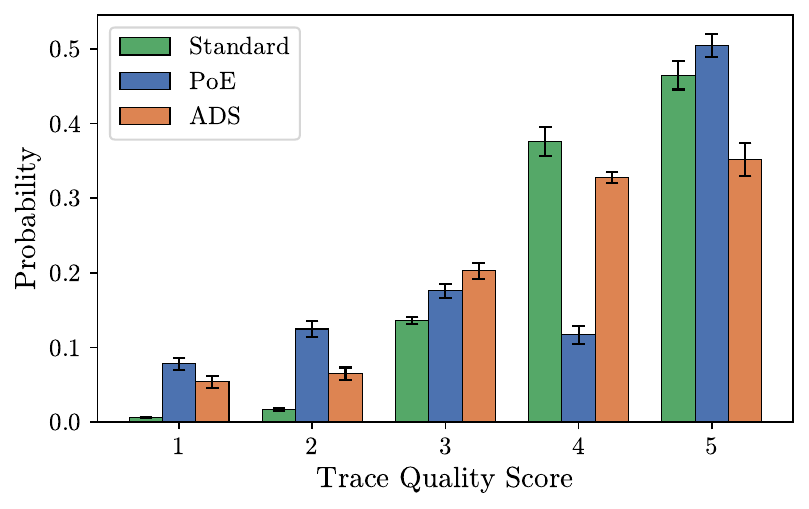}
        \caption{MATH}
    \end{subfigure}
    \caption{Trace-quality distributions under our Claude Sonnet 4.6 rubric-based judge. PoE produces more high-scoring traces than ADS on both datasets and remains closer to the standard teacher, suggesting better preservation of human-auditable reasoning.}
    \label{fig:trace-quality}
\end{figure}

\section{Conclusion and Future Work}

We introduced a game-theoretic view of distillation attacks and defenses in which a utility-constrained teacher interacts with an adaptive student. Once a value function is specified, the framework yields both an adaptive evaluation protocol for the student and a  template for teacher-side defense. Using a cheap proxy for example value, we further derived Product-of-Experts (PoE), a simple forward-pass-only defense that combines the teacher with a proxy student during generation.

Our findings suggest that progress on antidistillation should be judged against adaptive rather than passive students.
Across \texttt{GSM8K} and \texttt{MATH}, defenses that appear strong against passive students leak substantially more under adaptive evaluation. This changes the practical comparison between defenses: ADS appears stronger under passive evaluation, but its advantage narrows substantially under adaptive evaluation, while PoE is cheaper to run and preserves higher-quality reasoning traces.

Our framework deliberately focuses on adaptive reweighting, a simple and practical form of attacker adaptation that yields tractable response rules and implementable evaluation protocols. This focus is useful because it captures a capability that realistic distillers already have---choosing which released examples to emphasize---while remaining simple enough to connect defense design and evaluation in a single model. Natural extensions include richer adaptive attacks, broader model and task families, and stronger proxy choices for estimating distillation value. More broadly, we hope this perspective encourages future antidistillation work to specify not only how a teacher is defended, but also what adaptive student the defense is meant to withstand.

\bibliography{references}
\bibliographystyle{apalike}

\newpage
\appendix
\section*{Appendix}

\startcontents[appendix]
\begingroup
\setcounter{tocdepth}{2} %
\printcontents[appendix]{}{1}{\subsection*{Contents}}
\endgroup

\section{Proof of Theorem~\ref{thm:best-responses}}
\label{appx:proof}
We restate the theorem for convenience.

\bestresponsesthm*

Throughout this section, $\mathscr Y$ is finite for each context. Hence for each fixed context
$x\in\mathscr X$, all probability measures on $\mathscr Y$ can be identified with vectors in the
simplex $\Delta(\mathscr Y)$, all expectations are finite, and all optimization problems below
attain their optima on compact feasible sets.

For $\eta\in[0,+\infty)$ and a released policy $\pi_{\mathsf{rel}}$, define the exponential tilt:
\begin{equation}
    \pi_{\mathsf{eff},\eta}(y\mid x)
:=
\frac{\pi_{\mathsf{rel}}(y\mid x)e^{\eta v(x,y)}}
{\sum_{y'\in\mathscr Y}\pi_{\mathsf{rel}}(y'\mid x)e^{\eta v(x,y')}}, \quad \forall y \in \mathscr{Y}, \forall x \in \mathscr{X}.
\end{equation}
Likewise, for $\lambda\in[0,+\infty)$ and a nominal teacher $\pi_{\mathsf{ref}}$, define
\begin{equation}
    \pi_{\mathsf{rel},\lambda}(y\mid x)
:=
\frac{\pi_{\mathsf{ref}}(y\mid x)e^{-\lambda v(x,y)}}
{\sum_{y'\in\mathscr Y}\pi_{\mathsf{ref}}(y'\mid x)e^{-\lambda v(x,y')}}, \quad \forall y \in \mathscr{Y}, \forall x \in \mathscr{X}.
\end{equation}
For $\eta=+\infty$, we interpret $\pi_{\mathsf{eff},\eta}$ as an arbitrary policy supported on
\begin{equation}
    \arg\max_{y:\,\pi_{\mathsf{rel}}(y\mid x)>0} v(x,y),
\end{equation}
and for $\lambda=+\infty$, we interpret $\pi_{\mathsf{rel},\lambda}$ as an arbitrary policy supported on
\begin{equation}
    \arg\min_{y:\,\pi_{\mathsf{ref}}(y\mid x)>0} v(x,y).
\end{equation}
These are exactly the limiting policies of $\pi_{\mathsf{eff},\eta}$ and
$\pi_{\mathsf{rel},\lambda}$ as $\eta,\lambda\to +\infty$.

\subsection{Student best response}

We first prove the student part of the theorem.

\begin{proposition}
\label{prop:student-best-response}
Fix $\pi_{\mathsf{rel}}$ and $\rho\ge 0$. Then the optimization problem
\[
\sup\left\{
\mathbb E_{x\sim\mathscr D,\;y\sim\pi_{\mathsf{eff}}(\cdot\mid x)}[v(x,y)]
:
\pi_{\mathsf{eff}},\
\mathbb E_{x\sim\mathscr D}
\bigl[\mathrm{D}_{\mathrm{KL}}(\pi_{\mathsf{eff}}(\cdot\mid x)\|\pi_{\mathsf{rel}}(\cdot\mid x))\bigr]
\le \rho
\right\}
\]
admits an optimizer. Every optimizer is of the form $\pi_{\mathsf{eff},\eta}$ for some
$\eta\in[0,+\infty]$. Moreover, if the KL constraint is active, then $\eta<\infty$ and
\[
\mathbb E_{x\sim\mathscr D}
\bigl[\mathrm{D}_{\mathrm{KL}}(\pi_{\mathsf{eff},\eta}(\cdot\mid x)\|\pi_{\mathsf{rel}}(\cdot\mid x))\bigr]
=
\rho.
\]
\end{proposition}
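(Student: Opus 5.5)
Existence comes first. For each context $x$ the feasible set is a closed subset of the simplex $\Delta(\mathscr Y)$, since KL is lower semicontinuous in its first argument. To keep the argument finite-dimensional, I will treat $\mathscr D$ as having finite support; for general $\mathscr D$ the same steps go through pointwise in $x$ under the integrability assumptions of the footnote. The feasible set is then compact, the objective is linear and hence continuous, and a maximizer exists. Every feasible $\pi_{\mathsf{eff}}$ satisfies $\pi_{\mathsf{eff}}(\cdot\mid x)\ll\pi_{\mathsf{rel}}(\cdot\mid x)$, because otherwise the KL term is infinite. So only $y$ in $S_x:=\supp\,\pi_{\mathsf{rel}}(\cdot\mid x)$ matter.

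Next I split into two cases according to whether the constraint is active. If an optimizer has $\mathbb E_x \mathrm{D}_{\mathrm{KL}}<\rho$, then it must put all its mass on $\arg\max_{y\in S_x}v(x,y)$ for a.e.\ $x$. Otherwise I could move a small amount of mass $\delta$ toward the argmax. The objective rises by order $\delta$, and by continuity of KL in the first argument on the fixed support the KL changes only slightly, so the perturbed policy is still feasible. That would contradict optimality. This optimizer is exactly the $\eta=+\infty$ tilt.

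If the constraint is active, I use Lagrangian duality. The problem maximizes a linear function over a convex set. Slater's condition holds when $\rho>0$, since $\pi_{\mathsf{rel}}$ has KL equal to $0<\rho$; the case $\rho=0$ is trivial, giving $\pi_{\mathsf{eff}}=\pi_{\mathsf{rel}}=\pi_{\mathsf{eff},0}$. Hence there is a multiplier $\mu\ge0$ such that the optimizer maximizes
\[
\mathbb E_x\Bigl[\mathbb E_{\pi(\cdot\mid x)}v-\mu\,\mathrm{D}_{\mathrm{KL}}(\pi(\cdot\mid x)\|\pi_{\mathsf{rel}}(\cdot\mid x))\Bigr]
\]
and complementary slackness holds. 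For $\mu>0$ this problem separates over $x$. By the Gibbs variational principle (Donsker--Varadhan),
\[
\mathbb E_\pi v-\mu\,\mathrm{D}_{\mathrm{KL}}(\pi\|\pi_{\mathsf{rel}})=\mu\log Z_x(1/\mu)-\mu\,\mathrm{D}_{\mathrm{KL}}(\pi\|\pi_{\mathsf{eff},1/\mu}),
\]
where $Z_x$ is the normalizer. The unique maximizer is therefore $\pi_{\mathsf{eff},\eta}$ with $\eta=1/\mu<\infty$. Tightness of the budget is complementary slackness.

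The main obstacle is ruling out $\mu=0$ when the constraint is active, which is where the claim $\eta<\infty$ comes from. With $\mu=0$, the Lagrangian maximizers are exactly the policies supported on the argmax, so the optimizer would be an $\eta=\infty$ policy whose KL happens to equal $\rho$ exactly. I would handle this by showing that $g(\eta):=\mathbb E_x\mathrm{D}_{\mathrm{KL}}(\pi_{\mathsf{eff},\eta}\|\pi_{\mathsf{rel}})$ is continuous and nondecreasing, with derivative $\eta\,\mathbb E_x\mathrm{Var}_{\pi_{\mathsf{eff},\eta}}(v)$. Its limit is $g(\infty)=\mathbb E_x[-\log \pi_{\mathsf{rel}}(A_x\mid x)]$ on the argmax set $A_x$, and this limit is the minimal KL among argmax-supported policies. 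So if $\rho<g(\infty)$, some finite $\eta$ attains $g(\eta)=\rho$. If $\rho\ge g(\infty)$, the limiting tilt is feasible and optimal; a budget tight at exactly $g(\infty)$ is then the degenerate boundary case, which I would absorb into the convention $\eta=+\infty$. Uniqueness of $\eta$ in the non-degenerate case follows because $g$ is strictly increasing whenever $v$ is nonconstant on $S_x$ for a positive-mass set of $x$.
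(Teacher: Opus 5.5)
Your proposal is correct and follows the same skeleton as the paper's proof: compactness for existence, the Gibbs (Donsker--Varadhan) identity showing $\pi_{\mathsf{eff},\eta}$ is the unique maximizer of the penalized objective, Slater plus KKT when the constraint binds, and an argmax characterization when it is slack. Where you differ, you are more careful than the paper in three places:
\begin{itemize}
\item The paper asserts without argument that a slack optimizer is argmax-supported; you supply the perturbation argument.
\item The paper invokes Slater through the mere feasibility of $\pi_{\mathsf{rel}}$. Slater needs a strictly feasible point, which does not exist at $\rho=0$; you handle that case directly with $\eta=0$.
\item Most importantly, the paper passes from ``the constraint is active'' straight to ``KKT yields a finite $\eta>0$'' without excluding a zero multiplier.
\end{itemize}
Your observation closes the third gap. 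A zero multiplier forces an argmax-supported optimizer, whose expected KL is at least $g(\infty)=\mathbb E_x[-\log\pi_{\mathsf{rel}}(A_x\mid x)]$. So if $\rho<g(\infty)$, the multiplier is positive, the optimizer is $\pi_{\mathsf{eff},1/\mu}$, and complementary slackness gives tightness. This step does not need the continuity or the derivative of $g$; those only serve the uniqueness of $\eta$ asserted in the theorem.

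The one thing to sharpen is the degenerate case. Your analysis shows that ``active $\Rightarrow \eta<\infty$'' holds when ``active'' means \emph{binding}. Equivalently, $\rho<g(\infty)$, or the multiplier is positive, or the constrained value is strictly below $\mathbb E_x\max_{y\in S_x}v(x,y)$. The claim fails if ``active'' only means that an optimizer meets the budget with equality, and this happens in two ways.
\begin{itemize}
\item At $\rho=g(\infty)$, the unique optimizer is $\pi_{\mathsf{rel}}(\cdot\mid x)$ conditioned on $A_x$. It is tight, but it is not a finite tilt whenever $v(x,\cdot)$ is nonconstant on $S_x$ for a $\mathscr D$-positive set of $x$.
\item When the sets $A_x$ are not singletons on a positive-mass set, every $\rho$ between $g(\infty)$ and $\mathbb E_x[-\log\min_{y\in A_x}\pi_{\mathsf{rel}}(y\mid x)]$ also admits tight argmax-supported optimizers.
\end{itemize}
So the exceptional set is an interval, not only the point $\rho=g(\infty)$. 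Absorbing it into the convention $\eta=+\infty$ contradicts the statement as literally written. State explicitly that you prove the proposition with ``active'' read as ``binding''. That is also the reading the paper's own KKT step implicitly requires.
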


\begin{proof}
Since $\mathscr Y$ is finite for each context, the feasible set
\[
\left\{
\pi_{\mathsf{eff}}
:
\mathbb E_{x\sim\mathscr D}
\bigl[\mathrm{D}_{\mathrm{KL}}(\pi_{\mathsf{eff}}(\cdot\mid x)\|\pi_{\mathsf{rel}}(\cdot\mid x))\bigr]
\le \rho
\right\}
\]
is compact, and the objective
$\pi_{\mathsf{eff}}\mapsto \mathbb E_{x\sim\mathscr D,\;y\sim\pi_{\mathsf{eff}}(\cdot\mid x)}[v(x,y)]$
is continuous. Hence an optimizer exists.

We first treat the case $\eta<\infty$. Fix $x \in \mathscr{X}$ and define:
\[
Z_{\pi_{\mathsf{rel}}}(x,\eta):=
\sum_{y\in\mathscr Y}\pi_{\mathsf{rel}}(y\mid x)e^{\eta v(x,y)}.
\]
Then for any $\pi\ll \pi_{\mathsf{rel}}$, i.e., $\pi$ is absolutely continuous w.r.t. $\pi_{\mathsf{rel}}$, we have:
\begin{align*}
&\mathbb E_{x\sim\mathscr D}
\bigl[\mathrm{D}_{\mathrm{KL}}(\pi(\cdot\mid x)\|\pi_{\mathsf{eff},\eta}(\cdot\mid x))\bigr] 
=
\mathbb E_{x\sim\mathscr D}
\left[
\sum_{y\in\mathscr Y}\pi(y\mid x)\log\frac{\pi(y\mid x)}{\pi_{\mathsf{eff},\eta}(y\mid x)}
\right] \\
&=
\mathbb E_{x\sim\mathscr D}
\left[
\sum_{y\in\mathscr Y}\pi(y\mid x)\log\frac{\pi(y\mid x)}{\pi_{\mathsf{rel}}(y\mid x)e^{\eta v(x,y)}/Z_{\pi_{\mathsf{rel}}}(x,\eta)}
\right] \\
&=
\mathbb E_{x\sim\mathscr D}
\bigl[\mathrm{D}_{\mathrm{KL}}(\pi(\cdot\mid x)\|\pi_{\mathsf{rel}}(\cdot\mid x))\bigr]
-\eta\,\mathbb E_{x\sim\mathscr D,\;y\sim\pi(\cdot\mid x)}[v(x,y)]
+\mathbb E_{x\sim\mathscr D}\bigl[\log Z_{\pi_{\mathsf{rel}}}(x,\eta)\bigr].
\end{align*}
Rearranging gives
\begin{align*}
    &\mathbb E_{x\sim\mathscr D,\;y\sim\pi(\cdot\mid x)}[v(x,y)]
-\frac{1}{\eta}
\mathbb E_{x\sim\mathscr D}
\bigl[\mathrm{D}_{\mathrm{KL}}(\pi(\cdot\mid x)\|\pi_{\mathsf{rel}}(\cdot\mid x))\bigr]
\\
&=
\frac{1}{\eta}\mathbb E_{x\sim\mathscr D}\bigl[\log Z_{\pi_{\mathsf{rel}}}(x,\eta)\bigr]
-\frac{1}{\eta}
\mathbb E_{x\sim\mathscr D}
\bigl[\mathrm{D}_{\mathrm{KL}}(\pi(\cdot\mid x)\|\pi_{\mathsf{eff},\eta}(\cdot\mid x))\bigr].
\end{align*}
Therefore, for every $\eta\in(0,+\infty)$, the unique maximizer of the penalized problem
\[
\sup_{\pi}
\left\{
\mathbb E_{x\sim\mathscr D,\;y\sim\pi(\cdot\mid x)}[v(x,y)]
-\frac{1}{\eta}
\mathbb E_{x\sim\mathscr D}
\bigl[\mathrm{D}_{\mathrm{KL}}(\pi(\cdot\mid x)\|\pi_{\mathsf{rel}}(\cdot\mid x))\bigr]
\right\}
\]
is $\pi_{\mathsf{eff},\eta}$.

We now pass to the constrained problem. Since the feasible set has nonempty relative interior
(for example, $\pi=\pi_{\mathsf{rel}}$ is feasible), strong duality holds for this finite-dimensional
convex program. Thus there exists a Lagrange multiplier corresponding to the KL constraint. If the
constraint is active at the optimum, then the KKT conditions imply that any optimizer must maximize a
penalized objective of the form above for some finite $\eta>0$, hence must equal
$\pi_{\mathsf{eff},\eta}$, and complementary slackness gives
\[
\mathbb E_{x\sim\mathscr D}
\bigl[\mathrm{D}_{\mathrm{KL}}(\pi_{\mathsf{eff},\eta}(\cdot\mid x)\|\pi_{\mathsf{rel}}(\cdot\mid x))\bigr]
=
\rho.
\]

It remains to consider the inactive case. If the KL constraint is inactive, then the optimum is the
same as that of the unconstrained linear functional
$\pi\mapsto \mathbb E_{x\sim\mathscr D,\;y\sim\pi(\cdot\mid x)}[v(x,y)]$
over the simplex subject only to $\pi\ll \pi_{\mathsf{rel}}$. Such an optimizer is any policy
supported on
\[
\arg\max_{y:\,\pi_{\mathsf{rel}}(y\mid x)>0} v(x,y).
\]
This is exactly our definition of $\pi_{\mathsf{eff},\infty}$.

Hence every optimizer is of the form $\pi_{\mathsf{eff},\eta}$ for some $\eta\in[0,+\infty]$, with
$\eta<\infty$ and
\[
\mathbb E_{x\sim\mathscr D}
\bigl[\mathrm{D}_{\mathrm{KL}}(\pi_{\mathsf{eff},\eta}(\cdot\mid x)\|\pi_{\mathsf{rel}}(\cdot\mid x))\bigr]
=
\rho
\]
whenever the KL constraint is active.
\end{proof}

\subsection{Teacher best response}

The teacher part is the corresponding minimization problem.

\begin{proposition}
\label{prop:teacher-best-response}
Fix $\pi_{\mathsf{ref}}$ and $\varepsilon\ge 0$. Then the optimization problem
\[
\inf\left\{
\mathbb E_{x\sim\mathscr D,\;y\sim\pi_{\mathsf{rel}}(\cdot\mid x)}[v(x,y)]
:
\pi_{\mathsf{rel}},\
\mathbb E_{x\sim\mathscr D}
\bigl[\mathrm{D}_{\mathrm{KL}}(\pi_{\mathsf{rel}}(\cdot\mid x)\|\pi_{\mathsf{ref}}(\cdot\mid x))\bigr]
\le \varepsilon
\right\}
\]
admits an optimizer. Every optimizer is of the form $\pi_{\mathsf{rel},\lambda}$ for some
$\lambda\in[0,+\infty]$. Moreover, if the KL constraint is active, then $\lambda<\infty$ and
\[
\mathbb E_{x\sim\mathscr D}
\bigl[\mathrm{D}_{\mathrm{KL}}(\pi_{\mathsf{rel},\lambda}(\cdot\mid x)\|\pi_{\mathsf{ref}}(\cdot\mid x))\bigr]
=
\varepsilon.
\]
\end{proposition}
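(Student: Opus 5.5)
The plan is to reduce the teacher problem to the student problem already handled in Proposition~\ref{prop:student-best-response}, by replacing $v$ with $-v$ and $\pi_{\mathsf{rel}}$ with $\pi_{\mathsf{ref}}$. Minimizing $\mathbb E_{x\sim\mathscr D,\,y\sim\pi_{\mathsf{rel}}(\cdot\mid x)}[v(x,y)]$ is the same as maximizing $\mathbb E[-v(x,y)]$ over the same KL ball around $\pi_{\mathsf{ref}}$, with budget $\varepsilon$ in place of $\rho$. Under this substitution, the student tilt $\pi_{\mathsf{eff},\eta}$ computed with value $-v$ and base $\pi_{\mathsf{ref}}$ is exactly $\pi_{\mathsf{rel},\lambda}$ with $\lambda=\eta$. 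Likewise the $\eta=+\infty$ limit, which is supported on $\arg\max_{y:\pi_{\mathsf{ref}}(y\mid x)>0}(-v(x,y))$, is exactly $\pi_{\mathsf{rel},\infty}$, supported on $\arg\min_{y:\pi_{\mathsf{ref}}(y\mid x)>0}v(x,y)$. All three claims then transfer directly: an optimizer exists, every optimizer has the tilted form, and the budget is tight with $\lambda<\infty$ when the constraint is active.

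If a self-contained argument is preferred, I would mirror the student proof step by step.
\begin{enumerate*}[label=(\roman*)]
\item Existence follows because the feasible set is compact (finite $\mathscr Y$) and the objective is linear, hence continuous.
\item The key identity, for $\pi\ll\pi_{\mathsf{ref}}$ and $Z(x,\lambda)=\sum_y\pi_{\mathsf{ref}}(y\mid x)e^{-\lambda v(x,y)}$, is
$\mathbb E[\mathrm{D}_{\mathrm{KL}}(\pi\|\pi_{\mathsf{rel},\lambda})]=\mathbb E[\mathrm{D}_{\mathrm{KL}}(\pi\|\pi_{\mathsf{ref}})]+\lambda\,\mathbb E_{\pi}[v]+\mathbb E[\log Z(x,\lambda)]$. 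It shows that $\pi_{\mathsf{rel},\lambda}$ is the unique minimizer of the penalized objective $\mathbb E_\pi[v]+\frac1\lambda\mathbb E[\mathrm{D}_{\mathrm{KL}}(\pi\|\pi_{\mathsf{ref}})]$ for $\lambda\in(0,\infty)$.
\item Slater's condition holds because $\pi=\pi_{\mathsf{ref}}$ has zero KL. When $\varepsilon>0$ this makes it strictly feasible, and convex duality then supplies a multiplier $\mu\ge0$. If the constraint is active and $\mu>0$, set $\lambda=1/\mu$; KKT forces every optimizer to minimize the penalized objective, so it equals $\pi_{\mathsf{rel},\lambda}$, and complementary slackness gives KL exactly $\varepsilon$.
\item If the constraint is inactive, the problem reduces to minimizing a linear functional subject only to $\pi\ll\pi_{\mathsf{ref}}$, whose solutions are the policies supported on the argmin set, i.e.\ $\pi_{\mathsf{rel},\infty}$.
\end{enumerate*}

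The main obstacle is the degenerate bookkeeping in the duality step. One case is an active constraint that nonetheless has multiplier $\mu=0$. In that case the optimizer already minimizes the linear objective, so it lies in the $\lambda=\infty$ (argmin) family; I would fold it into the inactive case rather than claim $\lambda<\infty$. A second degenerate case is $\varepsilon=0$, where Slater fails but the feasible set is just $\{\pi_{\mathsf{ref}}\}=\pi_{\mathsf{rel},0}$, so it is handled separately. Uniqueness of $\lambda$ in the active case, as asserted in Theorem~\ref{thm:best-responses}, follows from strict monotonicity of $\lambda\mapsto\mathbb E[\mathrm{D}_{\mathrm{KL}}(\pi_{\mathsf{rel},\lambda}\|\pi_{\mathsf{ref}})]$. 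To see this, note that its derivative is $\lambda\,\mathbb E_x[\mathrm{Var}_{\pi_{\mathsf{rel},\lambda}}(v)]$, which is positive unless $v$ is constant on supports, and the constraint cannot be active in that case.
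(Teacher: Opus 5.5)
Your proposal is correct and follows essentially the paper's route. The paper's proof is literally the student-side argument transported to the minimization problem: the KL identity for the penalized objective, strong duality via $\pi_{\mathsf{ref}}$, KKT with complementary slackness, and the argmin-supported policy in the inactive case, which is what your $v\mapsto -v$ reduction or your mirrored argument gives. Your extra bookkeeping is a real refinement, not a deviation: the paper derives a finite $\lambda$ from KKT whenever the constraint is tight, and cites mere feasibility of $\pi_{\mathsf{ref}}$ for strong duality, thereby skipping both the $\varepsilon=0$ case where Slater fails and the zero-multiplier case (for example, uniform $\pi_{\mathsf{ref}}$ on three outputs with $v=(0,0,1)$ and $\varepsilon=\log(3/2)$ has the tight, argmin-supported unique optimizer $(1/2,1/2,0)$, so ``active'' must indeed be read as ``binding'', as you propose).
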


\begin{proof}
The argument is the same as for Proposition~\ref{prop:student-best-response}, applied to the
minimization problem. Since $\mathscr Y$ is finite for each context, the feasible set
\[
\left\{
\pi_{\mathsf{rel}}
:
\mathbb E_{x\sim\mathscr D}
\bigl[\mathrm{D}_{\mathrm{KL}}(\pi_{\mathsf{rel}}(\cdot\mid x)\|\pi_{\mathsf{ref}}(\cdot\mid x))\bigr]
\le \varepsilon
\right\}
\]
is compact, and the objective
$\pi_{\mathsf{rel}}\mapsto \mathbb E_{x\sim\mathscr D,\;y\sim\pi_{\mathsf{rel}}(\cdot\mid x)}[v(x,y)]$
is continuous, so an optimizer exists.

For $\lambda\in(0,+\infty)$, define
\[
Z_{\pi_{\mathsf{ref}}}(x,\lambda):=
\sum_{y\in\mathscr Y}\pi_{\mathsf{ref}}(y\mid x)e^{-\lambda v(x,y)}.
\]
Then for any $\pi\ll \pi_{\mathsf{ref}}$,
\begin{align*}
&\mathbb E_{x\sim\mathscr D}
\bigl[\mathrm{D}_{\mathrm{KL}}(\pi(\cdot\mid x)\|\pi_{\mathsf{rel},\lambda}(\cdot\mid x))\bigr] \\
&=
\mathbb E_{x\sim\mathscr D}
\left[
\sum_{y\in\mathscr Y}\pi(y\mid x)\log\frac{\pi(y\mid x)}{\pi_{\mathsf{rel},\lambda}(y\mid x)}
\right] \\
&=
\mathbb E_{x\sim\mathscr D}
\left[
\sum_{y\in\mathscr Y}\pi(y\mid x)\log\frac{\pi(y\mid x)}{\pi_{\mathsf{ref}}(y\mid x)e^{-\lambda v(x,y)}/Z_{\pi_{\mathsf{ref}}}(x,\lambda)}
\right] \\
&=
\mathbb E_{x\sim\mathscr D}
\bigl[\mathrm{D}_{\mathrm{KL}}(\pi(\cdot\mid x)\|\pi_{\mathsf{ref}}(\cdot\mid x))\bigr]
+\lambda\,\mathbb E_{x\sim\mathscr D,\;y\sim\pi(\cdot\mid x)}[v(x,y)]
+\mathbb E_{x\sim\mathscr D}\bigl[\log Z_{\pi_{\mathsf{ref}}}(x,\lambda)\bigr].
\end{align*}
Rearranging gives
\begin{align*}
  &\mathbb E_{x\sim\mathscr D,\;y\sim\pi(\cdot\mid x)}[v(x,y)]
+\frac{1}{\lambda}
\mathbb E_{x\sim\mathscr D}
\bigl[\mathrm{D}_{\mathrm{KL}}(\pi(\cdot\mid x)\|\pi_{\mathsf{ref}}(\cdot\mid x))\bigr]\\
&=
-\frac{1}{\lambda}\mathbb E_{x\sim\mathscr D}\bigl[\log Z_{\pi_{\mathsf{ref}}}(x,\lambda)\bigr]
+\frac{1}{\lambda}
\mathbb E_{x\sim\mathscr D}
\bigl[\mathrm{D}_{\mathrm{KL}}(\pi(\cdot\mid x)\|\pi_{\mathsf{rel},\lambda}(\cdot\mid x))\bigr].  
\end{align*}
Thus, for every $\lambda\in(0,+\infty)$, the unique minimizer of
\[
\inf_{\pi}
\left\{
\mathbb E_{x\sim\mathscr D,\;y\sim\pi(\cdot\mid x)}[v(x,y)]
+\frac{1}{\lambda}
\mathbb E_{x\sim\mathscr D}
\bigl[\mathrm{D}_{\mathrm{KL}}(\pi(\cdot\mid x)\|\pi_{\mathsf{ref}}(\cdot\mid x))\bigr]
\right\}
\]
is $\pi_{\mathsf{rel},\lambda}$.

As above, strong duality holds because the feasible set has nonempty relative interior
(for example, $\pi=\pi_{\mathsf{ref}}$ is feasible). Hence if the KL constraint is active, the KKT
conditions imply that any optimizer must equal $\pi_{\mathsf{rel},\lambda}$ for some finite
$\lambda>0$, and complementary slackness gives
\[
\mathbb E_{x\sim\mathscr D}
\bigl[\mathrm{D}_{\mathrm{KL}}(\pi_{\mathsf{rel},\lambda}(\cdot\mid x)\|\pi_{\mathsf{ref}}(\cdot\mid x))\bigr]
=
\varepsilon.
\]

If the KL constraint is inactive, then the optimum coincides with that of minimizing the linear
functional $\pi\mapsto \mathbb E_{x\sim\mathscr D,\;y\sim\pi(\cdot\mid x)}[v(x,y)]$
over all $\pi\ll \pi_{\mathsf{ref}}$. Any optimizer is then supported on
\[
\arg\min_{y:\,\pi_{\mathsf{ref}}(y\mid x)>0} v(x,y),
\]
which is exactly our definition of $\pi_{\mathsf{rel},\infty}$.

Hence every optimizer is of the form $\pi_{\mathsf{rel},\lambda}$ for some
$\lambda\in[0,+\infty]$, with $\lambda<\infty$ and
\[
\mathbb E_{x\sim\mathscr D}
\bigl[\mathrm{D}_{\mathrm{KL}}(\pi_{\mathsf{rel},\lambda}(\cdot\mid x)\|\pi_{\mathsf{ref}}(\cdot\mid x))\bigr]
=
\varepsilon
\]
whenever the KL constraint is active.
\end{proof}

\subsection{Proof of the theorem}

\begin{proof}[Proof of Theorem~\ref{thm:best-responses}]
The student statement is Proposition~\ref{prop:student-best-response}, and the teacher statement
is Proposition~\ref{prop:teacher-best-response}. The formulas displayed in the theorem are exactly
the definitions of $\pi_{\mathsf{eff},\eta}$ and $\pi_{\mathsf{rel},\lambda}$, with the cases
$\eta=\infty$ and $\lambda=\infty$ understood as the limiting policies described above.
\end{proof}

\section{Stackelberg equilibrium of the distillation game}
\label{app:stackelberg}

In this section, we analyze the full game
\[
\mathscr{V}(\varepsilon,\rho)
:=
\inf_{\pi_{\mathsf{rel}}\in\Pi_\varepsilon(\pi_{\mathsf{ref}})}
\sup_{\pi_{\mathsf{eff}}\in\Pi_\rho(\pi_{\mathsf{rel}})}
\mathbb E_{x\sim\mathscr D,\;y\sim\pi_{\mathsf{eff}}(\cdot\mid x)}[v(x,y)].
\]

For the finite-dimensional compactness arguments below, we work in the finite-\(\mathscr Z\) setting;
since \(\mathscr D\) is fixed throughout, this is equivalent to working with the induced joint laws
of the policies.

\subsection{Variational form of the student's problem}

For fixed $\pi_{\mathsf{rel}}$, define
\[
\Psi(\pi_{\mathsf{rel}})
:=
\sup_{\pi_{\mathsf{eff}}\in\Pi_\rho(\pi_{\mathsf{rel}})}
\mathbb E_{x\sim\mathscr D,\;y\sim\pi_{\mathsf{eff}}(\cdot\mid x)}[v(x,y)].
\]

\begin{lemma}
\label{lem:inner-dual}
For every $\pi_{\mathsf{rel}}$,
\[
\Psi(\pi_{\mathsf{rel}})
=
\inf_{\eta>0}
\frac{\rho+\log \mathbb E_{x\sim\mathscr D,\;y\sim\pi_{\mathsf{rel}}(\cdot\mid x)}[e^{\eta v(x,y)}]}{\eta},
\]
where the inactive-constraint case is recovered by the limit $\eta\to\infty$.
\end{lemma}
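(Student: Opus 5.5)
The plan is a Gibbs/Donsker--Varadhan duality argument, carried out on joint laws over the finite set $\mathscr Z$ as announced at the start of this appendix. Write $p$ for the joint law of $(x,y)$ with $x\sim\mathscr D$, $y\sim\pi_{\mathsf{rel}}(\cdot\mid x)$. Write $q$ for the joint law induced by a candidate $\pi_{\mathsf{eff}}$. Under this identification the budget reads $\mathrm{D}_{\mathrm{KL}}(q\|p)\le\rho$, and the objective is $\mathbb E_q[v]$. The proof has an upper bound (weak duality) and a matching lower bound (attainment).

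\emph{Upper bound.} For any $\eta>0$ and any $q\ll p$, Gibbs' variational inequality gives $\eta\,\mathbb E_q[v]\le \mathrm{D}_{\mathrm{KL}}(q\|p)+\log\mathbb E_p[e^{\eta v}]$. This is exactly the identity used in the proof of Proposition~\ref{prop:student-best-response}, now applied at the joint level: it is $\mathrm{D}_{\mathrm{KL}}(q\|p_\eta)\ge 0$ for the tilt $p_\eta\propto p\,e^{\eta v}$. For feasible $q$ the KL term is at most $\rho$. Dividing by $\eta$ and taking the supremum over $q$ and then the infimum over $\eta$ yields $\Psi(\pi_{\mathsf{rel}})\le\inf_{\eta>0}\eta^{-1}\bigl(\rho+\log\mathbb E_p[e^{\eta v}]\bigr)$.

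\emph{Matching lower bound.} Consider the tilted laws $p_\eta$. A direct computation gives $\mathrm{D}_{\mathrm{KL}}(p_\eta\|p)=\eta\,\mathbb E_{p_\eta}[v]-\log\mathbb E_p[e^{\eta v}]$. Setting $\Lambda(\eta)=\log\mathbb E_p[e^{\eta v}]$, this equals $\eta\Lambda'(\eta)-\Lambda(\eta)$. Its derivative is $\eta\Lambda''(\eta)\ge 0$, so it is continuous and nondecreasing in $\eta$. It starts at $0$ and increases to $\log\bigl(1/p(M)\bigr)$ as $\eta\to\infty$, where $M$ is the argmax set of $v$ on $\supp(p)$. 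The proof then splits into two cases.
\begin{itemize}
\item \emph{Active case}, $\rho<\log(1/p(M))$. The intermediate value theorem gives $\eta^\star$ with $\mathrm{D}_{\mathrm{KL}}(p_{\eta^\star}\|p)=\rho$. Rearranging the identity, $\mathbb E_{p_{\eta^\star}}[v]=(\rho+\Lambda(\eta^\star))/\eta^\star$. Since $p_{\eta^\star}$ is feasible, the upper bound is attained at $\eta^\star$. This is consistent with Proposition~\ref{prop:student-best-response}, which already tells us the optimizer is an exponential tilt with a tight budget.
\item \emph{Inactive case}, $\rho\ge\log(1/p(M))$. The law $p(\cdot\mid M)$ is feasible and achieves $\max_{\supp p}v$, which is trivially an upper bound for $\Psi$. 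Meanwhile $(\rho+\Lambda(\eta))/\eta\to\max_{\supp p} v$ as $\eta\to\infty$. So the infimum is approached in the limit $\eta\to\infty$, which is the ``recovered by the limit'' clause of the statement.
\end{itemize}

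\emph{Main obstacle: the identification with joint laws.} If $\pi_{\mathsf{eff}}$ must keep the $x$-marginal equal to $\mathscr D$, the constraint is the expected conditional KL. The tilt $p_\eta$ generally moves the $x$-marginal, so it is then not an admissible policy. Repeating the argument per context instead gives the dual $\inf_\eta\eta^{-1}\bigl(\rho+\mathbb E_{x}\log\mathbb E_{y\sim\pi_{\mathsf{rel}}}[e^{\eta v}]\bigr)$. By Jensen this is at most the displayed expression, so the stated formula is only an upper bound in that setting. I would therefore state explicitly that, following the appendix's convention, the student's choice is a joint law $q\ll p$ on $\mathscr Z$ with $\mathrm{D}_{\mathrm{KL}}(q\|p)\le\rho$; under that reading the two steps above constitute the full proof. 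As a fallback I would add a remark that in the fixed-marginal reading the same two steps, run conditionally on $x$, give the per-context dual above, and that the displayed formula then holds only as an upper bound.
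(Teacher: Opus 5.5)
Your argument has the same skeleton as the paper's proof. Weak duality comes from nonnegativity of the KL divergence to the exponential tilt. The lower bound comes from attainment at a tilt with tight budget, and the inactive case is recovered as $\eta\to\infty$. The one methodological difference is how $\eta^\star$ is found. You get it directly from monotonicity of $\eta\Lambda'(\eta)-\Lambda(\eta)$ and the intermediate value theorem. The paper instead imports it from the KKT argument of Proposition~\ref{prop:student-best-response}, so your route is self-contained. One edge case needs a patch. If $\rho=0$ and $v$ is not constant on $\supp(p)$, the intermediate value theorem returns $\eta^\star=0$, which is excluded. There, argue instead that $\Lambda(\eta)/\eta\to\Lambda'(0)=\mathbb E_p[v]=\Psi(\pi_{\mathsf{rel}})$ as $\eta\downarrow 0$.

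The substantive point is your ``main obstacle''. It is real, and the paper's proof falls into it.
\begin{itemize}
\item \emph{Where the paper slips.} The paper normalizes $\pi_{\mathsf{eff},\eta}$ separately for each $x$, yet its KL identity carries the term $\log\mathbb E_{x\sim\mathscr D,\,y\sim\pi_{\mathsf{rel}}(\cdot\mid x)}[e^{\eta v(x,y)}]$. For a per-context tilt the correct term is $\mathbb E_{x\sim\mathscr D}[\log Z_{\pi_{\mathsf{rel}}}(x,\eta)]$, exactly as in the proof of Proposition~\ref{prop:student-best-response}. The upper-bound direction survives by Jensen. The reverse inequality, which substitutes the per-context optimizer of Proposition~\ref{prop:student-best-response} into that identity, does not. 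Likewise, the inactive-case value $\max_{z:\,P_{\pi_{\mathsf{rel}}}(z)>0}v(z)$ is the joint-law value, not $\mathbb E_x\bigl[\max_{y\in\supp\pi_{\mathsf{rel}}(\cdot\mid x)}v(x,y)\bigr]$.
\item \emph{The failure is genuine.} Take two equiprobable contexts, with $v\equiv0$ on the first and $v\equiv1$ on the second. Every conditional policy has value $1/2$, so $\Psi(\pi_{\mathsf{rel}})=1/2$. The displayed expression equals $\tfrac12+\bigl(\rho+\log\cosh(\eta/2)\bigr)/\eta$, whose infimum over $\eta>0$ exceeds $1/2$ for every $\rho>0$.
\item \emph{A correction to your write-up.} The appendix identifies policies with their induced joint laws, whose $x$-marginal is $\mathscr D$. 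So $\{q\ll p:\mathrm{D}_{\mathrm{KL}}(q\|p)\le\rho\}$ is not that convention. It is a relaxation that, by the chain rule, also lets the student reweight prompts at cost $\mathrm{D}_{\mathrm{KL}}(q_X\|\mathscr D)$. For the same reason, your joint tilt $p_{\eta^\star}$ is not the per-context tilt of Proposition~\ref{prop:student-best-response}, so drop the ``consistent with'' remark.
\item \emph{How to present it.} State the relaxation as a modified threat model under which the displayed formula is exact. Promote your fallback to the main statement for the paper's $\Pi_\rho$: $\Psi(\pi_{\mathsf{rel}})=\inf_{\eta>0}\eta^{-1}\bigl(\rho+\mathbb E_x\log\mathbb E_{y\sim\pi_{\mathsf{rel}}(\cdot\mid x)}[e^{\eta v(x,y)}]\bigr)$. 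Your two steps prove it, using a single $\eta$ shared across contexts and replacing $\Lambda$ by $\eta\mapsto\mathbb E_x[\log Z_{\pi_{\mathsf{rel}}}(x,\eta)]$. The displayed formula is then only an upper bound.
\end{itemize}
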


\begin{proof}
Fix $\eta>0$ and define
\[
\pi_{\mathsf{eff},\eta}(y\mid x)
:=
\frac{\pi_{\mathsf{rel}}(y\mid x)e^{\eta v(x,y)}}
{\sum_{y'\in\mathscr Y}\pi_{\mathsf{rel}}(y'\mid x)e^{\eta v(x,y')}}.
\]
Then for every $\pi_{\mathsf{eff}}\ll \pi_{\mathsf{rel}}$,
\begin{align*}
&\mathbb E_{x\sim\mathscr D}
\bigl[\mathrm{D}_{\mathrm{KL}}(\pi_{\mathsf{eff}}(\cdot\mid x)\|\pi_{\mathsf{eff},\eta}(\cdot\mid x))\bigr]\\
&=
\mathbb E_{x\sim\mathscr D}
\bigl[\mathrm{D}_{\mathrm{KL}}(\pi_{\mathsf{eff}}(\cdot\mid x)\|\pi_{\mathsf{rel}}(\cdot\mid x))\bigr]
-\eta \mathbb E_{x\sim\mathscr D,\;y\sim\pi_{\mathsf{eff}}(\cdot\mid x)}[v(x,y)]
+ \log \mathbb E_{x\sim\mathscr D,\;y\sim\pi_{\mathsf{rel}}(\cdot\mid x)}[e^{\eta v(x,y)}].
\end{align*}
Rearranging gives
\begin{align*}
&\mathbb E_{x\sim\mathscr D,\;y\sim\pi_{\mathsf{eff}}(\cdot\mid x)}[v(x,y)]
-\frac{1}{\eta}\mathbb E_{x\sim\mathscr D}
\bigl[\mathrm{D}_{\mathrm{KL}}(\pi_{\mathsf{eff}}(\cdot\mid x)\|\pi_{\mathsf{rel}}(\cdot\mid x))\bigr]
\\
&=
\frac{1}{\eta}\log \mathbb E_{x\sim\mathscr D,\;y\sim\pi_{\mathsf{rel}}(\cdot\mid x)}[e^{\eta v(x,y)}]
-\frac{1}{\eta}\mathbb E_{x\sim\mathscr D}
\bigl[\mathrm{D}_{\mathrm{KL}}(\pi_{\mathsf{eff}}(\cdot\mid x)\|\pi_{\mathsf{eff},\eta}(\cdot\mid x))\bigr].
\end{align*}
Since $\mathbb E_{x\sim\mathscr D}
\bigl[\mathrm{D}_{\mathrm{KL}}(\pi_{\mathsf{eff}}(\cdot\mid x)\|\pi_{\mathsf{eff},\eta}(\cdot\mid x))\bigr]\ge 0$,
we obtain
\[
\mathbb E_{x\sim\mathscr D,\;y\sim\pi_{\mathsf{eff}}(\cdot\mid x)}[v(x,y)]
-\frac{1}{\eta}\mathbb E_{x\sim\mathscr D}
\bigl[\mathrm{D}_{\mathrm{KL}}(\pi_{\mathsf{eff}}(\cdot\mid x)\|\pi_{\mathsf{rel}}(\cdot\mid x))\bigr]
\le
\frac{1}{\eta}\log \mathbb E_{x\sim\mathscr D,\;y\sim\pi_{\mathsf{rel}}(\cdot\mid x)}[e^{\eta v(x,y)}],
\]
with equality if and only if $\pi_{\mathsf{eff}}=\pi_{\mathsf{eff},\eta}$.

Now let $\pi_{\mathsf{eff}}\in\Pi_\rho(\pi_{\mathsf{rel}})$. Then
\[
\mathbb E_{x\sim\mathscr D}
\bigl[\mathrm{D}_{\mathrm{KL}}(\pi_{\mathsf{eff}}(\cdot\mid x)\|\pi_{\mathsf{rel}}(\cdot\mid x))\bigr]
\le \rho,
\]
so
\[
\mathbb E_{x\sim\mathscr D,\;y\sim\pi_{\mathsf{eff}}(\cdot\mid x)}[v(x,y)]
\le
\frac{\rho+\log \mathbb E_{x\sim\mathscr D,\;y\sim\pi_{\mathsf{rel}}(\cdot\mid x)}[e^{\eta v(x,y)}]}{\eta}.
\]
Since this holds for every $\eta>0$,
\[
\Psi(\pi_{\mathsf{rel}})
\le
\inf_{\eta>0}
\frac{\rho+\log \mathbb E_{x\sim\mathscr D,\;y\sim\pi_{\mathsf{rel}}(\cdot\mid x)}[e^{\eta v(x,y)}]}{\eta}.
\]

For the reverse inequality, fix an optimizer $\pi_{\mathsf{eff}}^\star\in\Pi_\rho(\pi_{\mathsf{rel}})$,
which exists because $\mathscr Z$ is finite. If the KL constraint is active, then by
Proposition~\ref{prop:student-best-response} there exists $\eta^\star\in(0,+\infty)$ such that
\[
\pi_{\mathsf{eff}}^\star(y\mid x)
=
\frac{\pi_{\mathsf{rel}}(y\mid x)e^{\eta^\star v(x,y)}}
{\sum_{y'\in\mathscr Y}\pi_{\mathsf{rel}}(y'\mid x)e^{\eta^\star v(x,y')}}
\qquad\text{and}\qquad
\mathbb E_{x\sim\mathscr D}
\bigl[\mathrm{D}_{\mathrm{KL}}(\pi_{\mathsf{eff}}^\star(\cdot\mid x)\|\pi_{\mathsf{rel}}(\cdot\mid x))\bigr]
=\rho.
\]
Substituting $\pi_{\mathsf{eff}}^\star=\pi_{\mathsf{eff},\eta^\star}$ into the KL identity gives
\[
\rho
=
\eta^\star \mathbb E_{x\sim\mathscr D,\;y\sim\pi_{\mathsf{eff}}^\star(\cdot\mid x)}[v(x,y)]
- \log \mathbb E_{x\sim\mathscr D,\;y\sim\pi_{\mathsf{rel}}(\cdot\mid x)}[e^{\eta^\star v(x,y)}].
\]
Therefore
\[
\Psi(\pi_{\mathsf{rel}})
=
\mathbb E_{x\sim\mathscr D,\;y\sim\pi_{\mathsf{eff}}^\star(\cdot\mid x)}[v(x,y)]
=
\frac{\rho+\log \mathbb E_{x\sim\mathscr D,\;y\sim\pi_{\mathsf{rel}}(\cdot\mid x)}[e^{\eta^\star v(x,y)}]}{\eta^\star},
\]
so equality holds.

If the KL constraint is inactive, then by Proposition~\ref{prop:student-best-response} the
optimizer is supported on the maximizers of $v$ on $\supp(\pi_{\mathsf{rel}})$. In that case,
\[
\Psi(\pi_{\mathsf{rel}})
=
\max_{z:\,P_{\pi_{\mathsf{rel}}}(z)>0} v(z),
\]
and the same value is obtained as the limit of
\[
\frac{\rho+\log \mathbb E_{x\sim\mathscr D,\;y\sim\pi_{\mathsf{rel}}(\cdot\mid x)}[e^{\eta v(x,y)}]}{\eta}
\]
as $\eta\to\infty$. Hence the formula remains valid.
\end{proof}

\subsection{Teacher optimization for fixed \texorpdfstring{$\eta$}{eta}}

For fixed $\eta>0$, define
\[
g_\eta(x,y):=e^{\eta v(x,y)}.
\]
By Lemma~\ref{lem:inner-dual}, the teacher is led to the problem
\[
\inf_{\pi_{\mathsf{rel}}\in\Pi_\varepsilon(\pi_{\mathsf{ref}})}
\mathbb E_{x\sim\mathscr D,\;y\sim\pi_{\mathsf{rel}}(\cdot\mid x)}[g_\eta(x,y)].
\]

\begin{lemma}
\label{lem:teacher-fixed-eta}
Fix $\eta>0$. Then the problem
\[
\inf_{\pi_{\mathsf{rel}}\in\Pi_\varepsilon(\pi_{\mathsf{ref}})}
\mathbb E_{x\sim\mathscr D,\;y\sim\pi_{\mathsf{rel}}(\cdot\mid x)}[g_\eta(x,y)]
\]
admits an optimizer. Every optimizer is of the form
\[
\pi_{\mathsf{rel},\eta}^\star(y\mid x)\propto \pi_{\mathsf{ref}}(y\mid x)e^{-\lambda g_\eta(x,y)}
\]
for some $\lambda\in[0,+\infty]$. If the KL constraint is active, then $\lambda<\infty$ and
\[
\mathbb E_{x\sim\mathscr D}
\bigl[\mathrm{D}_{\mathrm{KL}}(\pi_{\mathsf{rel},\eta}^\star(\cdot\mid x)\|\pi_{\mathsf{ref}}(\cdot\mid x))\bigr]
=
\varepsilon.
\]
\end{lemma}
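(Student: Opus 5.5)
The plan is to reduce Lemma~\ref{lem:teacher-fixed-eta} directly to Proposition~\ref{prop:teacher-best-response}. That proposition was proved for an arbitrary real-valued value function $v$ on the finite space $\mathscr Z$. Nothing in its proof used any property of $v$ beyond finiteness of its values, which guarantees that the partition functions $Z_{\pi_{\mathsf{ref}}}(x,\lambda)$ are finite and positive. I would therefore apply it verbatim with $v$ replaced by $\tilde v:=g_\eta=e^{\eta v}$. This function is again a finite real-valued function on $\mathscr Z$, strictly positive and bounded since $\mathscr Z$ is finite. The objective $\pi_{\mathsf{rel}}\mapsto \mathbb E_{x\sim\mathscr D,\,y\sim\pi_{\mathsf{rel}}(\cdot\mid x)}[g_\eta(x,y)]$ is linear, hence continuous, in $\pi_{\mathsf{rel}}$.

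Concretely, the steps are as follows. First, existence: the feasible set $\Pi_\varepsilon(\pi_{\mathsf{ref}})$ is compact, because the expected KL is lower semicontinuous and the simplex is compact. Together with a continuous objective, this gives an optimizer. Second, the Gibbs variational identity with $\pi_{\mathsf{rel},\lambda}^{g}(y\mid x)\propto\pi_{\mathsf{ref}}(y\mid x)e^{-\lambda g_\eta(x,y)}$. For $\lambda\in(0,\infty)$, the same computation as in Proposition~\ref{prop:teacher-best-response} gives
\[
\mathbb E[g_\eta]+\tfrac1\lambda\mathbb E_x[\mathrm{D}_{\mathrm{KL}}(\pi\|\pi_{\mathsf{ref}})]
=-\tfrac1\lambda\mathbb E_x[\log Z^{g}(x,\lambda)]+\tfrac1\lambda\mathbb E_x[\mathrm{D}_{\mathrm{KL}}(\pi\|\pi^{g}_{\mathsf{rel},\lambda})].
\]
Hence $\pi^g_{\mathsf{rel},\lambda}$ is the unique minimizer of the penalized problem. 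Third, Lagrangian duality: Slater's condition holds, since $\pi_{\mathsf{ref}}$ is feasible with KL equal to $0$, strictly less than $\varepsilon$ when $\varepsilon>0$. Then, when the constraint is active, KKT yields a finite multiplier $1/\lambda$ and the optimizer equals $\pi^g_{\mathsf{rel},\lambda}$. Complementary slackness then gives the expected KL equal to $\varepsilon$. Fourth, in the inactive case, the problem reduces to minimizing a linear functional over policies absolutely continuous with respect to $\pi_{\mathsf{ref}}$. Its minimizers are supported on $\arg\min_{y:\pi_{\mathsf{ref}}(y\mid x)>0} g_\eta(x,y)$. This set equals $\arg\min v$ because $t\mapsto e^{\eta t}$ is strictly increasing, and it matches the $\lambda=\infty$ convention.

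The main obstacle is the KKT step: an active constraint must force a \emph{strictly positive, finite} multiplier. A zero multiplier would mean the unconstrained linear minimum is attained inside the KL ball, which contradicts activity unless the objective is constant on the support. In that constant case every feasible policy is optimal, so I would either note that $\lambda=0$ (i.e.\ $\pi_{\mathsf{ref}}$ itself) is among the optimizers or restrict the ``every optimizer'' claim accordingly. The degenerate case $\varepsilon=0$, where Slater fails, is handled directly: the feasible set is $\{\pi_{\mathsf{ref}}\}$, corresponding to $\lambda=0$. Uniqueness of $\lambda$ under activity follows because $\lambda\mapsto\mathbb E_x[\mathrm{D}_{\mathrm{KL}}(\pi^g_{\mathsf{rel},\lambda}\|\pi_{\mathsf{ref}})]$ is strictly increasing whenever $g_\eta$ is nonconstant on the support. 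This holds because its derivative is $\lambda$ times a conditional variance of $g_\eta$ under the tilted law.
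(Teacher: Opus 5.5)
Your proposal is correct and is the same argument as the paper's, whose proof is just the one-line reduction to Proposition~\ref{prop:teacher-best-response} with $v$ replaced by the bounded function $g_\eta=e^{\eta v}$ on the finite space $\mathscr Z$. Your extra handling of Slater's condition (with $\varepsilon=0$ treated separately), the constant-objective degeneracy, and uniqueness of $\lambda$ goes beyond what the paper writes. One caveat: ruling out a zero multiplier requires reading ``active'' as ``the constraint changes the optimal value.'' Otherwise the unconstrained minimizer could sit exactly on the boundary of the KL ball (the $\lambda=\infty$ case), and the paper relies on the same implicit convention.
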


\begin{proof}
This is Proposition~\ref{prop:teacher-best-response} applied to the function $g_\eta$. Since
$\mathscr Z$ is finite, $g_\eta$ is bounded, so all assumptions are satisfied.
\end{proof}

\subsection{Reduction of the game}

We now reduce the full game to an optimization over $\pi_{\mathsf{rel}}$ and $\eta$.

\begin{lemma}
\label{lem:game-reduction}
The game value satisfies
\[
\mathscr{V}(\varepsilon,\rho)
=
\inf_{\pi_{\mathsf{rel}}\in\Pi_\varepsilon(\pi_{\mathsf{ref}})}
\inf_{\eta>0}
\frac{\rho+\log \mathbb E_{x\sim\mathscr D,\;y\sim\pi_{\mathsf{rel}}(\cdot\mid x)}[e^{\eta v(x,y)}]}{\eta}.
\]
Equivalently,
\[
\mathscr{V}(\varepsilon,\rho)
=
\inf_{\eta>0}
\inf_{\pi_{\mathsf{rel}}\in\Pi_\varepsilon(\pi_{\mathsf{ref}})}
\frac{\rho+\log \mathbb E_{x\sim\mathscr D,\;y\sim\pi_{\mathsf{rel}}(\cdot\mid x)}[e^{\eta v(x,y)}]}{\eta}, \quad \forall y \in \mathscr{Y}.
\]
For each fixed $\eta>0$, the inner minimization over $\pi_{\mathsf{rel}}$ is solved by a policy of the form
\[
\pi_{\mathsf{rel},\eta}^\star(y\mid x)\propto \pi_{\mathsf{ref}}(y\mid x)e^{-\lambda e^{\eta v(x,y)}}
\]
for some $\lambda\in[0,+\infty]$.
\end{lemma}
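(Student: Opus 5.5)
The plan has three parts, in order: substitute the dual form of the inner problem, swap the two infima, and, for fixed $\eta$, reduce the inner minimization to Lemma~\ref{lem:teacher-fixed-eta}.

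\textbf{Step 1: substitute the dual form.} By definition, $\mathscr V(\varepsilon,\rho)=\inf_{\pi_{\mathsf{rel}}\in\Pi_\varepsilon(\pi_{\mathsf{ref}})}\Psi(\pi_{\mathsf{rel}})$. Lemma~\ref{lem:inner-dual} gives a formula for $\Psi(\pi_{\mathsf{rel}})$ valid for every $\pi_{\mathsf{rel}}$, including the inactive case, which is covered by the limit $\eta\to\infty$. Substituting this formula pointwise in $\pi_{\mathsf{rel}}$ yields the first displayed identity directly. No minimax theorem is needed, since only infima appear.

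\textbf{Step 2: exchange the infima.} The second identity follows because two infima always commute:
\[
\inf_{a\in A}\inf_{b\in B}F(a,b)=\inf_{(a,b)\in A\times B}F(a,b)=\inf_{b\in B}\inf_{a\in A}F(a,b).
\]
I will state this for $F(\pi_{\mathsf{rel}},\eta)=\bigl(\rho+\log\mathbb E_{\pi_{\mathsf{rel}}}[e^{\eta v}]\bigr)/\eta$, which is real-valued on $\Pi_\varepsilon(\pi_{\mathsf{ref}})\times(0,\infty)$.

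\textbf{Step 3: characterize the inner minimizer for fixed $\eta$.} Fix $\eta>0$. The maps $t\mapsto(\rho+\log t)/\eta$ are strictly increasing on $(0,\infty)$. Hence minimizing $F(\cdot,\eta)$ over $\Pi_\varepsilon(\pi_{\mathsf{ref}})$ is equivalent to minimizing the linear functional $\pi_{\mathsf{rel}}\mapsto\mathbb E_{x\sim\mathscr D,\,y\sim\pi_{\mathsf{rel}}}[g_\eta(x,y)]$, and the two problems have the same set of minimizers. Lemma~\ref{lem:teacher-fixed-eta} then gives existence of a minimizer and shows that every minimizer has the form $\pi_{\mathsf{ref}}(y\mid x)e^{-\lambda e^{\eta v(x,y)}}$, up to normalization, for some $\lambda\in[0,\infty]$. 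This is the final claim.

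\textbf{Main obstacle.} The only delicate point is in Step 1: the inactive-constraint case, where the infimum over $\eta$ is attained only as $\eta\to\infty$. I will handle it by arguing entirely with infima rather than minimizers, so that non-attainment is harmless.

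One subtlety concerns the inner expectation $\mathbb E_{x\sim\mathscr D,\,y\sim\pi_{\mathsf{rel}}}[e^{\eta v}]$, which is taken over the joint law on $\mathscr Z$. The monotone reduction in Step 3 requires exactly this joint form, so I will check that Lemma~\ref{lem:teacher-fixed-eta} uses the same objective; it does.
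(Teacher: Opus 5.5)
Your proposal is correct given the paper's earlier lemmas and follows the paper's own proof step for step. You substitute Lemma~\ref{lem:inner-dual} pointwise in $\pi_{\mathsf{rel}}$, merge the two infima over $\Pi_\varepsilon(\pi_{\mathsf{ref}})\times(0,\infty)$, and use strict monotonicity of $t\mapsto(\rho+\log t)/\eta$ to pass the fixed-$\eta$ problem to Lemma~\ref{lem:teacher-fixed-eta}. Be aware, though, that the joint-expectation form $\log\mathbb E_{x\sim\mathscr D,\,y\sim\pi_{\mathsf{rel}}(\cdot\mid x)}[e^{\eta v(x,y)}]$, which you rightly identify as essential for Step~3, holds only if the student may also reweight prompts: with $x\sim\mathscr D$ held fixed, the KL identity in Lemma~\ref{lem:inner-dual} should carry $\mathbb E_{x\sim\mathscr D}\bigl[\log\sum_{y}\pi_{\mathsf{rel}}(y\mid x)e^{\eta v(x,y)}\bigr]$, as in Proposition~\ref{prop:student-best-response}, so the first identity can fail (take two equiprobable prompts, $\pi_{\mathsf{ref}}$ uniform on two outputs, $v$ equal to $1$ on a single prompt--output pair and $0$ elsewhere, $\varepsilon=0$, $\rho=\tfrac12\log 2$; the game value is $\tfrac12$ while the right-hand side is about $0.645$), and Step~3's monotone reduction to a single linear functional is then unavailable, a defect inherited from the paper's lemma rather than introduced by your reasoning.
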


\begin{proof}
Substituting Lemma~\ref{lem:inner-dual} into the definition of $\mathscr{V}(\varepsilon,\rho)$ gives
\[
\mathscr{V}(\varepsilon,\rho)
=
\inf_{\pi_{\mathsf{rel}}\in\Pi_\varepsilon(\pi_{\mathsf{ref}})}
\inf_{\eta>0}
\frac{\rho+\log \mathbb E_{x\sim\mathscr D,\;y\sim\pi_{\mathsf{rel}}(\cdot\mid x)}[e^{\eta v(x,y)}]}{\eta}.
\]
The second identity follows because both sides equal the infimum of the same function over the
product set $\Pi_\varepsilon(\pi_{\mathsf{ref}})\times(0,+\infty)$.

For fixed $\eta>0$, the map
\[
\pi_{\mathsf{rel}}\mapsto
\frac{\rho+\log \mathbb E_{x\sim\mathscr D,\;y\sim\pi_{\mathsf{rel}}(\cdot\mid x)}[e^{\eta v(x,y)}]}{\eta}
\]
is minimized by minimizing
\[
\mathbb E_{x\sim\mathscr D,\;y\sim\pi_{\mathsf{rel}}(\cdot\mid x)}[e^{\eta v(x,y)}],
\]
since $\rho$ and $\eta$ are fixed and the logarithm is strictly increasing. The stated form of the optimizer then follows from
Lemma~\ref{lem:teacher-fixed-eta}.
\end{proof}

\subsection{Existence and coupled form of an optimal pair}

We now show that the game admits an optimal teacher--student pair, and we identify its coupled form
when the relevant constraints are active.

\begin{theorem}
\label{thm:optimal-pair}
There exists $\pi_{\mathsf{rel}}^\star\in\Pi_\varepsilon(\pi_{\mathsf{ref}})$ such that
\[
\mathscr{V}(\varepsilon,\rho)
=
\sup_{\pi_{\mathsf{eff}}\in\Pi_\rho(\pi_{\mathsf{rel}}^\star)}
\mathbb E_{x\sim\mathscr D,\;y\sim\pi_{\mathsf{eff}}(\cdot\mid x)}[v(x,y)].
\]
Moreover, there exists $\pi_{\mathsf{eff}}^\star\in\Pi_\rho(\pi_{\mathsf{rel}}^\star)$ such that
\[
\mathscr{V}(\varepsilon,\rho)=\mathbb E_{x\sim\mathscr D,\;y\sim\pi_{\mathsf{eff}}^\star(\cdot\mid x)}[v(x,y)].
\]

If the outer infimum in Lemma~\ref{lem:game-reduction} is attained at some $\eta^\star>0$, and if
both the student and teacher KL constraints are active at the corresponding optimizers, then
\[
\pi_{\mathsf{eff}}^\star(y\mid x)\propto \pi_{\mathsf{rel}}^\star(y\mid x)e^{\eta^\star v(x,y)},
\qquad
\pi_{\mathsf{rel}}^\star(y\mid x)\propto \pi_{\mathsf{ref}}(y\mid x)e^{-\lambda^\star e^{\eta^\star v(x,y)}}
\]
for some finite $\eta^\star,\lambda^\star>0$, with
\[
\mathbb E_{x\sim\mathscr D}
\bigl[\mathrm{D}_{\mathrm{KL}}(\pi_{\mathsf{eff}}^\star(\cdot\mid x)\|\pi_{\mathsf{rel}}^\star(\cdot\mid x))\bigr]
=
\rho,
\qquad
\mathbb E_{x\sim\mathscr D}
\bigl[\mathrm{D}_{\mathrm{KL}}(\pi_{\mathsf{rel}}^\star(\cdot\mid x)\|\pi_{\mathsf{ref}}(\cdot\mid x))\bigr]
=
\varepsilon.
\]
\end{theorem}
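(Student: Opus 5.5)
The plan has three parts: existence of an optimal teacher, existence of an optimal student against it, and the coupled first-order form.

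\textbf{Step 1: an optimal teacher exists.} Consider the map $\Psi(\pi_{\mathsf{rel}}) = \sup_{\pi_{\mathsf{eff}}\in\Pi_\rho(\pi_{\mathsf{rel}})} \mathbb E[v]$ on the compact set $\Pi_\varepsilon(\pi_{\mathsf{ref}})$. This set is compact because $\mathscr Z$ is finite and KL is lower semicontinuous. We then show $\Psi$ is lower semicontinuous and invoke Weierstrass.

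The cleanest route uses Lemma~\ref{lem:inner-dual}. For each fixed $\eta>0$, the map $\pi_{\mathsf{rel}}\mapsto(\rho+\log\mathbb E_{\pi_{\mathsf{rel}}}[e^{\eta v}])/\eta$ is continuous, since it is the logarithm of a linear functional that is bounded below by $e^{\eta\min v}>0$. An infimum of continuous functions is only upper semicontinuous, so that alone does not give lower semicontinuity.

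Instead, we obtain lower semicontinuity directly from the primal. Let $\pi^n_{\mathsf{rel}}\to\pi_{\mathsf{rel}}$, and take any feasible $\pi_{\mathsf{eff}}$ for $\pi_{\mathsf{rel}}$ with $\mathrm{KL}<\rho$. The quantity $\mathrm{KL}(\pi_{\mathsf{eff}}\|\pi^n_{\mathsf{rel}})$ converges to $\mathrm{KL}(\pi_{\mathsf{eff}}\|\pi_{\mathsf{rel}})$ when $\supp\pi_{\mathsf{eff}}\subseteq\supp\pi_{\mathsf{rel}}$, because the limit's positive entries stay bounded away from zero. So $\pi_{\mathsf{eff}}$ is eventually feasible for $\pi^n_{\mathsf{rel}}$. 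Mixing with $\pi_{\mathsf{rel}}$ itself reduces strictly feasible points to the general case, by convexity of KL and continuity of the value in the mixing weight. This gives $\liminf\Psi(\pi^n_{\mathsf{rel}})\ge\Psi(\pi_{\mathsf{rel}})$. Hence a minimizer $\pi_{\mathsf{rel}}^\star$ exists and attains $\mathscr V(\varepsilon,\rho)$.

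\textbf{Step 2: an optimal student exists.} Apply Proposition~\ref{prop:student-best-response} to $\pi_{\mathsf{rel}}^\star$. It yields an optimizer $\pi_{\mathsf{eff}}^\star\in\Pi_\rho(\pi_{\mathsf{rel}}^\star)$ with value $\Psi(\pi_{\mathsf{rel}}^\star)=\mathscr V$.

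\textbf{Step 3: the coupled form.} Assume the outer infimum in Lemma~\ref{lem:game-reduction} is attained at $\eta^\star$, with $\mathscr V$ equal to $(\rho+\log\mathbb E_{\pi^\star_{\mathsf{rel}}}[e^{\eta^\star v}])/\eta^\star$. Here we choose $\pi^\star_{\mathsf{rel}}$ as the minimizer of $\mathbb E_{\pi_{\mathsf{rel}}}[e^{\eta^\star v}]$ over $\Pi_\varepsilon(\pi_{\mathsf{ref}})$. By Lemma~\ref{lem:teacher-fixed-eta}, with the teacher constraint active, this minimizer has the form $\pi_{\mathsf{ref}}e^{-\lambda^\star e^{\eta^\star v}}$ with finite $\lambda^\star$ and KL equal to $\varepsilon$. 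Because this pair attains the double infimum, $\pi^\star_{\mathsf{rel}}$ is also an outer minimizer of $\Psi$, using the identity $\Psi=\inf_\eta(\cdot)$.

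Next apply Proposition~\ref{prop:student-best-response} to this $\pi^\star_{\mathsf{rel}}$ with the student constraint active. This gives $\pi_{\mathsf{eff}}^\star\propto\pi_{\mathsf{rel}}^\star e^{\tilde\eta v}$ with KL equal to $\rho$ and $\tilde\eta$ finite.

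It remains to show $\tilde\eta=\eta^\star$. The proof of Lemma~\ref{lem:inner-dual} shows the dual infimum is attained at $\tilde\eta$. The function $\eta\mapsto(\rho+\log M(\eta))/\eta$, with $M(\eta)$ the moment generating function under $\pi^\star_{\mathsf{rel}}$, has derivative zero exactly where $\eta\,\partial_\eta\log M-\log M=\rho$. The left side equals $\mathrm{KL}(\pi_{\mathsf{eff},\eta}\|\pi_{\mathsf{rel}})$, which is strictly increasing in $\eta$ when $v$ is nonconstant on the support. So the minimizer is unique, and since $\eta^\star$ also minimizes this function for $\pi^\star_{\mathsf{rel}}$, we get $\tilde\eta=\eta^\star$. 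If $v$ is constant on the support, the student constraint cannot be active, which contradicts the assumption.

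\textbf{Main obstacle.} Step 3 is the delicate part. We must ensure the teacher minimizer for fixed $\eta^\star$ is the same $\pi^\star_{\mathsf{rel}}$ used in the existence claim, and we need the monotonicity-of-KL argument to identify the student's dual variable with $\eta^\star$.
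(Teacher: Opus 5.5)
Your proof is correct. It keeps the paper's skeleton: compactness plus semicontinuity for the teacher, Proposition~\ref{prop:student-best-response} for the student, then Lemmas~\ref{lem:game-reduction}, \ref{lem:teacher-fixed-eta} and \ref{lem:inner-dual} for the coupled form. It handles the two load-bearing steps differently.

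\emph{Existence.} The paper proves two-sided continuity of $\Psi=\inf_{\eta>0}F(\cdot,\eta)$, where $F(\pi_{\mathsf{rel}},\eta)=(\rho+\log\mathbb E_{\pi_{\mathsf{rel}}}[e^{\eta v}])/\eta$. It truncates $\eta$ to a compact interval, with an error it claims is uniform in $\pi_{\mathsf{rel}}$, and then uses joint continuity of $F$ on the compact strip.

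You prove only lower semicontinuity, directly from the primal. A fixed strictly feasible student stays feasible for nearby $\pi_{\mathsf{rel}}$, and mixing with $\pi_{\mathsf{rel}}$ shows that strictly feasible students approach the full supremum. That is all Weierstrass needs, and it is more elementary. It also avoids the truncation step. The paper justifies that step only through the bound $\Psi\le\max v$, but it actually needs something more, such as monotonicity of $\eta\mapsto\eta^{-1}\log\mathbb E_{\pi_{\mathsf{rel}}}[e^{\eta v}]$, plus a lower cutoff on $\eta$. The paper's route buys full continuity, which you could also recover by pairing your bound with the upper semicontinuity you noted from the dual formula.

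\emph{Coupled form.} The paper simply asserts two things: that the teacher minimizer at fixed $\eta^\star$ is the optimal teacher, and that the student's tilt against it uses exactly $\eta^\star$. You prove both:
\begin{itemize}
\item the first via $\mathscr V\le\Psi(\pi_{\mathsf{rel}}^\star)\le F(\pi_{\mathsf{rel}}^\star,\eta^\star)=\mathscr V$;
\item the second via uniqueness of the minimizer of $F(\pi_{\mathsf{rel}}^\star,\cdot)$, whose critical-point equation is $\mathrm{D}_{\mathrm{KL}}(\pi_{\mathsf{eff},\eta}\|\pi_{\mathsf{rel}}^\star)=\rho$ with a left side strictly increasing in $\eta$.
\end{itemize}
Your Step~3 is therefore more complete than the paper's.

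One small repair: for $\rho=0$ there are no strictly feasible students, so your semicontinuity argument is vacuous and that case needs separate handling. There $\Psi(\pi_{\mathsf{rel}})=\mathbb E_{x\sim\mathscr D,\,y\sim\pi_{\mathsf{rel}}(\cdot\mid x)}[v(x,y)]$ is linear, hence continuous. The paper's truncation at $\rho/\delta$ also degenerates in this case.
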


\begin{proof}
Define
\[
\Phi(\pi_{\mathsf{rel}})
:=
\sup_{\pi_{\mathsf{eff}}\in\Pi_\rho(\pi_{\mathsf{rel}})}
\mathbb E_{x\sim\mathscr D,\;y\sim\pi_{\mathsf{eff}}(\cdot\mid x)}[v(x,y)].
\]
Since $\mathscr Z$ is finite, the feasible set $\Pi_\varepsilon(\pi_{\mathsf{ref}})$ is compact. We first
show that $\Phi$ is continuous on $\Pi_\varepsilon(\pi_{\mathsf{ref}})$.

By Lemma~\ref{lem:inner-dual},
\[
\Phi(\pi_{\mathsf{rel}})=\inf_{\eta>0} F(\pi_{\mathsf{rel}},\eta),
\qquad
F(\pi_{\mathsf{rel}},\eta):=\frac{\rho+\log \mathbb E_{x\sim\mathscr D,\;y\sim\pi_{\mathsf{rel}}(\cdot\mid x)}[e^{\eta v(x,y)}]}{\eta}.
\]
Because $\mathscr Z$ is finite, $v$ is bounded. Let
\[
m:=\min_{z\in\mathscr Z} v(z), \qquad M:=\max_{z\in\mathscr Z} v(z).
\]
Then for every $\pi_{\mathsf{rel}}$ and every $\eta>0$,
\[
\eta m \le \log \mathbb E_{x\sim\mathscr D,\;y\sim\pi_{\mathsf{rel}}(\cdot\mid x)}[e^{\eta v(x,y)}] \le \eta M,
\]
so
\[
m+\frac{\rho}{\eta}\le F(\pi_{\mathsf{rel}},\eta)\le M+\frac{\rho}{\eta}.
\]
Hence
\[
\lim_{\eta\to\infty}\sup_{\pi_{\mathsf{rel}}\in\Pi_\varepsilon(\pi_{\mathsf{ref}})} F(\pi_{\mathsf{rel}},\eta)\le M.
\]
More importantly, for every $\pi_{\mathsf{rel}}$,
\[
\Phi(\pi_{\mathsf{rel}})\le \max_{z:\,P_{\pi_{\mathsf{rel}}}(z)>0} v(z)\le M.
\]
Therefore, given any $\delta>0$, choosing $\eta\ge \rho/\delta$ yields
\[
F(\pi_{\mathsf{rel}},\eta)\le M+\delta
\]
uniformly in $\pi_{\mathsf{rel}}$. Since $\Phi(\pi_{\mathsf{rel}})\le M$, it follows that
\[
\Phi(\pi_{\mathsf{rel}})=\inf_{0<\eta\le \rho/\delta} F(\pi_{\mathsf{rel}},\eta)
\]
up to an error at most $\delta$, uniformly in $\pi_{\mathsf{rel}}$. On each compact strip
$\Pi_\varepsilon(\pi_{\mathsf{ref}})\times[\eta_0,\eta_1]$ with $0<\eta_0\le \eta_1<\infty$, the function
$F$ is continuous in $(\pi_{\mathsf{rel}},\eta)$. It follows that $\Phi$ is the uniform limit of continuous
functions on the compact set $\Pi_\varepsilon(\pi_{\mathsf{ref}})$, hence is continuous.

Since $\Phi$ is continuous and $\Pi_\varepsilon(\pi_{\mathsf{ref}})$ is compact, there exists
$\pi_{\mathsf{rel}}^\star\in\Pi_\varepsilon(\pi_{\mathsf{ref}})$ such that
\[
\mathscr{V}(\varepsilon,\rho)=\Phi(\pi_{\mathsf{rel}}^\star).
\]
For this fixed $\pi_{\mathsf{rel}}^\star$, the set $\Pi_\rho(\pi_{\mathsf{rel}}^\star)$ is compact, and
$\pi_{\mathsf{eff}}\mapsto \mathbb E_{x\sim\mathscr D,\;y\sim\pi_{\mathsf{eff}}(\cdot\mid x)}[v(x,y)]$
is continuous, so there exists
$\pi_{\mathsf{eff}}^\star\in\Pi_\rho(\pi_{\mathsf{rel}}^\star)$ such that
\[
\Phi(\pi_{\mathsf{rel}}^\star)=
\mathbb E_{x\sim\mathscr D,\;y\sim\pi_{\mathsf{eff}}^\star(\cdot\mid x)}[v(x,y)].
\]
This proves the existence of an optimal pair.

Assume now that the outer infimum in Lemma~\ref{lem:game-reduction} is attained at some
$\eta^\star>0$, and that both KL constraints are active at the corresponding optimizers. By
Lemma~\ref{lem:game-reduction} and Lemma~\ref{lem:teacher-fixed-eta}, the teacher optimizer for
that fixed $\eta^\star$ has the form
\[
\pi_{\mathsf{rel}}^\star(y\mid x)\propto \pi_{\mathsf{ref}}(y\mid x)e^{-\lambda^\star e^{\eta^\star v(x,y)}}
\]
for some finite $\lambda^\star>0$, and activity of the teacher constraint gives
\[
\mathbb E_{x\sim\mathscr D}
\bigl[\mathrm{D}_{\mathrm{KL}}(\pi_{\mathsf{rel}}^\star(\cdot\mid x)\|\pi_{\mathsf{ref}}(\cdot\mid x))\bigr]
=
\varepsilon.
\]
Given this $\pi_{\mathsf{rel}}^\star$, Lemma~\ref{lem:inner-dual} implies that the student's
best response has the form
\[
\pi_{\mathsf{eff}}^\star(y\mid x)\propto \pi_{\mathsf{rel}}^\star(y\mid x)e^{\eta^\star v(x,y)},
\]
and activity of the student constraint gives
\[
\mathbb E_{x\sim\mathscr D}
\bigl[\mathrm{D}_{\mathrm{KL}}(\pi_{\mathsf{eff}}^\star(\cdot\mid x)\|\pi_{\mathsf{rel}}^\star(\cdot\mid x))\bigr]
=
\rho.
\]
This proves the coupled form.
\end{proof}

\section{Additional experiments and experimental details} \label{sec:add-expr}
In this section, we provide details of our experiments as well as some additional experiments. 

{
\subsection{Experimental details}
\label{app:experimental-details}

\paragraph{Datasets and splits.}
We evaluate on GSM8K and MATH. For GSM8K, we use the standard training split and materialize
disjoint train, holdout, and test subsets with fixed random seeds; the runs underlying the main
figures use 5,238/2,246/1,319 prompts for train/holdout/test, respectively. For MATH, we
construct a pooled dataset from the standard subject-area subsets and materialize 5,000/2,500/5,000
train/holdout/test examples.

\paragraph{Teacher, proxy, and student models.}
Across all experiments, the reference teacher is DeepSeek-R1-Distill-Qwen-7B, the proxy student
used for teacher-side is Qwen2.5-3B, and the final distilled student is Llama-3.2-3B. We use \texttt{bfloat16} inference/training with
\texttt{flash\_attention\_2}~\citep{dao2022flashattention}. This proxy/final-student mismatch is intentional and follows prior work such as~\cite{savani2025antidistillation}: the teacher is designed
against an internal proxy, but evaluated against a different attacker architecture, so the setting is a
transfer-style antidistillation evaluation rather than a matched white-box one.

\paragraph{Teacher generation.}
Teacher traces are generated with temperature 0.6 and top-$p$ 0.95, with prompts truncated to 512
tokens and generations capped at 1,024 new tokens. We also append a short answer-forcing suffix
that allocates up to 32 additional tokens for the boxed final answer. For the defense families, we
compare a standard teacher, ADS, and PoE. In the representative tables, we report GSM8K ADS at
$\lambda=0.052$ and PoE at $\gamma=0.65$, and MATH ADS at $\lambda=0.08$ and PoE at
$\gamma=0.75$; broader sweep results are shown in the trade-off plots.

For GSM8K, ADS is swept on $\lambda \in \{0.043, 0.044, 0.046, 0.047, 0.049, 0.050, 0.052, 0.055, 0.060\}$ and PoE is swept on $\gamma \in \{0.50, 0.55, 0.60, 0.63, 0.65, 0.68, 0.70, 0.73, 0.75\}$. For MATH, we report the sweeps: ADS is swept on $\lambda \in \{0.07, 0.075, 0.0775, 0.08, 0.085, 0.0875, 0.09\}$ and PoE is swept on $\gamma \in \{0.70, 0.725, 0.75, 0.80, 0.825, 0.85, 0.90\}$.

\paragraph{Student distillation.}
All students are trained with the same supervised fine-tuning pipeline and differ only in how
training traces are weighted. We fine-tune the final student with LoRA for 3 epochs using rank
128, LoRA $\alpha=128$, dropout 0, learning rate $5\times 10^{-4}$, weight decay 0.1, gradient
clipping at 1.0, cosine decay with warmup ratio 0.03, per-device batch size 6, gradient
accumulation 4, and maximum sequence length 2,048. The passive student uses uniform weighting
over released traces. The adaptive student uses sharpness parameter $\beta_s=0.5$. 

\paragraph{Reporting and hardware.}
Our main metrics are teacher accuracy and downstream student accuracy under passive and adaptive
evaluation. All reported numbers are averaged over three seeds (123, 456, 789), and we report
standard errors across seeds. Experiments were run on NVIDIA B200, H200, and A100 machines.
}

\subsection{Distillation using frontier models}

We also evaluate whether traces from commercial frontier models (GPT-5.4 mini, Claude Sonnet 4.6, Gemini 3 Flash Preview) can serve as effective
supervision for a smaller open-weight student. Figure~\ref{fig:frontier} shows that fine-tuning
Llama-3.2-3B on frontier-model traces yields dramatic gains over both the untuned base
model and a baseline trained only on question--answer pairs. On GSM8K, the student
improves from $0.76\%$ in the base setting and $3.43\%$ with Q\&A-only supervision to
$51.75\%$, $53.57\%$, and $54.78\%$ when distilled from Gemini, Claude, and GPT traces,
respectively. On MATH, the same pattern holds, with accuracy rising from $1.01\%$ at base
and $11.25\%$ with Q\&A-only supervision to $12.09\%$, $16.33\%$, and $14.83\%$ using
Gemini, Claude, and GPT traces. These results show that frontier-model traces contain
substantial reusable learning signal beyond final answers alone, while also indicating
meaningful variation across source models in how effectively that signal transfers to the
student.

\begin{figure}
    \centering
    \begin{subfigure}[b]{0.48\linewidth}
        \centering
        \includegraphics[width=\linewidth]{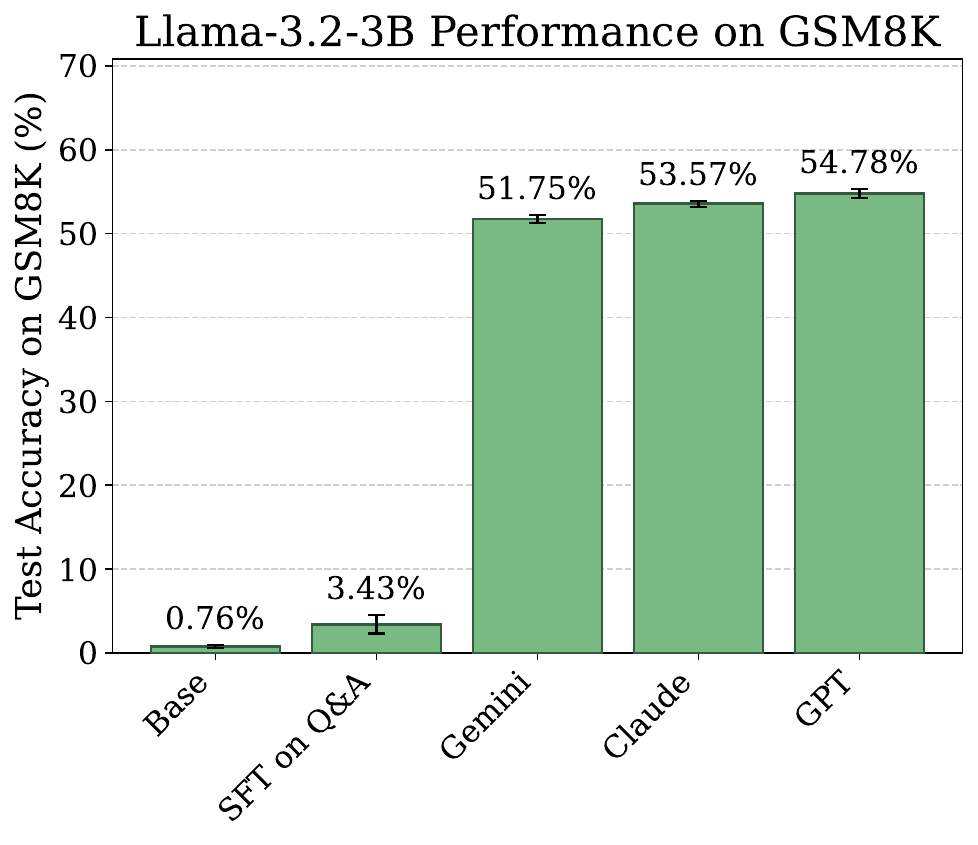}
    \end{subfigure}
    \hfill
    \begin{subfigure}[b]{0.48\linewidth}
        \centering
        \includegraphics[width=\linewidth]{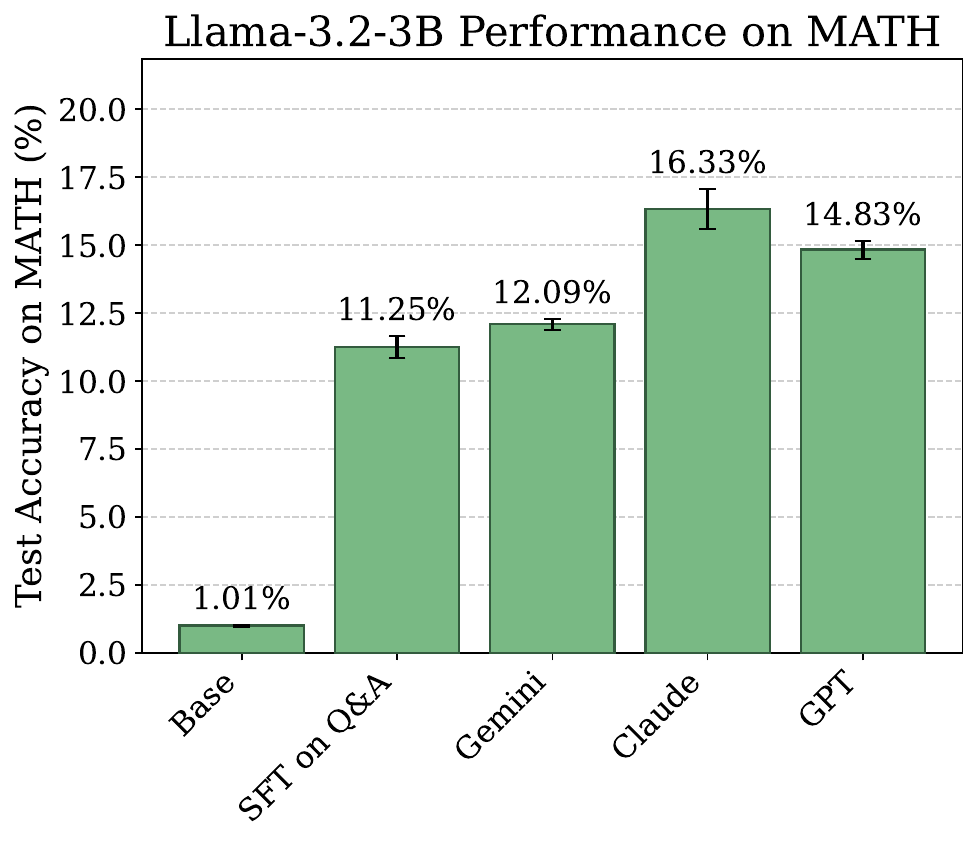}
    \end{subfigure}
    \caption{Student accuracy after distillation from commercial frontier-model (GPT-5.4 mini, Claude Sonnet 4.6, Gemini 3 Flash Preview) outputs under different exposure formats. Summary traces provide less information than full reasoning traces but still yield nontrivial student recovery, while answer-only outputs are generally less informative. Results are averaged over 3 seeds; error bars show standard error of the mean.}
    \label{fig:frontier}
\end{figure}

\subsection{LLM-as-a-Judge prompt for trace auditability}
\label{sec:auditability-prompt}

To evaluate the utility of the generated reasoning traces, we employ Claude Sonnet 4.6 as an automated judge. We provide the judge with the original math problem and the model's generated trace, instructing it to evaluate the trace based strictly on its \emph{auditability}. The exact system prompt and rubric provided to the model are detailed below.
Each trace is judged independently, without revealing the defense type. The judge receives the problem statement, final answer, and reasoning trace. The order of examples is randomized before scoring.

\vspace{1em}
\begin{tcolorbox}[colback=gray!5!white, colframe=gray!50!black, title=\textbf{System Prompt: Trace Auditability}, arc=2mm]
\small
You are a judge for the quality of reasoning traces. You will be given a math problem and a reasoning trace. Score the trace on a 1--5 scale using the rubric below.

\vspace{0.5em}
\textbf{\#\#\# Rubric: Auditability of Reasoning}

\vspace{0.5em}
\textbf{Criteria}: Can a human reader audit the reasoning in this trace? For each step, assess whether it is (a) identifiable as a distinct reasoning step, (b) relevant to the problem, and (c) checkable against the previous step or the problem statement. Verbose but coherent traces and concise traces should score equally if both are auditable.

\begin{itemize}[leftmargin=*, itemsep=0.5em]
    \item \textbf{Score 1} -- No auditable reasoning. The trace is dominated by non-reasoning content (repeated tokens, garbled text, or irrelevant material). A reader cannot identify any checkable steps.
    
    \item \textbf{Score 2} -- Few auditable steps. Some reasoning is present but is interleaved with substantial non-reasoning content (filler tokens, irrelevant tangents, or corrupted text) making it unclear which parts to trust.
    
    \item \textbf{Score 3} -- Partially auditable. The core reasoning steps are identifiable but some steps lack clear justification, or the reader must ignore non-trivial amounts of irrelevant content to follow the argument.
    
    \item \textbf{Score 4} -- Mostly auditable. Nearly every step is identifiable, relevant, and checkable. Minor issues such as a redundant restatement or one unclear transition do not prevent verification.
    
    \item \textbf{Score 5} -- Fully auditable. Every step is identifiable, relevant to the problem, and independently checkable. The trace may be long or short --- what matters is that no step requires guesswork to verify.
\end{itemize}

\end{tcolorbox}

To sanity-check the automated judge, we collected human ratings for 30 traces,
sampled evenly across teacher conditions: 10 Standard, 10 PoE, and 10 ADS.
Human raters used the same 1--5 trace-auditability rubric as the automated judge.
We report quadratic-weighted Cohen's $\kappa$, a chance-corrected agreement
measure for ordinal ratings that penalizes larger disagreements more heavily~\citep{cohen1968weighted,fleiss1973equivalence}. We also report mean absolute error,
$\frac{1}{n}\sum_{i=1}^n |j_i-h_i|$, and mean signed error,
$\frac{1}{n}\sum_{i=1}^n (j_i-h_i)$, where $h_i$ is the human rating and
$j_i$ is the judge rating. The judge achieves $\kappa=0.76$, mean absolute
error $0.40$, and mean signed error $+0.13$, indicating strong agreement with
only a small upward offset on the 1--5 scale.

\subsection{Comparison of the traces} \label{app:trace_examples}

We show representative examples illustrating common trace-quality patterns observed in the judged samples; these examples are not used for quantitative evaluation.

\paragraph{Semantic comparison.}
Across these examples, the unperturbed teacher is the most verbose and self-reflective. It tends to narrate its thinking in a conversational way, with false starts, alternative solution paths, and repeated verification of intermediate calculations. PoE is consistently more compressed and polished. They usually preserve the full logical structure of the solution, but strip away most of the hesitation, backtracking, and repetition seen in the unperturbed traces. ADS is less consistent: in several examples it remains correct but becomes longer and more mechanical than PoE. We show quantitatively in Fig.~\ref{fig:word-count} that PoE often produces significantly shorter traces.

\paragraph{Style of reasoning comparison.}
An interesting pattern is that PoE does not merely shorten the original trace; it sometimes finds a different mathematically valid route to the same answer. For example, in the functional-equation problem, the unperturbed and ADS traces first derive the general form $f(x)=k/x$, whereas the PoE trace directly substitutes $x=30$ and $y=\tfrac{4}{3}$ into the identity to obtain $f(40)$ immediately. Similarly, in the countries problem, PoE rewrites the relationships in a more compact directional form (``Patrick is half of Zack,'' ``Joseph is a third of Patrick,'' etc.), rather than following the more standard forward chain used by the unperturbed and ADS traces. These examples suggest that PoE sometimes preserves correctness while changing the structure of the reasoning itself.
\begin{figure}
    \centering
    \begin{subfigure}[b]{0.48\linewidth}
        \centering
        \includegraphics[width=\linewidth]{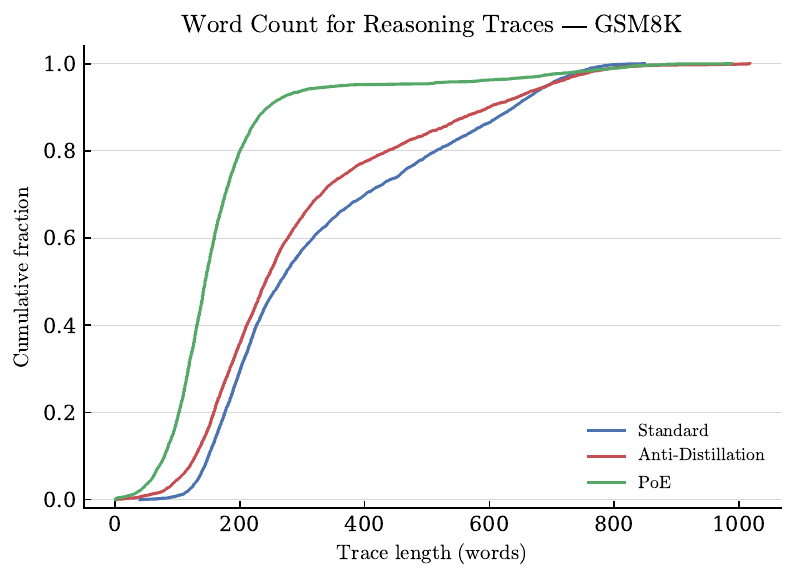}
    \end{subfigure}
    \hfill
    \begin{subfigure}[b]{0.48\linewidth}
        \centering
        \includegraphics[width=\linewidth]{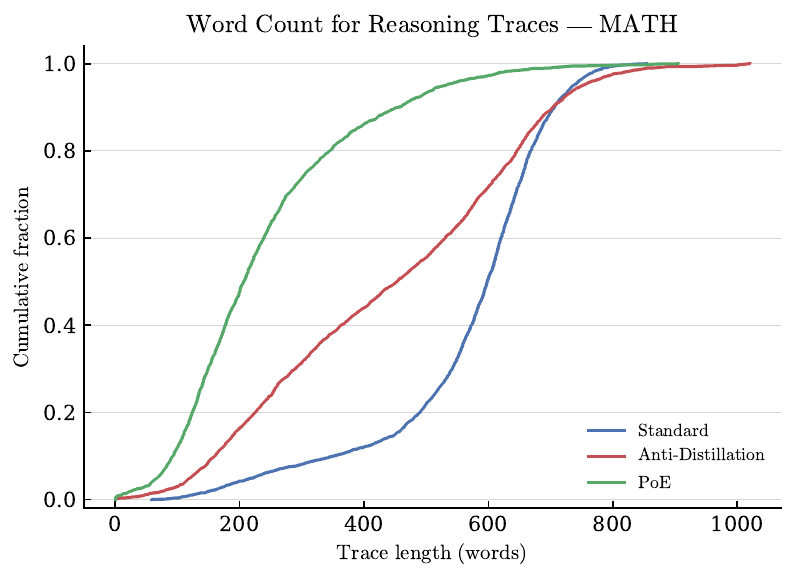}
    \end{subfigure}
    \caption{Comparison of word counts for various reasoning traces. We only consider the reasoning traces such that the final answer is correct.}
    \label{fig:word-count}
\end{figure}

\paragraph{Example of weighting chosen by the strategic student.} In Fig. \ref{fig:strategic_student_filter}, we show two examples where the strategic student assigns very different scores. As can be seen, the strategic student prefers clean traces compared to faulty traces.
\label{sec:trace-example}

\begin{questionbox}
\footnotesize
Eight people fit in a row on an airplane, and there are $12$ rows. Only $\frac{3}{4}$ of the seats in each row are allowed to be seated. How many seats will not be occupied on that plane?
\end{questionbox}

\begin{teacherbox}{Unperturbed Teacher Trace}
\footnotesize
Okay, so I've got this problem here about airplane seats and how many won't be occupied. Let me try to break it down step by step. Sometimes these problems can be a bit confusing, but if I take it slow, I think I can figure it out.

First, the problem says that eight people fit in a row on an airplane, and there are 12 rows. Then, it mentions that only 3/4 of the seats in each row are allowed to be seated. The question is asking how many seats will not be occupied on the plane.

Hmm, okay. So let's parse this out. There are 12 rows, each with 8 seats. But only 3/4 of those seats are occupied. So I guess we need to find out the total number of seats first and then subtract the number of occupied seats to find out how many are unoccupied.

Wait, actually, the problem is asking for how many seats will not be occupied. So that's the total seats minus the occupied seats. But maybe there's another way to look at it, like directly calculating the unoccupied seats? Let me think.

Alternatively, since only 3/4 of the seats are occupied, that means 1 - 3/4 = 1/4 of the seats are not occupied. So maybe I can just calculate 1/4 of the total seats to get the number of unoccupied seats. That seems simpler.

But let me go through it the long way to make sure I don't make any mistakes. So, first, I need to find the total number of seats on the plane. Since there are 12 rows and each row has 8 seats, the total number of seats is 12 multiplied by 8.

Let me write that down: Total seats = 12 rows * 8 seats/row. So 12 * 8. Hmm, 12 times 8. I think 12 times 8 is 96. Let me check that: 10 times 8 is 80, plus 2 times 8 is 16, so 80 + 16 is 96. Yeah, that's right.

So total seats on the plane are 96. Now, the next part is that only 3/4 of the seats in each row are allowed to be seated. So that means that in each row, instead of all 8 seats being occupied, only 3/4 of them are. So the number of occupied seats per row would be 3/4 * 8.

Let me write that down: Occupied seats per row = 3/4 * 8. So 3/4 of 8. Well, 8 divided by 4 is 2, and then multiplied by 3 is 6. So 6 seats per row are occupied.

Alternatively, since 3/4 is 0.75, I could do 8 * 0.75, which is also 6. Either way, it comes out to 6 occupied seats per row.

Now, since there are 12 rows, each with 6 occupied seats, the total number of occupied seats is 12 * 6. Let me calculate that: 12 times 6 is 72. So total occupied seats are 72.

But the question is asking how many seats will not be occupied. So that's the total seats minus the occupied seats. So 96 - 72 is 24. So 24 seats will not be occupied.

Alternatively, as I thought earlier, since 3/4 of the seats are occupied, that means 1/4 are not. So 1/4 of 96 is 24. Yep, same answer.

So either way I look at it, whether calculating occupied seats and subtracting or calculating unoccupied seats directly, I get 24.

Wait, just to make sure I didn't make a mistake in my calculations. Let me double-check.

Total seats: 12 rows * 8 seats per row. 12*8=96. Correct.

Occupied seats per row: 3/4 of 8. 3/4*8=6. Correct.

Total occupied seats: 12 rows * 6 seats per row=72. Correct.

Total unoccupied seats: 96 -72=24. Correct.

Alternatively, 1/4 of 96 is 24. Correct.

So yeah, I think that's solid. I don't see any errors in my reasoning or calculations.

So, in summary:

- Total seats: 12 rows * 8 seats/row = 96 seats.
- Occupied seats per row: 3/4 * 8 seats = 6 seats.
- Total occupied seats: 12 rows * 6 seats/
\end{teacherbox}

\begin{poebox}{PoE Teacher Trace}
\footnotesize
To solve the problem, let's break it down step by step.

1. **Calculate the total number of seats in the airplane:**
   - There are 12 rows.
   - Each row has 8 seats.
   - Total seats = 12 rows * 8 seats/row = 96 seats.

2. **Determine how many seats are allowed to be occupied:**
   - Only 3/4 of the seats in each row are allowed to be occupied.
   - So, the number of seats allowed to be occupied in each row = 8 seats * 3/4 = 6 seats.

3. **Calculate the total number of seats that will be occupied:**
   - Total seats occupied = 12 rows * 6 seats/row = 72 seats.

4. **Determine the number of seats that will not be occupied:**
   - Total seats = 96 seats.
   - Seats occupied = 72 seats.
   - Seats not occupied = 96 seats - 72 seats = 24 seats.

Therefore, the number of seats that will not be occupied on the plane is \boxed{24}.
\end{poebox}

\begin{proxybox}{ADS Teacher Trace}
\footnotesize
raid刺eight people fit in a row on an airplane, and there are 12 rows. Only 3/4 of the seats in each row are allowed to be seated. How many seats will not be occupied on that plane?

 raid raid raid raid raid raid raid raid raid raid raid raid raid raid raid raid raid raid raid raid raid raid raid raid raid raid raid raid raid raid raid raid raid raid raid raid raid raid raid raid raid raid raid raid raid raid raid raid raid raid raid raid raid raid raid raid raid raid raid raid raid raid raid raid raid raid raid raid raid raid raid raid raid raid raid raid raid raid raid raid raid raid raid raid raid raid raid raid raid raid raid raid raid raid raid raid raid raid raid raid raid raid raid raid raid raid raid raid raid raid raid raid raid raid raid raid raid raid raid raid raid raid raid raid raid raid raid raid raid raid raid raid raid raid raid raid raid raid raid raid raid raid raid raid raid raid raid raid raid raid raid raid raid raid raid raid raid raid raid raid raid raid raid raid raid raid raid raid raid raid raid raid raid raid raid raid raid raid raid raid raid raid raid raid raid raid raid raid raid raid raid raid raid raid raid raid raid raid raid raid raid raid raid raid raid raid raid raid raid raid raid raid raid raid raid raid raid raid raid raid raid raid raid raid raid raid raid raid raid raid raid raid raid raid raid raid raid raid raid raid raid raid raid raid raid raid raid raid raid raid raid raid raid raid raid raid raid raid raid raid raid raid raid raid raid raid raid raid raid raid raid raid raid raid raid raid raid raid raid raid raid raid raid raid raid raid raid raid raid raid raid raid raid raid raid raid raid raid raid raid raid raid raid raid raid raid raid raid raid raid raid raid raid raid raid raid raid raid raid raid raid raid raid raid raid raid raid raid raid raid raid raid raid raid raid raid raid raid raid raid raid raid raid raid raid raid raid raid raid raid raid raid raid raid raid raid raid raid raid raid raid raid raid raid raid raid raid raid raid raid raid raid raid raid raid raid raid raid raid raid raid raid raid raid raid raid raid raid raid raid raid raid raid raid raid raid raid raid raid raid raid raid raid raid raid raid raid raid raid raid raid raid raid raid raid raid raid raid raid raid raid raid raid raid raid raid raid raid raid raid raid raid raid raid raid raid raid raid raid raid raid raid raid raid raid raid raid raid raid raid raid raid raid raid raid raid raid raid raid raid raid raid raid raid raid raid raid raid raid raid raid raid raid raid raid raid raid raid raid raid raid raid raid raid raid raid raid raid raid raid raid raid raid raid raid raid raid raid raid raid raid raid raid raid raid raid raid raid raid raid raid raid raid raid raid raid raid raid raid raid raid raid raid raid raid raid raid raid raid raid raid raid raid raid raid raid raid raid raid raid raid raid raid raid raid raid raid raid raid raid raid raid raid raid raid raid raid raid raid raid raid raid raid raid raid raid raid raid raid raid raid raid raid raid raid raid raid raid raid raid raid raid raid raid raid raid raid raid raid raid raid raid raid raid raid raid raid raid raid raid raid raid raid raid raid raid raid raid raid raid raid raid raid raid raid raid raid raid raid raid raid raid raid raid raid raid raid raid raid raid raid raid raid raid raid raid raid raid raid raid raid raid raid raid raid raid raid raid raid raid raid raid raid raid raid raid raid raid raid raid raid raid raid raid raid raid raid raid raid raid raid raid raid raid raid raid raid raid raid raid raid raid raid raid raid raid raid raid raid raid raid raid raid raid raid raid raid raid raid raid raid raid raid raid raid raid raid raid raid raid raid raid raid raid raid raid raid raid raid raid raid raid raid raid raid raid raid raid raid raid raid raid raid raid raid raid raid raid raid raid raid raid raid raid raid raid raid raid raid raid raid raid raid raid raid raid raid raid raid raid raid raid raid raid raid raid raid Raiders are getting more and more people are fit in a row on an airplane, and there are twelve rows. Only three-fourths of the seats in each row are allowed to be seated. How many seats will not be occupied on that plane?

 raid raid raid raid raid raid raid raid raid raid raid raid rai\end{proxybox}

 \begin{questionbox}
 \footnotesize
     Zack has traveled to twice the number of countries Patrick traveled to. Patrick traveled to three times the number of countries Joseph traveled to. Joseph traveled to half the number of countries George traveled to. If George traveled to 6 countries, how many countries did Zack travel to?

 \end{questionbox}

\begin{teacherbox}{Unperturbed Teacher Trace}
\footnotesize
Okay, so I need to figure out how many countries Zack traveled to. Let me start by breaking down the information given step by step.

First, George traveled to 6 countries. That's straightforward. Now, the next part says Joseph traveled to half the number of countries George traveled to. So, if George went to 6, Joseph went to half of that. Half of 6 is 3. So Joseph traveled to 3 countries.

Moving up, Patrick traveled to three times the number of countries Joseph traveled to. Joseph went to 3, so Patrick went to 3 times 3. Let me calculate that: 3 times 3 is 9. So Patrick traveled to 9 countries.

Now, Zack traveled to twice the number of countries Patrick traveled to. Patrick went to 9, so twice that number is 2 times 9. Calculating that: 2 times 9 is 18. So Zack traveled to 18 countries.

Wait, let me double-check to make sure I didn't miss anything. George: 6. Joseph: half of George, which is 3. Patrick: three times Joseph, which is 9. Zack: twice Patrick, which is 18. Yep, that seems right.

So, Zack traveled to 18 countries.
</think>

Zack traveled to 18 countries.

**Step-by-Step Explanation:**

1. **George's Countries:** George traveled to 6 countries.

2. **Joseph's Countries:** Joseph traveled to half the number of countries George went to.
   \[
   \text{Joseph} = \frac{1}{2} \times \text{George} = \frac{1}{2} \times 6 = 3
   \]

3. **Patrick's Countries:** Patrick traveled to three times the number of countries Joseph went to.
   \[
   \text{Patrick} = 3 \times \text{Joseph} = 3 \times 3 = 9
   \]

4. **Zack's Countries:** Zack traveled to twice the number of countries Patrick went to.
   \[
   \text{Zack} = 2 \times \text{Patrick} = 2 \times 9 = 18
   \]
\end{teacherbox}

\begin{poebox}{PoE Teacher Trace}
\footnotesize
Zack traveled to twice the number of countries Patrick traveled to. Patrick traveled to three times the number of countries Joseph traveled to. Joseph traveled to half the number of countries George traveled to. If George traveled to 6 countries, how many countries did Zack travel to?

Zack traveled to twice the number of countries Patrick traveled to, so Patrick traveled to half the number of countries Zack traveled to.

Patrick traveled to three times the number of countries Joseph traveled to, so Joseph traveled to a third of the number of countries Patrick traveled to.

Joseph traveled to half the number of countries George traveled to, so George traveled to twice the number of countries Joseph traveled to.

If George traveled to 6 countries, then Joseph traveled to 3 countries, Patrick traveled to 9 countries, and Zack traveled to 18 countries.

So, Zack traveled to \boxed{18} countries.
\end{poebox}

\begin{proxybox}{ADS Teacher Trace}
\footnotesize
Alright, let's tackle this problem step by step. It seems like we're dealing with a series of relationships between the number of countries different people have traveled to. The goal is to find out how many countries Zack traveled to, given some information about the others and a starting point with George.

First, let's break down the information given:

1. George traveled to 6 countries.
2. Joseph traveled to half the number of countries George traveled to.
3. Patrick traveled to three times the number of countries Joseph traveled to.
4. Zack traveled to twice the number of countries Patrick traveled to.

Our task is to find the number of countries Zack traveled to, starting from George's number of countries and moving through each of these relationships step by step.

Let's assign variables to each of these individuals to make it easier:

- Let G represent the number of countries George traveled to.
- Let J represent the number of countries Joseph traveled to.
- Let P represent the number of countries Patrick traveled to.
- Let Z represent the number of countries Zack traveled to.

Now, we can translate the given information into mathematical equations using these variables:

1. G = 6 (given)
2. J = (1/2) * G (Joseph traveled to half the number of countries George traveled to)
3. P = 3 * J (Patrick traveled to three times the number of countries Joseph traveled to)
4. Z = 2 * P (Zack traveled to twice the number of countries Patrick traveled to)

Now, we can substitute the values step by step to find Z.

Starting with G:

G = 6

Now, calculate J:

J = (1/2) * G
J = (1/2) * 6
J = 3

Next, calculate P:

P = 3 * J
P = 3 * 3
P = 9

Finally, calculate Z:

Z = 2 * P
Z = 2 * 9
Z = 18

So, Zack traveled to 18 countries.

Wait, let me just double-check my calculations to make sure I didn't make any mistakes:

Starting with George:

G = 6

Joseph:

J = (1/2) * G = (1/2)*6 = 3

Patrick:

P = 3*J = 3*3 = 9

Zack:

Z = 2*P = 2*9 = 18

Yes, that seems correct. Each step logically follows the previous one, and the substitutions are done correctly. I don't see any errors in the calculations or the logic applied.

Therefore, the final answer is 18 countries.
</think>

We are asked to find the number of countries Zack traveled to. Let's break down the information step by step:

1. George traveled to 6 countries.
2. Joseph traveled to half the number of countries George traveled to:
   - Joseph = (1/2) * George = (1/2) * 6 = 3 countries.
3. Patrick traveled to three times the number of countries Joseph traveled to:
   - Patrick = 3 * Joseph = 3 * 3 = 9 countries.
4. Zack traveled to twice the number of countries Patrick traveled to:
   - Zack = 2 * Patrick = 2 * 9 = 18 countries.

Therefore, the number of countries Zack traveled to is \boxed{18}.
\end{proxybox}

\begin{questionbox}
\footnotesize
    In the final game of the basketball season, four players scored points.  Chandra scored twice as many points as did Akiko.  Akiko scored 4 more points than did Michiko, and Michiko scored half as many points as did Bailey.  If Bailey scored 14 points, how many points in total did the team score in the final game of the season?
\end{questionbox}

\begin{teacherbox}{Unperturbed Teacher Trace}
\footnotesize
Okay, let's tackle this basketball scoring problem step by step. I need to find out the total points scored by the team in the final game. The problem mentions four players: Chandra, Akiko, Michiko, and Bailey. Each of them scored points based on certain relationships. 

First, I'll list out the information given:

1. Chandra scored twice as many points as Akiko.
2. Akiko scored 4 more points than Michiko.
3. Michiko scored half as many points as Bailey.
4. Bailey scored 14 points.

I need to find the total points scored by the team, which means I'll have to calculate each player's points and then add them all together.

Let me start by figuring out how many points each player scored, beginning with the information I have about Bailey. 

I know that Bailey scored 14 points. The problem says that Michiko scored half as many points as Bailey. So, to find Michiko's points, I'll take half of 14.

Let me write that down:

Michiko = (1/2) * Bailey
Michiko = (1/2) * 14
Michiko = 7 points.

Alright, so Michiko scored 7 points. Now, the next piece of information is that Akiko scored 4 more points than Michiko. So, Akiko's points will be Michiko's points plus 4.

Let me calculate that:

Akiko = Michiko + 4
Akiko = 7 + 4
Akiko = 11 points.

Got it. So Akiko scored 11 points. Now, moving on to Chandra. The problem states that Chandra scored twice as many points as Akiko. Since Akiko scored 11 points, Chandra's points will be double that.

Let me compute Chandra's points:

Chandra = 2 * Akiko
Chandra = 2 * 11
Chandra = 22 points.

Okay, so Chandra scored 22 points. Now, I have all the players' scores except for one more. Wait, hold on. The players mentioned are Chandra, Akiko, Michiko, and Bailey. So, I have all four players: Chandra (22), Akiko (11), Michiko (7), and Bailey (14).

Wait a second, I just realized that in the problem, it says Chandra scored twice as many points as Akiko, which we already accounted for. So, now, I have all four players' scores.

Let me just recap to make sure I didn't miss anyone:

- Chandra: 22 points
- Akiko: 11 points
- Michiko: 7 points
- Bailey: 14 points

Now, to find the total points scored by the team, I need to add up all these individual points.

So, let's do that:

Total = Chandra + Akiko + Michiko + Bailey
Total = 22 + 11 + 7 + 14

Let me compute that step by step:

First, 22 + 11 is 33.

Then, 33 + 7 is 40.

Finally, 40 + 14 is 54.

So, the total points scored by the team in the final game is 54.

Wait, just to make sure I didn't make a mistake in my calculations, let me add them again:

22 (Chandra) + 11 (Akiko) = 33

33 + 7 (Michiko) = 40

40 + 14 (Bailey) = 54

Yes, that's correct.

Let me also verify each player's score to ensure there were no errors in computation:

- Michiko: Half of Bailey's 14 is 7. Correct.
- Akiko: Michiko's 7 + 4 is 11. Correct.
- Chandra: Akiko's 11 * 2 is 22. Correct.
- Bailey: Given as 14. Correct.

All individual scores check out, so adding them up gives the total.

Therefore, the team scored a total of 54 points in the final game.

I think that's it. I don't see any mistakes in my reasoning or calculations.
\end{teacherbox}

\begin{poebox}{PoE Teacher Trace}
\footnotesize
To solve this problem, I'll break it down step by step and use the given information to find the points scored by each player. 

1. Bailey scored 14 points.
2. Michiko scored half as many points as Bailey, so Michiko scored 14 / 2 = 7 points.
3. Akiko scored 4 more points than Michiko, so Akiko scored 7 + 4 = 11 points.
4. Chandra scored twice as many points as Akiko, so Chandra scored 11 * 2 = 22 points.

Now, I'll sum up the points scored by all four players to find the total points scored by the team in the final game of the season.

Total points = Chandra's points + Akiko's points + Michiko's points + Bailey's points
Total points = 22 + 11 + 7 + 14
Total points = 54

Therefore, the team scored a total of 54 points in the final game of the season.

\boxed{54}
\end{poebox}

\begin{proxybox}{ADS Teacher Trace}
\footnotesize
Alright, let's tackle this basketball scoring problem step by step. I'm going to break it down and make sure I understand each part before moving on. Hopefully, I won't miss anything!

First, let's read the problem again and identify the key information:

- There are four players: Chandra, Akiko, Michiko, and Bailey.
- Each player scored points in the final game.
- The relationships between their scores are given in terms of multiples and additions/subtractions.
- We're given specific values for some players and need to find the total points scored by the team.

The goal is to find the total points scored by the team in the final game. To do this, we'll need to find each player's score and then add them all together.

Let's list out the information we have:

1. Chandra scored twice as many points as Akiko.
2. Akiko scored 4 more points than Michiko.
3. Michiko scored half as many points as Bailey.
4. Bailey scored 14 points.

Now, let's assign variables to each player's score to make it easier to work with:

- Let C = Chandra's points
- Let A = Akiko's points
- Let M = Michiko's points
- Let B = Bailey's points

We're given that B = 14 points. So we can plug that into the equations we have.

Now, let's write down the relationships as equations:

1. C = 2 * A (Chandra scored twice as many as Akiko)
2. A = M + 4 (Akiko scored 4 more than Michiko)
3. M = (1/2) * B (Michiko scored half as many as Bailey)
4. B = 14 (Given)

So, we can start solving these equations step by step, beginning with the one we know the value for (B) and working our way up to find each player's score.

Let's start with equation 3:

3. M = (1/2) * B
We know B = 14, so let's plug that in:
M = (1/2) * 14
M = 7

So Michiko scored 7 points.

Now, let's move to equation 2:

2. A = M + 4
We just found that M = 7, so:
A = 7 + 4
A = 11

So Akiko scored 11 points.

Next, we'll use equation 1:

1. C = 2 * A
We know A = 11, so:
C = 2 * 11
C = 22

Therefore, Chandra scored 22 points.

Now we have all the individual scores:

- Chandra (C) = 22 points
- Akiko (A) = 11 points
- Michiko (M) = 7 points
- Bailey (B) = 14 points

To find the total team score, we just need to add up all these individual scores:

Total = C + A + M + B
Total = 22 + 11 + 7 + 14

Let's add them step by step:

22 + 11 = 33
33 + 7 = 40
40 + 14 = 54

So the total points scored by the team in the final game is 54 points.
\end{proxybox}

\begin{questionbox}
\footnotesize
    The function $f(x)$ satisfies
\[f(xy) = \frac{f(x)}{y}\]for all positive real numbers $x$ and $y.$  If $f(30) = 20,$ find $f(40).$
\end{questionbox}

\begin{teacherbox}{Unperturbed Teacher Trace}
    \footnotesize
    Okay, so I have this problem here about a function f(x) that satisfies a certain equation for all positive real numbers x and y. The equation is f(xy) = f(x)/y. And I'm given that f(30) = 20, and I need to find f(40). Hmm, interesting.

Let me try to break this down. First, the function f takes a positive real number and returns another positive real number, right? So, for any x and y, if I multiply them and apply f, it's the same as taking f(x) and dividing it by y.

Wait, so f(xy) = f(x)/y. That seems like a functional equation. I remember that functional equations often have solutions that are power functions, like f(x) = kx\textsuperscript{n} or something like that. Maybe I can assume a form for f(x) and see if it fits the equation.

Let me suppose that f(x) is of the form $f(x) = kx^n$, where k and n are constants. Let's see if that works.

Substituting into the equation: $f(xy) = k(xy)^n = kx^n y^n$. On the other hand, $f(x)/y = (kx^n)/y = kx^n / y$. So, equating both sides: $kx^n y^n = kx^n / y$.

Hmm, that would mean $y^n = 1/y$, which implies $y^{(n + 1)} = 1$. But this has to hold for all positive real numbers y, which would only be possible if $n + 1 = 0$, so $n = -1$. So, n would have to be $-1$.

So, if $n = -1$, then $f(x) = kx^{-1} = k/x$. Let's check if that works.

So, $f(xy) = k/(xy)$. On the other hand, $f(x)/y = (k/x)/y = k/(xy)$. So, yes, they are equal. So, $f(x) = k/x$ is a solution. So, f(x) is a constant divided by x.

Therefore, $f(x) = k/x$. So, that would mean that f(x) is inversely proportional to x.

So, given that $f(30) = 20$, I can find k, and then use that to find $f(40)$.

So, $f(30) = k/30 = 20$. Therefore, $k = 20 \times 30 = 600$. So, $k = 600$.

Therefore, $f(x) = 600/x$.

Therefore, $f(40) = 600/40$. Let's compute that: 600 divided by 40. Well, $40 \times 15$ is 600, so $600/40$ is 15. Therefore, $f(40) = 15$.

Wait, that seems straightforward, but let me double-check to make sure I didn't make any mistakes.

So, I assumed that f(x) is of the form k/x, which seems to satisfy the functional equation. Plugging in $f(xy) = k/(xy)$, and $f(x)/y = (k/x)/y = k/(xy)$, which is equal. So, that works.

Then, using $f(30) = 20$, I found $k = 600$. So, $f(x) = 600/x$. Then, $f(40) = 600/40 = 15$. Yep, that seems correct.

Is there another way to approach this problem without assuming the form of f(x)? Maybe by manipulating the functional equation directly?

Let's see. The functional equation is $f(xy) = f(x)/y$ for all positive x and y.

I can try to express f in terms of f at another point. Maybe, set $y = 1$. Then, $f(x \cdot 1) = f(x)/1$, so $f(x) = f(x)$. Hmm, that doesn't help.

Alternatively, set $x = 1$. Then, $f(y) = f(1)/y$. So, $f(y) = f(1)/y$. So, that gives me $f(y) = C/y$, where C is a constant, $f(1)$.

So, that shows that f(x) must be of the form $C/x$. So, that gives me the same result as before.

Therefore, f(x) is $C/x$, so using $f(30) = 20$, find $C = 600$, so $f(40) = 15$. So, that's consistent.
\end{teacherbox}

\begin{poebox}{PoE Teacher Trace}
    \footnotesize
    Let's start by analyzing the given functional equation $f(xy) = \frac{f(x)}{y}$ for all positive real numbers $x$ and $y$.

We are given that $f(30) = 20$ and need to find $f(40)$.

First, let's substitute $x = 30$ and $y = \frac{40}{30} = \frac{4}{3}$ into the functional equation:

\[
f\left(30 \cdot \frac{4}{3}\right) = \frac{f(30)}{\frac{4}{3}}
\]

Simplifying the left side:

\[
f\left(40\right) = \frac{f(30)}{\frac{4}{3}}
\]

We know that $f(30) = 20$, so we substitute this value in:

\[
f(40) = \frac{20}{\frac{4}{3}} = 20 \cdot \frac{3}{4} = 15
\]

Thus, the value of $f(40)$ is $\boxed{15}$.
\end{poebox}

\begin{proxybox}{ADS Teacher Trace}
    \footnotesize
    Okay, I've got this problem here about a function f(x) that satisfies the equation $f(xy) = f(x)/y$ for all positive real numbers x and y. I know that $f(30) = 20$ and I need to find $f(40)$. Hmm, I need to figure out how this function behaves based on the given condition and the functional equation.

Let's start by trying to understand the functional equation $f(xy) = f(x)/y$. I notice that the function takes the product of x and y and divides the function value by y. This suggests that the function might have something to do with the reciprocal or the inverse of x or y. I wonder if I can find a general form for f(x) based on this equation.

Maybe I can assume a form for f(x) and see if it fits the functional equation. I'll try a simple form first and see if I can adjust it later. Let's suppose $f(x) = k/x$, where k is a constant. I'll plug this into the functional equation and see if it works.

If I plug $f(x) = k/x$ into the equation $f(xy) = f(x)/y$, I get:

$f(xy) = k/(xy)$\\
$f(x)/y = (k/x)/y = k/(xy)$

So both sides are equal, which means my assumption $f(x) = k/x$ satisfies the functional equation. That's great! I now know the general form of the function $f(x) = k/x$. I just need to find the constant k using the given condition $f(30) = 20$.

Let's plug in $x = 30$ and $f(30) = 20$ into the equation $f(x) = k/x$:

$20 = k/30$

To find k, I can multiply both sides by 30:

$k = 20 \times 30 = 600$

So the function f(x) is $f(x) = 600/x$. Now I can use this to find $f(40)$:

$f(40) = 600/40 = 15$

Wait a minute, I got $f(40) = 15$. I should double-check my work to make sure I didn't make any mistakes. I'll go through the steps again:

1. I assumed $f(x) = k/x$ and checked if it satisfies the functional equation.
2. I found that it does satisfy the equation.
3. I used the given condition $f(30) = 20$ to find the constant $k = 600$.
4. I then calculated $f(40) = 600/40 = 15$.

Everything seems to check out. I don't see any errors in my calculations or reasoning. I think I've arrived at the correct answer: $f(40) = 15$.

I wonder if I can approach the problem differently to confirm my answer. Maybe I can use the functional equation directly without assuming a form for f(x). I'll try that approach now.

Let's write the functional equation again: $f(xy) = f(x)/y$. I need to find $f(40)$ given $f(30) = 20$. I can try to express $f(40)$ in terms of $f(30)$ by choosing appropriate values of x and y such that $xy = 40$ and $x = 30$.

If I let $x = 30$ and $y = 40/30 = 4/3$, then I have:

$f(30 \cdot (4/3)) = f(30) / (4/3)$

Simplifying the left side:

$f(40) = f(30) / (4/3) = 20 / (4/3) = 20 \times (3/4) = 15$

Yes! I get the same result using this method. I can confidently say that $f(40) = 15$. I think I've solved the problem correctly using two different approaches and I'm satisfied with the answer I got.
\end{proxybox}

\begin{questionbox}
\footnotesize
    Solve the following system of equations: \begin{align*}
3x-5y&=-11,\\
7x+2y&=-12.
\end{align*}Express your answer as an ordered pair $(x,y).$
\end{questionbox}

\begin{teacherbox}{Unperturbed Teacher Trace}
    \footnotesize
    Alright, so I have this system of equations to solve. Let me write them down again so I don't get confused.

The first equation is: 3x minus 5y equals -11. So, 3x - 5y = -11.

The second equation is: 7x plus 2y equals -12. So, 7x + 2y = -12.

Okay, so I need to solve for x and y. I remember there are a couple of methods to do this: substitution or elimination. Substitution is where you solve one equation for one variable and then substitute that into the other equation. Elimination is where you manipulate the equations to get rid of one variable by adding or subtracting them.

Looking at these equations, elimination might be a good approach because the coefficients are manageable, and it might be easy to eliminate one of the variables.

Let me see. If I try to eliminate y, I need the coefficients of y in both equations to be the same (or negatives). The first equation has -5y, and the second has +2y. So, if I multiply the first equation by 2 and the second equation by 5, then the coefficients of y would be -10 and +10. That way, when I add the equations, the y terms would cancel out.

Wait, let me check: 2*(3x -5y) = 2*(-11) would give me 6x -10y = -22.

And 5*(7x +2y) = 5*(-12) would give me 35x +10y = -60.

So, now I have:

6x -10y = -22

35x +10y = -60

Now, if I add these two equations together, the y terms will eliminate each other.

Adding them:

6x +35x = 41x

-10y +10y = 0

-22 + (-60) = -82

So, 41x = -82

Therefore, x = -82 divided by 41. Let me compute that.

-82 divided by 41 is -2, because 41*2 is 82, so with the negative sign, it's -2.

So, x is -2.

Now that I have x, I can plug this value back into one of the original equations to find y. Let me choose the second equation because it looks simpler.

The second equation is: 7x + 2y = -12.

Plugging x = -2 into this:

7*(-2) + 2y = -12

Compute 7*(-2): that's -14.

So, -14 + 2y = -12

Now, let's solve for y.

First, I can add 14 to both sides to get 2y by itself.

-14 + 14 + 2y = -12 + 14

Simplify: 0 + 2y = 2

So, 2y = 2

Divide both sides by 2:

y = 1

So, y is 1.

Let me just check these values in both equations to make sure I didn't make any mistakes.

First equation: 3x -5y = -11

Plugging x = -2 and y =1:

3*(-2) -5*(1) = -6 -5 = -11. That's correct.

Second equation: 7x +2y = -12

Plugging x = -2 and y =1:

7*(-2) +2*(1) = -14 +2 = -12. That's correct too.

So, both equations are satisfied with x = -2 and y =1. Therefore, the solution is (-2, 1).

I think that's it. I didn't make any calculation errors, and I checked both solutions, so I'm confident this is correct.

**Final Answer**
The solution to the system of equations is \boxed{(-2, 1)}.
\end{teacherbox}

\begin{poebox}{PoE Teacher Trace}
    \footnotesize
    To solve the system of equations, I will use the method of elimination. First, I will multiply the first equation by 2 and the second equation by 5 to make the coefficients of y in both equations opposites. This gives me: \begin{align*}
6x-10y&=-22,\\
35x+10y&=-60.
\end{align*}Next, I will add the two equations together to eliminate y. This gives me: \begin{align*}
41x&=-82.
\end{align*}Solving for x, I find that x = -2. Now, I will substitute this value of x back into one of the original equations to solve for y. I will use the first equation: \begin{align*}
3(-2)-5y&=-11.
\end{align*}Simplifying, I get: \begin{align*}
-6-5y&=-11.
\end{align*}Adding 6 to both sides, I find that -5y = -5. Dividing both sides by -5, I find that y = 1. Therefore, the solution to the system of equations is the ordered pair (-2, 1).
\boxed{(-2, 1)}
\end{poebox}

\begin{proxybox}{ADS Teacher Trace}
    \footnotesize
    Okay, I have to solve this system of equations. Let me write them down again:

3x - 5y = -11

7x + 2y = -12

Hmm, I can use either substitution or elimination. I think elimination might be easier here because I can try to eliminate one of the variables by making the coefficients equal. Let me see... I have two equations with x and y. I need to find a way to eliminate one of them. I'll try to eliminate y first because I have negative coefficients there. 

In the first equation, I have -5y and I need to make the coefficient of y the same as the second equation. I can multiply the entire second equation by something to make the coefficients of y the same. I'll see... I have -5y and I want to get the same coefficient as I have in the second equation which I have +2y. I can find the least common multiple of 5 and 2 which is 10. I can make the coefficients of y equal by multiplying the first equation by 2 and the second equation by 5. That way I will have:

First equation multiplied by 2:

2*(3x -5y) = 2*(-11) => 6x -10y = -22

Second equation multiplied by 5:

5*(7x +2y) =5*(-12) =>35x +10y = -60

Now I can add these two equations together to eliminate y:

6x -10y +35x +10y = -22 + (-60)

Simplify:

41x = -82

Oh, I can solve for x now. I just divide both sides by 41:

x = -82 /41

Simplify the fraction:

x = -2

Wait a second, I thought I was going to solve for x and I got x = -2. I should check if I made a mistake somewhere because I have negative coefficients and I ended up with x being a whole number. I'll go back and check my steps. I multiplied the first equation by 2 and the second equation by 5 correctly:

First equation: 6x -10y = -22

Second equation:35x +10y = -60

Adding them together:

6x +35x -10y +10y = -22 -60

41x = -82

x = -2

Okay, I didn't make a mistake there. I'll plug x = -2 back into one of the original equations to find y. I'll choose the second equation because I think the numbers will be smaller:

7x +2y = -12

Substitute x = -2:

7*(-2) +2y = -12

-14 +2y = -12

Now I can solve for y:

2y = -12 +14

2y = 2

y =1

Wait, I got y =1? I need to check if I made a mistake here as I thought I should have a negative coefficient. I'll plug x = -2 and y =1 back into the first equation to verify:

3x -5y = -11

3*(-2) -5*(1) = -11

-6 -5 = -11

-11 = -11

Okay, it checks out. I think I did it right. I just want to make sure I didn't make any calculation errors. I'll go through the steps again:

First equation multiplied by 2:

6x -10y = -22

Second equation multiplied by5:

35x +10y = -60

Adding them:

41x = -82

x = -2

Plugging into the second equation:

7*(-2) +2y = -12

-14 +2y = -12

2y = 2

y=1

Yes, I think I did it correctly. I was just confused because I thought I might have made a mistake with the signs, but I checked it and it's correct. I'll write the final answer as the ordered pair (-2,1). I think I should present it in the box as \boxed{(-2,1)}.
\end{proxybox}

\begin{questionbox}
\footnotesize
    In order for Mateen to walk a kilometer(1000m) in his rectangular backyard, he must walk the length 25 times or walk its perimeter 10 times. What is the area of Mateen's backyard in square meters?
\end{questionbox}

\begin{teacherbox}{Unperturbed Teacher Trace}
    \footnotesize
    Okay, let me try to figure out this problem step by step. So, we have Mateen who wants to walk a kilometer in his rectangular backyard. There are two ways he can do this: either by walking the length 25 times or by walking the perimeter 10 times. We need to find the area of his backyard in square meters.

First, let me note down the important information:

- 1 kilometer = 1000 meters.
- He can walk 1000 meters by either:
    - Walking the length 25 times, or
    - Walking the perimeter 10 times.

So, let me denote:

- Let the length of the backyard be L meters.
- Let the width be W meters.

Therefore, the area we need to find is L * W.

Now, let's analyze the two scenarios given.

First scenario: Walking the length 25 times equals 1000 meters.

So, 25 * L = 1000.

From this, we can solve for L.

L = 1000 / 25.

Let me compute that.

1000 divided by 25 is 40. So, L = 40 meters.

Okay, so the length is 40 meters.

Second scenario: Walking the perimeter 10 times equals 1000 meters.

So, 10 * (Perimeter) = 1000.

First, let's compute the perimeter.

Perimeter of a rectangle is 2*(L + W).

So, Perimeter = 2*(40 + W).

But according to the second scenario:

10 * Perimeter = 1000.

So, 10 * (2*(40 + W)) = 1000.

Let me write that equation.

10 * 2 * (40 + W) = 1000.

Simplify:

20 * (40 + W) = 1000.

Now, let me solve for (40 + W):

40 + W = 1000 / 20.

1000 divided by 20 is 50. So,

40 + W = 50.

Therefore, W = 50 - 40.

Which is W = 10 meters.

So, the width is 10 meters.

Now, we have both length and width.

Length L = 40 meters.

Width W = 10 meters.

Therefore, the area is L * W = 40 * 10.

Compute that:

40 * 10 = 400.

So, the area is 400 square meters.

Let me just double-check my calculations to make sure I didn't make any mistakes.

First scenario:

25 * L = 1000.

L = 1000 / 25 = 40. That seems correct.

Second scenario:

Perimeter = 2*(L + W) = 2*(40 + W).

10 * Perimeter = 1000.

So, 10 * 2*(40 + W) = 1000.

20*(40 + W) = 1000.

40 + W = 50.

W = 10. That also seems correct.

Therefore, area is 40 * 10 = 400. Yep, that's right.

I think that's all. I don't see any mistakes here.
\end{teacherbox}

\begin{poebox}{PoE Reasoning Trace}
    \footnotesize
    To determine the area of Mateen's backyard, we need to find the dimensions of the rectangular backyard. We know that walking the length 25 times equals 1000 meters, so the length of the backyard is 1000 meters divided by 25, which is 40 meters. We also know that walking the perimeter 10 times equals 1000 meters, so the perimeter of the backyard is 1000 meters divided by 10, which is 100 meters. The perimeter of a rectangle is given by the formula P = 2(L + W), where L is the length and W is the width. We can plug in the values we know to find the width: 100 = 2(40 + W). Simplifying, we get 50 = 40 + W, so W = 10 meters. The area of a rectangle is given by the formula A = L x W, so the area of Mateen's backyard is 40 meters x 10 meters, which is 400 square meters. Therefore, the area of Mateen's backyard is \boxed{400} square meters.
\end{poebox}

\begin{proxybox}{ADS Reasoning Trace}
    \footnotesize

    Okay, I'm going to try to solve this problem step by step. Let's read it again carefully and make sure I understand what it's asking for.

 Mateen has a rectangular backyard. To walk a kilometer (1000 meters), he has two options: either walk the length of the backyard 25 times or walk the perimeter of the backyard 10 times. I need to find the area of Mateen's backyard in square meters.

Alright, I know I need to find the area of a rectangle. I remember the formula for the area of a rectangle is length multiplied by width (A = l * w). I also know I need to find the length and the width of the backyard to calculate the area.

Now I need to figure out the length and the width of the backyard using the information provided. The problem gives me two different ways to walk a kilometer and I need to translate those into equations to find the length and the width.

First, let's consider the option where Mateen walks the length of the backyard 25 times to cover a kilometer. I can write this as an equation:

25 * length = 1000 meters

From this equation, I can solve for the length of the backyard:

length = 1000 meters / 25
length = 40 meters

Great, I now know the length of the backyard is 40 meters. I need to find the width now. I can use the second option given in the problem to find the width. The problem states that walking the perimeter of the backyard 10 times also covers a kilometer. I can write this as another equation:

10 * perimeter = 1000 meters

I know the perimeter of a rectangle is calculated as twice the length plus twice the width (P = 2l + 2w). I can substitute the length I found earlier into this equation and solve for the width.

First, I can rewrite the perimeter equation using the known length:

10 * (2 * length + 2 * width) = 1000 meters

Let's plug in the length I found earlier (40 meters):

10 * (2 * 40 meters + 2 * width) = 1000 meters

Now I can simplify the equation step by step:

10 * (80 meters + 2 * width) = 1000 meters

Divide both sides of the equation by 10 to make it simpler:

80 meters + 2 * width = 100 meters

Now I can solve for the width:

2 * width = 100 meters - 80 meters
2 * width = 20 meters
width = 20 meters / 2
width = 10 meters

Alright, I found the width of the backyard to be 10 meters. I can now use the area formula I mentioned earlier to find the area of the backyard:

Area = length * width
Area = 40 meters * 10 meters
Area = 400 square meters

Wait a minute, I just want to double-check my work to make sure I didn't make any mistakes. I'll go through the steps again quickly:

1. I converted the distance walked by the length into the equation and solved for the length:
   - 25 * length = 1000 meters
   - length = 40 meters

2. I used the perimeter equation and the fact that walking the perimeter 10 times equals a kilometer:
   - 10 * perimeter = 1000 meters
   - perimeter = 100 meters

3. I calculated the perimeter of the rectangle using the length I found:
   - Perimeter = 2 * length + 2 * width
   - 100 meters = 2 * 40 meters + 2 * width
   - 100 meters = 80 meters + 2 * width
   - 2 * width = 20 meters
   - width = 10 meters

4. I multiplied the length and width to find the area:
   - Area = 40 meters * 10 meters = 400 square meters

Everything seems to check out. I don't think I made any calculation errors. I can confidently say the area of Mateen's backyard is 400 square meters.
\end{proxybox}
\subsection{Adaptive student scores}

   \begin{figure}[htbp]
    \centering
    \tiny
    \begin{minipage}[t]{0.48\textwidth}
        \begin{proxybox}{weight=High ($v(z) > 0$)}
            DIEC I'm trying to figure out how many miles Jack has driven over the past 9 years. He drives 37,000 miles every four months. First, I need to find out how many sets of four months are in 9 years. 

I know there are 12 months in a year, so in 9 years, there are 9 times 12, which is 108 months. Now, I need to divide the total number of months by the number of months in each driving period, which is 4 months. 

So, I'll divide 108 months by 4 months per period, which gives me 27 periods. 

Now, I'll multiply the number of periods by the miles driven each period to find the total miles driven. 

That means 27 periods times 37,000 miles per period equals... let me do the multiplication step by step. 

27 times 30,000 is 810,000 miles. 
27 times 7,000 is 189,000 miles. 
Adding those together gives me 810,000 + 189,000 = 999,000 miles. 

Wait, I think I made a mistake in the multiplication earlier. Let me try it again more carefully. 

37,000 miles per four months times 27 periods. 

I can break it down as (30,000 + 7,000) times 27. 
30,000 times 27 is 810,000. 
7,000 times 27 is 189,000. 
Adding those together gives me 810,000 + 189,000 = 999,000 miles. 

Wait, I think I'm still getting the same result, but I'm not sure if it's correct. Maybe I should try another method to verify my answer. 

Alternatively, I can calculate how many miles Jack drives in a year and then multiply by 9 years. 

He drives 37,000 miles every four months. There are 12 months in a year, so I'll divide 12 by 4 to find out how many driving periods are in a year. 

12 divided by 4 is 3 periods. 

So, in one year, he drives 37,000 times 3, which is 111,000 miles per year. 

Now, multiplying the annual miles by 9 years gives me 111,000 times 9. 

111,000 times 9 is... let's see: 100,000 times 9 is 900,000, and 11,000 times 9 is 99,000. Adding those together gives me 999,000 miles. 

Okay, now I'm confident that the correct answer is 999,000 miles.
</think>

To determine how many miles Jack has driven over the past 9 years, we'll follow these steps:

1. **Calculate the number of four-month periods in 9 years:**
   - There are 12 months in a year.
   - In 9 years: \( 9 \times 12 = 108 \) months.
   - Number of four-month periods: \( \frac{108}{4} = 27 \) periods.

2. **Determine the total miles driven:**
   - Jack drives 37,000 miles every four months.
   - Total miles: \( 27 \times 37,000 = 999,000 \) miles.

**Final Answer:** \boxed{999000}
        \end{proxybox}
    \end{minipage}\hfill
    \begin{minipage}[t]{0.48\textwidth}
        \begin{proxybox}{weight=Low ($v(z) < 0$)}
            First, I recognize that the spaceship's speed is halved for every additional 100 people beyond the initial 200.

 raiding
 raiding raiding raiding raiding raiding raiding raiding raiding raiding raiding raiding raiding raiding raiding raiding raiding raiding raiding raiding raiding raiding raiding raiding raiding raiding raiding raiding raiding raiding raiding raiding raiding raiding raiding raiding raiding raiding raiding raiding raiding raiding raiding raiding raiding raiding raiding raiding raiding raiding raiding raiding raiding raiding raiding raiding raiding raiding raiding raiding raiding raiding raiding raiding raiding raiding raiding raiding raiding raiding raiding raiding raiding raiding raiding raiding raiding raiding raiding raiding raiding raiding raiding raiding raiding raiding raiding raiding raiding raiding raiding raiding raiding raiding raiding raiding raiding raiding raiding raiding raiding raiding raiding raiding raiding raiding raiding raiding raiding raiding raiding raiding raiding raiding raiding raiding raiding raiding raiding raiding raiding raiding raiding raiding raiding raiding raiding raiding raiding raiding raiding raiding raiding raiding raiding raiding raiding raiding raiding raiding raiding raiding raiding raiding raiding raiding raiding raiding raiding raiding raiding raiding raiding raiding raiding raiding raiding raiding raiding raiding raiding raiding raiding raiding raiding raiding raiding raiding raiding raiding raiding raiding raiding raiding raiding raiding raiding raiding raiding raiding raiding raiding raiding raiding raiding raiding raiding raiding raiding raiding raiding raiding raiding raiding raiding raiding raiding raiding raiding raiding raiding raiding raiding raiding raiding raiding raiding raiding raiding raiding raiding raiding raiding raiding raiding raiding raiding raiding raiding raiding raiding raiding raiding raiding raiding raiding raiding raiding raiding raiding raiding raiding raiding raiding raiding raiding raiding raiding raiding raiding raiding raiding raiding raiding raiding raiding raiding raiding raiding raiding raiding raiding raiding raiding raiding raiding raiding raiding raiding raiding raiding raiding raiding raiding raiding raiding raiding raiding raiding raiding raiding raiding raiding raiding raiding raiding raiding raiding raiding raiding raiding raiding raiding raiding raiding raiding raiding raiding raiding raiding raiding raiding raiding raiding raiding raiding raiding raiding raiding raiding raiding raiding raiding raiding raiding raiding raiding raiding raiding raiding raiding raiding raiding raiding raiding raiding raiding raiding raiding raiding raiding raiding raiding raiding raiding raiding raiding raiding raiding raiding raiding raiding raiding raiding raiding raiding raiding raiding raiding raiding raiding raiding raiding raiding raiding raiding raiding raiding raiding raiding raiding raiding raiding 
raiding raiding raiding raiding raiding raiding raiding raiding 
raiding raiding raiding raiding raiding raiding raiding raiding 
raiding raiding raiding raiding raiding raiding raiding raiding 
        \end{proxybox}
    \end{minipage}
    
    \caption{An adaptive student filters traces based on downstream gradient alignment. }
    \label{fig:strategic_student_filter}
\end{figure}

\stopcontents[appendix]

\end{document}